\documentclass{article}

\usepackage[english]{babel}
\usepackage{natbib}
\usepackage{amssymb}
\usepackage{amsfonts}
\usepackage[letterpaper,top=2cm,bottom=2cm,left=3cm,right=3cm,marginparwidth=1.75cm]{geometry}
\usepackage{amsmath, amsthm, amssymb}
\usepackage{graphicx}
\usepackage{xcolor}
\usepackage{hyperref}
\hypersetup{
    colorlinks=false,       
    citebordercolor=green,  
    linkbordercolor=black,  
    pdfborderstyle={/S/S/W 2} 
}

\newtheorem{assumption}{Assumption}
\usepackage{subcaption}
\usepackage{mathtools}
\usepackage{bbm}
\usepackage{enumitem}
\usepackage{booktabs}
\usepackage{tabularx}
\newcolumntype{Y}{>{\centering\arraybackslash}X}
\usepackage{wrapfig}
\graphicspath{{pictures/}}

\theoremstyle{plain}
\newtheorem{lemma}{Lemma}[section]

\usepackage{thmtools}
\usepackage{algorithm}
\usepackage{algpseudocode}

\DeclareMathOperator*{\argmin}{argmin}

\newtheorem{theorem}{Theorem}[section]
\newtheorem{proposition}[theorem]{Proposition}

\newtheorem{corollary}[theorem]{Corollary}

\newtheorem{remark}[theorem]{Remark}

\title{Spectrally Deconfounded Gradient Boosting}
\author{
  Andrea Nava\footnotemark[1] \footnotemark[2] \footnotemark[3]\\
    \and
 Peter Bühlmann\footnotemark[1] 
  \and
  Fabio Sigrist\footnotemark[1] 
}
\date{} 
\begin{document}

\maketitle

\begin{abstract}
Flexible machine-learning methods can be sensitive to hidden confounding: they may learn associations induced by unobserved confounders rather than stable signals. Spectral deconfounding mitigates this problem by shrinking high-variance directions of the covariate matrix that, under dense confounding, carry latent confounder information. Existing work has largely focused on linear models. We develop a nonlinear spectral deconfounding framework for gradient boosting. Our approach replaces the ordinary squared-error loss by a spectral loss, which alters the boosting dynamics by slowing down learning in confounding-aligned directions. We show that deconfounding is not achieved by the spectral loss alone, but by the interaction between spectral shrinkage and regularization, especially in terms of early stopping. Moreover, we provide a mixed-model interpretation that connects LAVA-type shrinkage to random-effects adjustment and yields an empirical-Bayes procedure for tuning the spectral loss. We also extend the method to general likelihoods and nonlinear confounding using Laplace approximations and kernel random effects. Across synthetic and real-world experiments, spectrally deconfounded boosting improves estimation of the target function under hidden confounding and is substantially more scalable than existing nonlinear spectral deconfounding baselines.

\vspace{0.5em}
\noindent \textbf{Keywords:} boosting, causal inference, confounding, random effects, mixed models

\end{abstract}

\footnotetext[1]{Seminar for Statistics, ETH Zürich, Zürich, Switzerland}
\footnotetext[2]{IFZ, Lucerne University of Applied Sciences and Arts, Rotkreuz, Switzerland}
\footnotetext[3]{Corresponding author: andrea.nava@stat.math.ethz.ch}

\section{Introduction}
Causal inference from observational data is fundamentally challenging in the presence of unobserved confounding. When latent variables influence both covariates and an outcome, naive estimation of the effect of covariates on the outcome is generally biased. Beyond causal estimation, hidden confounding can also harm predictive stability: flexible learners may exploit spurious associations that are predictive on the observed distribution but fail to generalize to new environments, thereby degrading robustness under distribution shift \citep{e63a2e52-1bf2-3782-baa4-27c7f1592402}. Gradient boosting \citep{friedman2000additive, friedman2001greedy} is one of the most accurate methods for prediction on tabular data \citep{grinsztajn2022tree}, but precisely because of its flexibility it is vulnerable to hidden confounding: when confounding-driven patterns are predictive in sample, boosting can fit them and thereby lose robustness under changes in the confounding structure. In this paper, we study how to improve the robustness of gradient boosting to hidden confounding. Since latent confounding cannot be resolved from observational data without additional structure, this requires explicit assumptions on how confounding enters the data-generating process.

We consider the dense-confounding regime. The key idea is that when latent confounders affect many observed covariates simultaneously, their influence leaves a detectable imprint in the covariance geometry of the design matrix, typically concentrating in the directions corresponding to the largest singular values of the design matrix. This intuition already appears in genome-wide association studies (GWAS) through principal component adjustment, where a small number of top principal components are included to control for population structure \citep{price2006principal, novembre2008genes}, and in linear mixed models, which introduce random effects with covariance matrices derived from the design matrix and thereby adjust for confounding without explicitly selecting the number of principal components \citep{zhang2010mixed, sul2018population}.

Spectral deconfounding \citep{cevid2020spectral} takes this route. Rather than estimating latent confounders, it applies a spectral transformation that shrinks directions aligned with high variance and, under strong and dense confounding, with the confounding structure. 
The underlying dense-confounding assumption is closely related to the pervasive-factor assumption studied in high-dimensional factor models \citep{bai2003inferential, fan2008high, lam2012factor}. However, while factor-model methods aim to estimate the latent factors and their loadings, spectral deconfounding exploits the resulting covariance structure to attenuate confounding effects. Combined with regularization, this yields recovery of the sparse signal without requiring explicit confounder identification \citep{cevid2020spectral,guo2022doubly}.
\citet{10.1145/3711116} provide rigorous mathematical analysis for nonlinear additive models while \citet{Ulmer03042026} propose a methodology for Random Forests. Our goal is to bring this logic to nonlinear function learning with boosting.
We enable deconfounding by specifying a spectral loss for the boosting objective function.
Moreover, we show that the spectral loss alone does not remove confounding bias: if the learner is allowed to fit indefinitely, it can still absorb confounding-driven structure. Robustness instead arises from the interaction between the transformed loss and regularization. 

Our central contribution is to show that spectral deconfounding can be understood as an optimization-path modification for boosting. Rather than removing hidden confounding by a preprocessing transformation alone, the spectral loss changes the gradient dynamics of boosting: directions associated with large singular values of the design, which are expected to contain dense confounding signal, are learned more slowly. Deconfounding therefore arises from the interaction between spectral shrinkage and regularization, in particular in terms of early stopping. This perspective allows us to extend spectral deconfounding from linear estimators to nonlinear gradient-boosted trees. 

More specifically, we make the following contributions. First, we introduce a spectral-loss formulation for gradient boosting under dense hidden confounding. This enables spectral deconfounding for nonlinear models when using nonlinear base learners such as trees. Second, we theoretically analyze boosting with linear base learners and show that spectral weighting creates an intermediate stopping regime in which signal directions are learned while confounding-aligned directions remain suppressed. Third, we derive the spectral loss from a mixed-model formulation, connecting LAVA-type spectral shrinkage \citep{chernozhukov2017lava} to random-effects adjustment and yielding an empirical-Bayes procedure for tuning the transformation. Fourth, we extend the framework to non-Gaussian likelihoods and nonlinear confounding, and provide a scalable software implementation. To the best of our knowledge, this is the first spectral deconfounding approach for generalized nonlinear models and nonlinear confounding.

\paragraph{Organization of the paper.} The rest of the paper is organized as follows.
Section~\ref{sec:related_work} reviews linear spectral deconfounding and establishes a connection to mixed models. Section~\ref{sec:opt_view} develops an optimization view that highlights the role of regularization. Section~\ref{sec:non-linear} introduces the nonlinear extension, derives the spectral loss from a random-effects formulation, and presents the resulting boosting algorithm. Section~\ref{sec:generalized_lava_classification} extends the method to non-Gaussian likelihoods and to nonlinear confounding. Section~\ref{sec:experiments} reports simulation studies illustrating the benefit of spectral deconfounding and the impact of early stopping and iteration control under confounding. Section~\ref{sec:real_data} presents a real-world data case study. Section~\ref{sec:conclusion} concludes with limitations and future directions.

\paragraph{Notation.}
For a matrix \(A \in \mathbb{R}^{m \times n}\), we write \(d_i(A)\) for its \(i\)-th largest (non-zero) singular value, and use \(d_{\max}(A)\) and \(d_{\min}(A)\) for its largest and smallest (non-zero) singular value, respectively. 
We write \(\operatorname{col}(A)\) for the column space of \(A\), and \(I_m\) for the \(m \times m\) identity matrix. For random vectors \(Z\) and \(W\), \(\operatorname{Cov}(Z,W)\) denotes their cross-covariance matrix, and \(\operatorname{Cov}(Z) := \operatorname{Cov}(Z,Z)\).
We write \(\|\cdot\|_2\) for the Euclidean norm of a vector, the corresponding operator norm of a matrix, and $\langle \cdot,\cdot \rangle$ for the Euclidean inner product. Finally, for two positive sequences \(a_p\) and \(b_p\), we write \(a_p \gtrsim b_p\) if there exists a constant \(c>0\), independent of \(p\), such that \(a_p \geq c\, b_p\) for all sufficiently large \(p\).

\section{Background on spectral deconfounding for linear models}
In this section, we review existing work on linear spectral deconfounding and highlight the connection between spectral methods and random effects adjustments. 
\label{sec:related_work}
\subsection{Confounding in linear models}
\label{sub:linear}

Consider the structural equation model for $n$ i.i.d. observations with unobserved confounding:
\begin{align}
    X &= H\Gamma + E, \label{eq:sem_x}\\
    Y &= X\beta_0 + H\delta + \varepsilon, \label{eq:sem_y}
\end{align}
where $H \in \mathbb{R}^{n \times q}$ are unobserved confounders, $\Gamma \in \mathbb{R}^{q \times p}$ captures their effects on observed variables $X \in \mathbb{R}^{n \times p}$, $\beta_0 \in \mathbb{R}^p$ is the direct causal effect of $X$ on $Y$, $\delta \in \mathbb{R}^q$ is the confounding effect on the outcome $Y \in \mathbb{R}^n$, $E$ is the unconfounded design independent of $H$, and $\varepsilon$ is outcome noise independent of $X$ and $H$. 

Bias arises because confounders induce correlation between the covariates $X$ and the unobserved component \(H\delta\) affecting \(Y\). To make this precise, write \(x_i \in \mathbb{R}^p\) and \(h_i \in \mathbb{R}^q\) for generic row-level random vectors corresponding to the rows of \(X\) and \(H\). We work with centered covariates, so that $\mathbb{E}[x_i]=0$. Define the population linear projection coefficient
\[
    b_0 \in \arg\min_{a \in \mathbb{R}^p}
    \mathbb{E}\left[\left(h_i^\top\delta - x_i^\top a\right)^2\right].
\]
The projection residual
$
    r_i := h_i^\top\delta - x_i^\top b_0
$
satisfies \(\operatorname{Cov}(x_i,r_i)=0\). Stacking these residuals over
the \(n\) observations gives
\[
    H\delta = Xb_0 + r,
\]
where \(r = (r_1,\ldots,r_n)^\top\). Substituting into the structural equation
yields
\[
    Y = X(\beta_0+b_0) + r + \varepsilon.
\]
The vector $b_0$ thus captures the bias introduced by confounding, as it depends on the correlation between $H\delta$ and $X$, while 
the error vector $(H\delta - Xb_0)$ is uncorrelated with the covariates in the population projection case.

\subsection{Spectral deconfounding}
\label{subsec:spec_deconfounding}
Spectral deconfounding \citep{cevid2020spectral} methods apply a linear transformation \(F \in \mathbb{R}^{n \times n}\) to both the covariates and the outcome,
\begin{equation}
\label{eq:pre_processing}
\tilde{X} := FX, \qquad \tilde{Y} := FY,
\end{equation}
in order to reduce or remove the alignment between the covariates and the bias vector \(b_0\). More precisely, let the singular value decomposition of \(X\) be \(X = UDV^\top\), where \(D \in \mathbb{R}^{k \times k}\) is diagonal with entries \(d_1,\dots,d_k\), \(U \in \mathbb{R}^{n\times k}\) and \(V \in \mathbb{R}^{p\times k}\) have orthonormal columns, and \(k = \mathrm{rank}(X) \leq \text{min}(n,p)\). A spectral transformation takes the form
\begin{align}
    F = U\tilde{D}U^\top
    = U\,\mathrm{diag}(\tilde d_1/d_1,\dots,\tilde d_k/d_k)\,U^\top,
    \label{eq:spectral_transform}
\end{align}
where the transformed singular values \(\tilde d_i \geq 0\) are chosen to attenuate directions associated with confounding while preserving the causal signal. In other words, the goal is that the confounding component \(Xb_0\), transformed into \(\tilde{X}b_0\), is substantially attenuated, while the causal component \(\tilde{X}\beta_0\) retains most of its variance. 
In the original linear framework of \citet{cevid2020spectral}, the spectral transformation is combined with a regularized estimator, typically the lasso, fitted to the transformed data. The underlying assumption is that \(\beta_0\) is sparse, while the residual confounding contribution $\|\tilde{X}b_0\|_2/\sqrt n$ is sufficiently attenuated so that an \(\ell_1\)-regularization will not select signal from the term $\tilde{X}b_0$.

For linear squared-error objectives, the preprocessing view in~\eqref{eq:pre_processing} is equivalent to replacing the ordinary residual norm by the transformed norm
\[
\|F(Y-X\beta)\|_2^2.
\]
We later refer to such transformed squared-error objectives as spectral losses. This loss-based view is made explicit in Section~\ref{sec:mm}, where it arises naturally from a mixed-model likelihood, and is later used to extend spectral deconfounding to nonlinear predictors \(f(X)\), for which a preprocessing interpretation of $X$ is no longer available.

\subsubsection{Some intuition}
To see the spectral implication of confounding, let $\Sigma_H := \operatorname{Cov}(h_i)$, $\Sigma_E := \operatorname{Cov}(e_i)$, and $\Sigma_X := \operatorname{Cov}(x_i)$ denote the population covariance matrices of one observation. Assume $e_i$ is independent of $h_i$ and without loss of generality that $\Sigma_{H} = I_q$, then the covariance of $x_i$ is $\Sigma_{X} = \Gamma^\top \Gamma + \Sigma_{E}$.
When the leading eigenvalues of $\Gamma^\top\Gamma$ are sufficiently
separated from those of $\Sigma_E$, the leading eigenspace of the population
covariance $\Sigma_X$ is close to the loading subspace $\operatorname{col}(\Gamma^\top)$. For the realized design matrix $X$, the
empirical covariance $X^\top X/n$ concentrates around $\Sigma_X$ in large
samples. Since the right singular vectors of $X$ are the eigenvectors of
$X^\top X$, the leading right singular directions of $X$ approximately
recover the same subspace \citep{bai2003inferential}.

At the same time, the population bias vector satisfies
\[
b_0
=
\Sigma_{X}^{-1}\operatorname{Cov}(x_i, h_i^\top\delta)
=
\Sigma_{X}^{-1}\Gamma^\top \Sigma_{H}\delta
=
\Sigma_{X}^{-1}\Gamma^\top \delta,
\]
so, under the eigenvalue-separation condition described above, \(b_0\) is approximately 
aligned with the loading subspace \(\operatorname{col}(\Gamma^\top)\), 
and hence with the leading right singular directions of \(X\). This implies that the sample-space confounding component \(Xb_0\) has large $\ell_2$-norm and is concentrated in the leading left singular directions of \(X\). This consideration motivates the following choices of spectral transformations used in the literature.

\paragraph{Trim Transform}
The Trim transform, proposed by \citet{cevid2020spectral}, caps the leading singular values at a common level without requiring hyperparameter tuning:
\begin{equation}
\label{eq:trim}
\tilde d_i =
\begin{cases}
\mathrm{median}(\{d_j\}_{j=1}^k), & \text{if } i \leq \lfloor k/2 \rfloor,\\
d_i, & \text{otherwise},
\end{cases}~~~~~~i=1,\dots,k.
\end{equation}
It is also the transformation used in spectrally deconfounded random forests \citep{Ulmer03042026}, making it a natural benchmark in our experiments.

\paragraph{LAVA Transform}

The LAVA method \citep{chernozhukov2017lava}, originally developed to deal with so-called ``sparse+dense" models, combines a sparse component \(\beta\) with a dense component \(b\) through the objective
\begin{align}
    (\hat{\beta}, \hat{b}) = \argmin_{\beta,b}
    \frac{1}{n}\|Y - X(\beta + b)\|_2^2
    + \lambda_2\|b\|_2^2
    + \lambda_1\|\beta\|_1.
    \label{eq:lava_obj}
\end{align}
Profiling out \(b\) from \eqref{eq:lava_obj} yields a lasso problem after applying the spectral transformation
\begin{align}
    F = \Bigl(I_n - X(X^\top X + n\lambda_2 I_p)^{-1}X^\top\Bigr)^{1/2}.
    \label{eq:lava_transform}
\end{align}
The corresponding shrinkage in terms of singular values of the design matrix is
\begin{align}
    \tilde d_i = \sqrt{\frac{n\lambda_2\, d_i^2}{n\lambda_2 + d_i^2}},~~i=1,\dots,k.
    \label{eq:lava_shrinkage}
\end{align}
Unlike principal component adjustment \citep{price2006principal}, a LAVA transform shrinks all directions continuously rather than removing some of them entirely. This is the transformation we build on later, since its \(\ell_2\)-penalized component admits a random effects interpretation and yields a principled route to tuning \(\lambda_2\) (see Section \ref{sec:mm}).

\subsection{The role of dense confounding}

The key structural assumption behind spectral deconfounding is \emph{dense confounding}. In the linear structural equation model \(X = H\Gamma + E\), this is formalized by requiring that the confounding loadings are sufficiently spread out across coordinates, for example through the condition \(d_{\min}(\Gamma) \gtrsim \sqrt{p}\) as \(p \to \infty\) \citep{cevid2020spectral}. Intuitively, this means that each latent confounder affects many observed covariates, so that the resulting bias is distributed broadly across covariates rather than concentrated on a small subset. 

Dense confounding, together with additional regularity conditions, is used in the spectral deconfounding literature to show that the coefficient-level bias $b_0$ vanishes asymptotically in \(\ell_2\)-norm \citep[Lemma~6]{cevid2020spectral}. 
This helps explain why ordinary least squares can become asymptotically unbiased in such regimes and also underlies critiques of latent-confounder recovery based approaches, such as that of \citet{grimmer2023naive}. However, for regularized estimators, and especially in high-dimensional settings, the relevant quantity is not only \(\|b_0\|_2\), but also the geometry of the bias \(Xb_0\) in sample space \citep[Theorem~2]{cevid2020spectral}. Even when \(\|b_0\|_2\) is small, $\|Xb_0\|_2$ can remain non-negligible if \(b_0\) is aligned with high-variance directions of \(X\), and regularized procedures may still be attracted to fitting this component. This is particularly natural for estimators that favor high-variance directions, such as ridge-type procedures, gradient descent with early stopping, or sparsity-based methods applied to anisotropic designs.

The role of spectral deconfounding is therefore not to eliminate the coefficient bias \(b_0\) itself, but to control the fitted confounding term by reshaping the spectrum of the design. In other words, for a fixed design dimension \(p\), a spectral transformation does not alter the population bias vector \(b_0\), but it does change the geometry of the fitted bias in sample space through \(Xb_0 \mapsto FXb_0\) to attenuate its alignment with directions that are easy for a regularized learner to fit. This is precisely the mechanism we exploit in the gradient boosting setting. 

\subsection{Mixed models and random-effects adjustment}
\label{sec:mm}
Mixed-model approaches in genome-wide association studies (GWAS) \citep{zhang2010mixed, sul2018population} are closely related to spectral deconfounding. To account for population stratification and other latent structure, these methods introduce a Gaussian random effect, leading to the linear mixed model
\begin{align}
Y
=
X\beta
+
Xb
+
\epsilon,
\qquad
b \sim \mathcal{N}(0,\sigma_r^2I_{p}),
\qquad
\epsilon \sim \mathcal{N}(0,\sigma_e^2 I_n).
\label{eq:mixed_model_full}
\end{align}
This induces a marginal covariance of the form $\sigma_r^2XX^\top + \sigma_e^2I_{n}$, where $XX^\top$ plays the role of a so-called kinship or relatedness matrix.
The random effect is intended to capture variation induced by latent structure in \(X\), such as population stratification or other dense confounding effects. Since \(XX^\top = UD^2U^\top\), the random effect acts along the principal component directions of \(X\), with shrinkage modulated continuously by the spectrum \(D\). In this sense, mixed models can be viewed as a continuous analogue of principal component adjustment \citep{price2006principal}: instead of selecting a finite number of components, they shrink all principal directions using a Gaussian prior.

Under the Gaussian likelihood, integrating out the random effect yields
\[
Y\mid X
\sim
\mathcal{N}\!\left(X\beta,\,\Sigma\right),
\qquad
\Sigma = \sigma_r^2 XX^\top + \sigma_e^2 I_n.
\]
The corresponding maximum likelihood estimator is the generalized least squares estimator
$
\hat{\beta}
=
(X^\top \Sigma^{-1} X)^{-1}X^\top \Sigma^{-1}Y$, 
which minimizes the transformed squared loss
\begin{equation}\label{spec_loss}
    L_F(\beta)
    :=
    \frac{1}{2}\|F(Y-X\beta)\|_2^2,
    \qquad F=\Sigma^{-1/2}.
\end{equation}
Equivalently $\hat{\beta}$ is obtained by ordinary least squares on the whitened data \(FX\) and \(FY\).
As mentioned in Section~\ref{subsec:spec_deconfounding}, we call a loss of this form a spectral loss when it is a squared-error loss transformed by a spectral transformation \(F\) that acts diagonally in the left singular-vector basis of \(X\).
In the mixed-model formulation above, the whitening matrix \(F=\Sigma^{-1/2}\) provides one such spectral transformation.
Indeed, applying the Sherman--Morrison--Woodbury identity  to $\Sigma^{-1}$ gives
\begin{equation}
\label{eq:precision_smw}
\Sigma^{-1}
=
(\sigma_r^2 XX^\top + \sigma_e^2 I_n)^{-1}
=
\frac{1}{\sigma_e^2}\Bigl(
I_n
-
X(X^\top X + \tfrac{\sigma_e^2}{\sigma_r^2}I_p)^{-1}X^\top
\Bigr).
\end{equation}
Thus, up to the irrelevant global factor \(1/\sigma_e^2\), the whitening operator induces exactly the same spectral transformation as the LAVA transform in \eqref{eq:lava_transform}, with the identification
$
n\lambda_2 = \sigma_e^2/\sigma_r^2.
$
Equivalently, the transformed singular values satisfy
\begin{equation}
\tilde d_i
=
\frac{d_i}{\sqrt{1 + (\sigma_r^2/\sigma_e^2)d_i^2}}
=
\sqrt{\frac{n\lambda_2\,d_i^2}{n\lambda_2 + d_i^2}},~~i=1,\dots,k.
\end{equation}
This matches \eqref{eq:lava_shrinkage}. The discussion above can be summarized in the following proposition.

\begin{proposition}[Mixed-model interpretation of LAVA]
\label{prop:lava_mixed_model}
Consider the linear mixed model in \eqref{eq:mixed_model_full}, with
\(\Sigma=\operatorname{Cov}(Y|X)=\sigma_r^2XX^\top+\sigma_e^2I_n\). Then the normalized whitening transformation $
\bar F=\sigma_e\Sigma^{-1/2}
$
coincides with the LAVA spectral transformation in \eqref{eq:lava_transform}, with the identification
$
n\lambda_2= \sigma_e^2/\sigma_r^2.
$
Equivalently,
$$
\sigma_e\Sigma^{-1/2}
=
\left( I_n-X(X^\top X+n\lambda_2 I_p)^{-1}X^\top \right)^{1/2} .
$$
Thus, the LAVA shrinkage parameter \(\lambda_2\) can be interpreted as a variance-component ratio.
\end{proposition}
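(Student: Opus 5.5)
The plan is to prove the identity by checking that both sides are square roots of the same positive semidefinite matrix, and then invoking uniqueness of the principal square root.

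First, I would show that $\Sigma$ is positive definite, so that $\Sigma^{-1/2}$ is well defined. This holds because $\sigma_r^2XX^\top$ is positive semidefinite and $\sigma_e^2 I_n$ is positive definite whenever $\sigma_e^2>0$.

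Next, I would apply the Sherman--Morrison--Woodbury identity \eqref{eq:precision_smw}. Multiplying it by $\sigma_e^2$ gives
\[
\sigma_e^2\Sigma^{-1}=I_n-X(X^\top X+\tfrac{\sigma_e^2}{\sigma_r^2}I_p)^{-1}X^\top .
\]
Substituting $n\lambda_2=\sigma_e^2/\sigma_r^2$ turns the right-hand side into $M:=I_n-X(X^\top X+n\lambda_2 I_p)^{-1}X^\top$, which is exactly the matrix whose square root defines the LAVA transform \eqref{eq:lava_transform}. If a self-contained check of Woodbury is preferred, I would multiply $M$ by $\sigma_r^2XX^\top+\sigma_e^2I_n$ and use the push-through identity $(X^\top X+cI_p)^{-1}X^\top = X^\top(XX^\top+cI_n)^{-1}$ to obtain $\sigma_e^2 I_n$.

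To pass to square roots, I would use that the principal square root of a positive semidefinite matrix is unique. Since $\sigma_e\Sigma^{-1/2}$ is positive definite and its square is $\sigma_e^2\Sigma^{-1}=M$, it must coincide with $M^{1/2}$, which is the desired identity.

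For the diagonal description, I would write $X=UDV^\top$ and compute
\[
X(X^\top X+n\lambda_2 I_p)^{-1}X^\top = U\,\mathrm{diag}\!\left(\frac{d_i^2}{d_i^2+n\lambda_2}\right)U^\top .
\]
This uses the fact that $(X^\top X+n\lambda_2 I_p)^{-1}$ acts as $(d_i^2+n\lambda_2)^{-1}$ on the columns of $V$. Consequently $M$ has eigenvalues $n\lambda_2/(n\lambda_2+d_i^2)$ on $\operatorname{col}(U)$ and $1$ on its orthogonal complement. This recovers the factor $\tilde d_i/d_i$ in \eqref{eq:lava_shrinkage}.

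This last computation also makes explicit a subtlety: on $\operatorname{col}(U)^\perp$ the matrix $M$ equals the identity rather than $0$, whereas the form \eqref{eq:spectral_transform} is written as $U\tilde DU^\top$. I would therefore remark that the identification concerns the matrix in \eqref{eq:lava_transform}. Moreover, $U^\perp$ directions are irrelevant for the loss on the column space of $X$. They remain relevant for general $Y$, however, so the statement should be read with $F$ as defined in \eqref{eq:lava_transform}.

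Main obstacle: essentially none, since the result is algebraic. The only points needing care are invoking uniqueness of the positive definite square root and handling the complement of $\operatorname{col}(U)$ correctly.
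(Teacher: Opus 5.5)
Your proposal is correct and follows essentially the same route as the paper. Like the paper, you apply the Sherman--Morrison--Woodbury identity to obtain $\sigma_e^2\Sigma^{-1}=I_n-X(X^\top X+n\lambda_2 I_p)^{-1}X^\top$ under $n\lambda_2=\sigma_e^2/\sigma_r^2$, and then check that the singular-value shrinkage matches \eqref{eq:lava_shrinkage}. You additionally make explicit the uniqueness of the positive semidefinite square root, which the paper leaves implicit, and your remark that the transform acts as the identity on $\operatorname{col}(X)^\perp$ is consistent with the full-space operator the paper later adopts in \eqref{def_F_non_lin}.
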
 
This observation provides, to the best of our knowledge, a novel explicit link between the LAVA transform used in spectral deconfounding and random-effects adjustment in mixed models. 

\section{Asymptotic Unbiasedness of Spectrally Deconfounded Gradient Boosting with Linear Base Learners}
\label{sec:opt_view}

In this section, we study spectral deconfounding from an optimization perspective. Our goal is to understand how spectral losses interact with boosting and, in particular, how they affect the learning of
confounding-aligned directions. The key idea is that boosting \citep{freund1996experiments, 10.1214/aos/1024691079} can be viewed as gradient descent in function space \citep{friedman2000additive, friedman2001greedy, buhlmann2003boosting}. 
We show that a spectral transformation of the loss induces direction-dependent learning rates.
Combined with early stopping, this leads to asymptotically unbiased prediction even in the presence
of confounding.

We consider a sequence of problems indexed by $p$, with $p \to \infty$ and $n=n_p \rightarrow \infty$, and suppress the dependence of $n$ on $p$ for notational simplicity.  
Let $X\in\mathbb{R}^{n\times p}$ be a
realized design matrix of rank $k$, with singular value decomposition $X=UDV^\top$, where
$U=(u_1,\dots,u_k)\in\mathbb{R}^{n\times k}$ has orthonormal columns.
We work with the model
\[
Y = f_0 + g_0 + \varepsilon,
\qquad
f_0 = X\beta_0 \in \mathbb{R}^{n},
\qquad
g_0 = Xb_0 \in \mathbb{R}^{n},
\]
where $f_0$ represents the signal, $g_0$ represents the confounding contribution, and $\varepsilon \in \mathbb{R}^n$ is additive noise. We condition on $X$ throughout.

We use squared losses of the form
\begin{equation}
L_W(f)
=
\frac{1}{2}\|W^{1/2}(Y-f)\|_2^2,
\end{equation}
where $W\in \mathbb{R}^{n \times n}$ is a symmetric positive semidefinite spectral weight matrix. The ordinary squared loss
corresponds to $
W^{\mathrm{or}} = I_n,
$
whereas the spectral loss corresponds to
\[
W^{\mathrm{sp}} = F^\top F
=
U\operatorname{diag}(w_1,\dots,w_k)U^\top,
\qquad w_i \geq 0.
\]
This is equivalent to the transformed-singular-value parametrization used in Section~\ref{subsec:spec_deconfounding}: if the spectral transformation $F$ replaces \(d_i\) by any \(\tilde d_i \geq 0\), then $$
w_i
=
(\tilde d_i / d_i)^2.$$ 
Thus, specifying the transformed singular values \(\tilde d_i\) or the weights \(w_i\) is equivalent, with
\(\tilde d_i=d_i\sqrt{w_i}\). In this section, we use the weight parametrization \(w_i\), since these are the quantities that enter the boosting recursion below.
For example, the rescaled random-effects transformation $F
=
\sigma_e(\sigma_r^2 XX^\top+\sigma_e^2 I_n)^{-1/2}
$ induces

$$
w_i
=
\frac{\sigma_e^2}{\sigma_r^2 d_i^2+\sigma_e^2},
\qquad i=1,\ldots,k.
$$
Consequently, directions corresponding to large singular values of \(X\) receive smaller weights.
This weighting enters the boosting update through the gradient:
the negative gradient of \(L_W\) is
\[
-\nabla_f L_W(f)=W(Y-f).
\]

We consider boosting with a linear base learner. Since the ordinary least squares (OLS) fit of any pseudo-response
\(z\in\mathbb{R}^n\) on \(X\) is its orthogonal projection onto \(\operatorname{col}(X)\), the base
learner is given by
\[
P_X z,
\qquad
P_X := X(X^\top X)^+X^\top = UU^\top.
\]
We then study boosting with this base learner under the ordinary squared loss and under the
spectral loss introduced above.
Fixing a learning rate \(\nu\in(0,1)\) and starting from \(\hat f_0=0\), boosting under the weighted loss
\(L_W\) is defined recursively by
\[
\hat f_{m+1}
=
\hat f_m+\nu P_XW(Y-\hat f_m),
\qquad m\ge 0.
\]
The ordinary $L_2$-boosting recursion is obtained by taking \(W=W^{\mathrm{or}}=I_n\), while the spectral
boosting recursion is obtained by taking \(W=W^{\mathrm{sp}}=F^\top F\). We denote the two resulting
iterates by \(\hat f_m^{\mathrm{or}}\) and \(\hat f_m^{\mathrm{sp}}\), respectively. The advantage of this formulation is that both \(P_X\) and \(W^{\mathrm{sp}}\) diagonalize in the
basis \((u_i)_{i=1}^k\). To analyze the recursion, write
\[
f_0=\sum_{i=1}^k f_i u_i,
\qquad
g_0=\sum_{i=1}^k g_i u_i,
\]
where $
f_i:=\langle f_0,u_i\rangle,
g_i:=\langle g_0,u_i\rangle.$
Thus \(f_i\) and \(g_i\) are the coordinates of the signal and fitted confounding term along the left
singular directions of \(X\).

The conditional bias of the boosting iterates evolves now coordinatewise as
\[
\mathbb{E}[\hat f_m\mid X]-f_0
=
-\sum_{i=1}^k (1-\nu w_i)^m f_i u_i
+
\sum_{i=1}^k \bigl(1-(1-\nu w_i)^m\bigr) g_i u_i,
\]
where \(w_i=1\) in the ordinary case and \(w_i \geq 0\) are the spectral weights in the transformed
case (see Appendix \ref{app:proof_exact_bias} for a derivation). The key insight is that each direction \(u_i\) is learned at rate \(\nu w_i\): in ordinary boosting all directions are learned at the same rate, while spectral deconfounded boosting slows down directions with small weights.

To state our main theorem, we impose the following assumptions.

\begin{assumption}[Reduced model and bounded energy]
\label{ass:reduced_model_energy_main}
The noise satisfies $\mathbb{E}[\varepsilon\mid X]=0$, and
\[
\frac{\|f_0\|_2}{\sqrt n} = O(1),
\qquad
\frac{\|g_0\|_2}{\sqrt n} = O(1).
\]
\end{assumption}

\begin{assumption}[Spectral separation]
\label{ass:empirical_separation_main}
Fix $q\in\mathbb{N}$ with \(q<k\) for all sufficiently large \(p\), and define the empirical top-$q$ subspace projector
\[
\hat P_H := \sum_{i=1}^q u_i u_i^\top.
\]
The confounding term is concentrated in the top spectral subspace, while the signal is not:
\[
\frac{\|(I-\hat P_H)g_0\|_2}{\sqrt n} \to 0,
\qquad
\frac{\|\hat P_H f_0\|_2}{\sqrt n} \to 0.
\]
\end{assumption}

\begin{assumption}[Weight separation]
\label{ass:weight_separation_main}
Let
\[
w_H := \max_{1\le i\le q} w_i,
\qquad
w_M := \min_{q<i\le k} w_i.
\]
Assume \(0\le w_i\le 1\) for all \(i=1,\ldots,k\), and
\[
 (a) \quad w_H \to 0,
\qquad
(b) \quad \liminf_{p\to\infty} w_M \ge c > 0.
\]
\end{assumption}

The assumptions are discussed below after the theoretical statements. The following theorem shows that spectral weighting creates a window of stopping times for which the
signal is learned while the confounding term is not.

\begin{theorem}[Asymptotic unbiasedness]
\label{thm:pathwise_unbiasedness_main}
Suppose Assumptions~\ref{ass:reduced_model_energy_main}--\ref{ass:weight_separation_main} hold.
Let the boosting iterations $m_p$ satisfy
\[
(i) \quad m_p \to \infty,
\qquad
(ii) \quad m_p \nu w_M \to \infty,
\qquad
(iii) \quad m_p \nu w_H \to 0.
\]
For fixed $\nu$, condition (ii) is implied by Assumption 3(b), while condition (iii) is implied by Assumption 3(a) for sufficiently slowly growing number of iterations.
Then
\[
\frac{1}{\sqrt n}
\left\|
\mathbb{E}[\hat f^{\mathrm{sp}}_{m_p}\mid X] - f_0
\right\|_2
\to 0.
\]
\end{theorem}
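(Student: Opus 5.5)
The proof starts from the exact coordinatewise bias formula stated before the theorem (derived in Appendix~\ref{app:proof_exact_bias}):
\[
\mathbb{E}[\hat f^{\mathrm{sp}}_{m}\mid X]-f_0
=
-\sum_{i=1}^k (1-\nu w_i)^m f_i u_i
+
\sum_{i=1}^k \bigl(1-(1-\nu w_i)^m\bigr) g_i u_i .
\]
By orthonormality of the $u_i$ and the triangle inequality, it suffices to bound four pieces separately. These are the signal bias in the top block $i\le q$, the signal bias in the bulk $i>q$, the learned confounding in the top block, and the learned confounding in the bulk. Each piece is divided by $\sqrt n$ and shown to tend to $0$. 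Throughout, we use that $0\le w_i\le1$ and $\nu\in(0,1)$ give $0\le 1-\nu w_i\le 1$. Hence both multipliers $(1-\nu w_i)^m$ and $1-(1-\nu w_i)^m$ lie in $[0,1]$.

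\emph{Terms controlled by spectral separation.} For the signal in the top block, the multiplier is at most $1$. So its norm is at most $\|\hat P_H f_0\|_2$, and this is $o(\sqrt n)$ by Assumption~\ref{ass:empirical_separation_main}. For the confounding in the bulk, the multiplier is again at most $1$. So its norm is at most $\|(I-\hat P_H)g_0\|_2=o(\sqrt n)$, by the same assumption. Note that $g_0=Xb_0\in\operatorname{col}(X)$, so $g_0=\sum_{i\le k} g_iu_i$ and $(I-\hat P_H)g_0=\sum_{i>q}g_iu_i$.

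\emph{Terms controlled by the stopping window.} For the signal in the bulk, each $i>q$ satisfies $(1-\nu w_i)^{m_p}\le(1-\nu w_M)^{m_p}\le \exp(-m_p\nu w_M)$. This tends to $0$ by condition (ii). So the norm is at most $\exp(-m_p\nu w_M)\|f_0\|_2$, which is $o(\sqrt n)$ by Assumption~\ref{ass:reduced_model_energy_main}. For the confounding in the top block, Bernoulli's inequality gives $1-(1-x)^m\le mx$ for $x\in[0,1]$. Hence $1-(1-\nu w_i)^{m_p}\le m_p\nu w_H$ for $i\le q$. The norm is therefore at most $m_p\nu w_H\,\|g_0\|_2=o(\sqrt n)$, by condition (iii) and bounded energy. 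Summing the four $o(1)$ terms yields the claim.

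\emph{The remark on the conditions.} For fixed $\nu$, Assumption~\ref{ass:weight_separation_main}(b) gives $w_M\ge c/2$ eventually. So (i) implies (ii). Since $w_H\to0$, one can choose $m_p\to\infty$ slowly, for example $m_p=\lfloor \min(p,w_H^{-1/2})\rfloor$, so that $m_p w_H\le w_H^{1/2}\to0$; this gives (iii).

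The only genuinely delicate point is the top-block confounding term. Here one must use a linear-in-$m$ bound rather than an exponential one, and ensure that the growth of $m_p$ is dominated by $1/w_H$. The rest is bookkeeping. Note also that the bound on the bulk signal term requires $\|f_0\|_2/\sqrt n=O(1)$ uniformly in $p$, which is exactly what Assumption~\ref{ass:reduced_model_energy_main} provides.
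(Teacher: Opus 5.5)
Your proof is correct and matches the paper's argument essentially step for step. Both use the same four-way split into top and bulk bands for signal and confounding, the bound $(1-\nu w_M)^{m_p}\le \exp(-m_p\nu w_M)$ for the bulk signal, Bernoulli's inequality for the top-band confounding, and the trivial multiplier bound with Assumption~\ref{ass:empirical_separation_main} for the other two terms; your explicit choice of $m_p$ for condition (iii) differs only cosmetically from the paper's suggestion $m_p\asymp 1/(\nu\sqrt{w_Hw_M})$.
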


\begin{corollary}[Failure of ordinary $L_2$-boosting]
\label{cor:plain_failure}
If $\liminf \|g_0\|_2/\sqrt n > 0$, then for any $m_p \to \infty$,
\[
\frac{1}{\sqrt n}
\left\|
\mathbb{E}[\hat f^{\mathrm{or}}_{m_p}\mid X] - f_0
\right\|_2
\not\to 0.
\]
\end{corollary}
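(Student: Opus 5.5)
The argument is a direct application of the coordinatewise bias formula displayed before the assumptions, specialised to $w_i=1$ for all $i$. In ordinary $L_2$-boosting, every coordinate $f_i$ and $g_i$ evolves at the same rate. Substituting $w_i=1$ gives
\[
\mathbb{E}[\hat f^{\mathrm{or}}_{m}\mid X]-f_0
=
-(1-\nu)^m f_0+\bigl(1-(1-\nu)^m\bigr) g_0 ,
\]
using that $f_0,g_0\in\operatorname{col}(X)=\operatorname{span}(u_1,\dots,u_k)$, so that $\sum_i f_iu_i=f_0$ and $\sum_i g_iu_i=g_0$.

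Next, we take norms and apply the reverse triangle inequality:
\[
\frac{1}{\sqrt n}\bigl\|\mathbb{E}[\hat f^{\mathrm{or}}_{m_p}\mid X]-f_0\bigr\|_2
\;\ge\;
\bigl(1-(1-\nu)^{m_p}\bigr)\frac{\|g_0\|_2}{\sqrt n}
-(1-\nu)^{m_p}\frac{\|f_0\|_2}{\sqrt n}.
\]
Since $\nu\in(0,1)$ is fixed and $m_p\to\infty$, we have $(1-\nu)^{m_p}\to 0$. Assumption~\ref{ass:reduced_model_energy_main} gives $\|f_0\|_2/\sqrt n=O(1)$, so the second term vanishes. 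The factor $1-(1-\nu)^{m_p}$ tends to $1$. Taking $\liminf$ therefore yields a lower bound of $\liminf_p \|g_0\|_2/\sqrt n>0$, which shows the normalized bias does not converge to zero.

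The only point needing care is the bookkeeping that $f_0$ and $g_0$ lie in $\operatorname{col}(X)$, so that no orthogonal component is omitted from the expansion. This holds by definition, since $f_0=X\beta_0$ and $g_0=Xb_0$. No separation assumptions are needed; the failure comes entirely from the uniform learning rate, which lets the confounding component be fully absorbed as $m_p\to\infty$.
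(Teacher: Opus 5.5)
Your proposal is correct and follows the paper's proof: reduce the bias to $-(1-\nu)^{m_p}f_0+\bigl(1-(1-\nu)^{m_p}\bigr)g_0$, apply the reverse triangle inequality, and use $(1-\nu)^{m_p}\to 0$ together with $\|f_0\|_2/\sqrt n=O(1)$. The only cosmetic difference is that the paper obtains the closed form directly from $(I-\nu P_X)^m=I-\bigl(1-(1-\nu)^m\bigr)P_X$ and $P_Xf_0=f_0$, $P_Xg_0=g_0$, rather than by setting $w_i=1$ in the coordinatewise formula; the two are equivalent.
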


\paragraph{Discussion.}
The assumptions in Theorem~\ref{thm:pathwise_unbiasedness_main} are intended to abstract the behavior of the dense-confounding structural equation model defined by \eqref{eq:sem_x} and \eqref{eq:sem_y}.
Their role is to separate three ingredients: a reduced sample-space representation of the bias,
spectral concentration of this bias, and algorithmic downweighting of the corresponding directions.

Assumption~\ref{ass:reduced_model_energy_main} follows from the usual projection argument in
the dense factor model.
Writing
\[
H\delta = Xb_0 + r, \qquad b_0 = \Sigma_X^{-1}\Gamma^\top\delta,
\]
the residual \(r=H\delta-Xb_0\) is the part of the confounding term orthogonal to the linear span of
\(X\). Under dense confounding, for instance when \(d_q(\Gamma)\gtrsim \sqrt p\), this residual
has vanishing normalized empirical norm, \(\|r\|_2/\sqrt n=o_p(1)\); see Appendix~\ref{app:factor_model_justification}.
If \((H,E)\) are jointly Gaussian, then this orthogonal projection residual is independent of \(X\),
and hence satisfies \(E[r\mid X]=0\). In that case, the reduced model
\[
Y = X\beta_0 + Xb_0 + \varepsilon'
\]
has the exact conditional-mean structure required in Assumption~\ref{ass:reduced_model_energy_main}.
Without joint Gaussianity, the same reduced model is valid up to an \(o_p(1)\) perturbation in
normalized empirical $\ell_2$ norm, because the orthogonal component \(r\) itself vanishes asymptotically.

Assumption~\ref{ass:empirical_separation_main} is the corresponding geometric condition in sample
space. Its bias part is justified by standard strong-factor arguments. Under the factor model
\(X=H\Gamma+E\), the factor component \(H\Gamma\) generates \(q\) leading singular directions,
and the empirical top-\(q\) principal subspace of \(X\) consistently estimates the factor space
spanned by \(H\), up to rotation, under standard factor-model conditions \citep{bai2003inferential}. Equivalently,
using perturbation results such as Davis--Kahan or Wedin-type sin-theta bounds \citep{yu2015useful}, the projector onto
the leading left singular vectors of \(X\) converges to the projector onto \(\operatorname{col}(H)\).
Since the fitted confounding component satisfies \(Xb_0=H\delta-r\) with
\(\|r\|_2/\sqrt n=o_p(1)\), the bias term \(Xb_0\) is asymptotically contained in this leading empirical
spectral subspace. The second part of Assumption~\ref{ass:empirical_separation_main}, namely that
the direct signal \(X\beta_0\) has negligible projection onto the same subspace, is a separate
signal-separation condition: it rules out the case in which the target signal itself is concentrated
in the factor directions that the method is designed to downweight.

Assumption~\ref{ass:weight_separation_main} is algorithmic. It requires that the spectral
transformation assign vanishing weights to the leading factor directions, while keeping the remaining
directions learnable. This follows from the usual spike--bulk behavior of \(X\). Under sub-Gaussian
assumptions on \(H\) and \(E\), non-asymptotic random matrix bounds imply that the noise bulk has
operator norm of order \(\sqrt n+\sqrt p\), whereas the strong factor component produces \(q\)
spiked singular values of order \(\sqrt{np}\) \citep{vershynin2020high}. Thus the leading singular values
separate from the bulk. For LAVA-type weights,
\begin{equation}
\label{eq:lava_weights}
w_i = \frac{n\lambda_2}{n\lambda_2+d_i^2},
\end{equation}
choosing \(n\lambda_2\) at the bulk scale, \(n\lambda_2\asymp(\sqrt n+\sqrt p)^2\), gives
\(w_i\to0\) on the spiked directions and weights bounded away from zero on the bulk directions.
The same spike--bulk separation also explains the Trim transform: the leading singular values are
capped at a bulk scale, so their ratios \(\tilde d_i/d_i\) vanish on the spikes, while remaining
bounded away from zero on non-spiked directions.

The theorem then isolates the deconfounding mechanism. Spectral weighting does not remove the
confounding component; instead, it slows the optimization dynamics in directions where the fitted
confounding term is concentrated. This creates a window in which the signal directions have
been learned, while the downweighted confounding directions have not. Early stopping is therefore essential, and the theorem formalizes this as an intermediate
stopping regime: the number of iterations \(m_p\) must diverge so that the non-confounding directions are learned,
\(m_p\nu w_M\to\infty\), but not so fast that the downweighted confounding directions are fitted,
\(m_p\nu w_H\to0\). We empirically validate the importance of this intermediate stopping regime in Section~\ref{sec:quant_sim}.

The proof is given in Appendix~\ref{app:pathwise_unbiasedness}. The verification of
Assumptions~\ref{ass:reduced_model_energy_main}--\ref{ass:weight_separation_main} under a dense
factor model and either the Trim or LAVA transform is given in
Appendix~\ref{app:factor_model_justification}. We remark that Theorem~\ref{thm:pathwise_unbiasedness_main} is a bias statement: it controls the conditional mean of the boosting iterate. Full prediction consistency additionally requires variance control; for the OLS base learner considered here, this requires additional effective-dimension conditions, with \(k/n\to0\), being a simple sufficient condition; see remark~\ref{rem:variance_pathwise} in Appendix~\ref{app:pathwise_unbiasedness}.

The analysis uses OLS base learners, which makes the boosting recursion exactly diagonal in the
left singular-vector basis of \(X\). For nonlinear learners such as trees, this exact diagonalization is
no longer available. Nevertheless, the theorem provides the guiding optimization picture: the spectral
loss changes the pseudo-response so that residual variation aligned with the confounding subspace is
learned more slowly. We use this insight to motivate the nonlinear spectral boosting procedure in the
next section.

\section{Spectral Deconfounding for Nonlinear Models via Tree-Boosting}
\label{sec:non-linear}

We now move beyond linear models and allow for a general nonlinear relationship between the covariates \(X\) and the response \(Y\). Our proposed solution for spectral deconfounding extends the results from Section~\ref{sec:mm} to nonlinear mixed-effects models. Moreover, we propose to choose the spectral shrinkage via empirical Bayes and to tune the amount of regularization using a certain form of cross-validation. We consider the structural equation model for $n$ i.i.d. observations stacked in matrix form as
\begin{align}
X &= H \Gamma + E, \label{eq:sem_x_non-linear} \\
Y &= f_0(X) + H \delta + \epsilon.
\label{eq:sem_y_non-linear}
\end{align}
The dimensions of all matrices and vectors, as well as the independence assumptions on \(E\), \(H\), and \(\epsilon\), are the same as in Section~\ref{sub:linear}.
Here \(f_0:\mathbb{R}^p\to\mathbb{R}\) denotes the target nonlinear regression function, and \(f_0(X)\) denotes its row-wise evaluation on the design matrix \(X\), that is,
$
    f_0(X) := \bigl(f_0(x_1),\ldots,f_0(x_n)\bigr)^\top \in \mathbb{R}^n .
$
As in the linear setting, confounding enters additively through the latent term \(H\delta\), while the goal is to recover \(f_0\) without explicitly estimating \(H\).
In the nonlinear setting, the observable regression function, that is the function recovered by population least-squares regression of Y on X, is
\[
\mathbb E[Y\mid X]=f_0(X)+m(X), \qquad m(X):=\delta^\top \mathbb E[H\mid X],
\]
and the, potentially nonlinear, part of \(m(X)\) that is absorbed in estimation depends on the chosen model class for $f$. In particular, under a linear model class this reduces to the linear projection \(Xb_0\).

Similarly to Section \ref{sec:mm}, we consider the model
\[
Y = f(X) + X b + \epsilon,
\]
where \(f:\mathbb{R}^p\to\mathbb{R}\) is a possibly nonlinear function and \(f(X)\in\mathbb{R}^n\) denotes its row-wise evaluation on the observed covariates,
\(b \sim \mathcal N(0,\sigma_r^2 I_p)\) is a random effect, and \(\epsilon\sim\mathcal N(0,\sigma_e^2 I_n)\). Again, integrating out \(b\) yields the Gaussian marginal model
$
Y \mid X, f \sim \mathcal N\!\bigl(f(X), \Sigma\bigr)
$, $
\Sigma = \sigma_r^2 X X^\top + \sigma_e^2 I_n.
$
Hence the marginal log-likelihood is
\begin{equation}
\label{eq:spec_loss}
\ell(f, \Sigma)
=
-\frac12 (Y-f(X))^\top \Sigma^{-1} (Y-f(X))
-\frac12 \log |\Sigma|
+\mathrm{const}.
\end{equation}
The log-determinant does not depend on \(f(\cdot)\), so maximizing the marginal likelihood with respect to $f(\cdot)$ is equivalent to minimizing the spectral loss in \eqref{spec_loss} with $X\beta$ replaced by $f(X)$.

\subsection{Gradient tree-boosting with the spectral loss}
\label{sec:boosting}
Next, we propose to estimate the potentially nonlinear signal $f_0(\cdot)$ via gradient tree-boosting by minimizing the spectral loss induced by the quadratic term in Equation~\ref{eq:spec_loss}. Gradient boosting constructs a function \(f(\cdot)\) as an additive expansion of base learners,
\[
f(x)=\sum_{m=1}^M \nu\, h^{(m)}(x),
\]
where \(h^{(m)} \in \mathcal H\) are some base learners, in our case regression trees, and \(\nu>0\) is the learning rate. As mentioned in Section \ref{sec:opt_view}, the method can be viewed as stagewise gradient descent in a function space: at iteration \(m\), one fits a new base learner to approximate the negative gradient of the loss evaluated at the current fit \(f^{(m)}\).
We use a spectral loss. Writing \(r^{(m)}:=Y-f^{(m)}(X)\) for the residual vector at iteration \(m\), the gradient with respect to the prediction vector is $
\nabla_{f} L_F(f^{(m)}(\cdot))
=
- F^\top F\, r^{(m)}.
$
Hence boosting with the spectral loss differs from ordinary $L_2$-boosting only through the gradient: instead of fitting a tree to the raw residuals \(r^{(m)}\), one fits it to the filtered residuals \(F^\top F\,r^{(m)}\).

In the nonlinear setting, we define the spectral transformation as a full-space operator. Specifically,
\begin{equation}\label{def_F_non_lin}
F
=
U\operatorname{diag}\left(\sqrt{w_1},\ldots, \sqrt{w_k} \right)U^\top
+
(I_n-UU^\top).
\end{equation}
Thus, \(F\) rescales the left singular directions of \(X\) and acts as the identity on the orthogonal complement of \(\operatorname{col}(X)\). Consequently,
\[
F^\top F r^{(m)}
=
\sum_{i=1}^k  w_i
\langle u_i,r^{(m)}\rangle u_i
+
(I_n-UU^\top)r^{(m)} .
\]
This full-space definition does not affect the linear OLS base-learner analysis in Section~\ref{sec:opt_view}, since the OLS base learner depends only on the component of the pseudo-response lying in \(\operatorname{col}(X)\). For tree base learners, however, the fitted update is not restricted to \(\operatorname{col}(X)\), so the action of the spectral transformation on \(\operatorname{col}(X)^\perp\) must be specified. When the weights in \eqref{def_F_non_lin} are chosen as the LAVA weights given in \eqref{eq:lava_weights}, the definition in \eqref{def_F_non_lin} coincides with the LAVA transformation in \eqref{eq:lava_transform} and the mixed-model whitening transformation in Proposition \ref{prop:lava_mixed_model}, up to a global scaling factor. This factor does not affect the corresponding squared-loss minimizer and can be absorbed into the learning rate in boosting.

Algorithm~\ref{alg:spectral_boosting_fixed} summarizes gradient boosting with a fixed spectral transformation. At each iteration, the algorithm computes the filtered residuals \(F^\top F\,r^{(m)}\), fits a base learner to this pseudo-response, and updates the current prediction function by a learning-rate-scaled step.

\begin{algorithm}[ht!]
\caption{Gradient Boosting with Fixed Spectral Loss}
\label{alg:spectral_boosting_fixed}
\begin{algorithmic}[1]
\Require Data \((X,Y)\), spectral operator \(F\), number of iterations \(M\), learning rate \(\nu\)
\Ensure \(\hat f(\cdot)\)
\State Initialize \(f^{(0)}(x)\equiv 0\)
\For{\(m=0,\dots,M-1\)}
  \State Compute residuals: \(r^{(m)}=Y-f^{(m)}(X)\)
  \State Compute negative spectral gradient: \(\mathbf g^{(m)}=F^\top F\,r^{(m)}\)
  \State Fit base learner \(h^{(m)}\in\mathcal H\) to \((X,\mathbf g^{(m)})\) by least squares
  \State Update \(f^{(m+1)}(x)=f^{(m)}(x)+\nu\,h^{(m)}(x)\)
\EndFor
\State \Return \(\hat f(\cdot)=f^{(M)}(\cdot)\)
\end{algorithmic}
\end{algorithm}

\subsection{Empirical-Bayes spectral shrinkage and regularization}
Spectral deconfounding involves two conceptually distinct types of tuning choices. The first concerns the \emph{spectral transformation} itself, namely how the geometry of the design is modified. The second concerns the \emph{regularization} of the estimator fitted under that geometry, such as the lasso or ridge penalty in linear models or the number of boosting iterations in our nonlinear procedure. Separating these two roles is important both conceptually and algorithmically.

\subsubsection{Choosing the spectral transformation via Empirical Bayes}
Different spectral deconfounding methods involve different transformation parameters. The Trim transform \citep{cevid2020spectral} is essentially tuning-free, since it modifies the spectrum of \(X\) according to a fixed deterministic rule. Principal component adjustment depends on the number of principal components removed or controlled for, which is often motivated by factor-model arguments. In practice, however, selecting this latent dimension is difficult, since the asymptotic regimes under which the number of factors can be identified do not directly translate to finite-sample problems \citep{bai2003inferential}. By contrast, LAVA-type transformations are parameterized through a variance-ratio or shrinkage level that controls how strongly high-variance spectral directions are attenuated.

In our approach, this transformation is selected through a mixed-model formulation. We model the dense confounding component as a random effect \(X b\), with \(b \sim \mathcal N(0,\sigma_r^2 I_p)\), and estimate the variance components \(\theta=(\sigma_r^2,\sigma_e^2)\) by empirical Bayes \citep{robbins1964empirical}, that is, by maximizing the marginal log-likelihood obtained after integrating out the random effect:
\begin{equation}
\ell(\theta;f)
=
-\frac12 \log |\Sigma_\theta|
-\frac12 (Y-f(X))^\top \Sigma_\theta^{-1}(Y-f(X))
+\mathrm{const},
\label{eq:eb_objective}
\end{equation}
where
\[
\Sigma_\theta = \sigma_r^2 X X^\top + \sigma_e^2 I_n.
\]
For fixed \(f(\cdot)\), this yields the spectral operator \(F=\Sigma_\theta^{-1/2}\). Thus, rather than specifying a trimming threshold or the number of removed principal components a priori, we estimate the spectral geometry from the data through the variance components.

In practice, we optimize \(\theta\) and \(f(\cdot)\) alternately. For fixed \(f(\cdot)\), we update the variance components by maximizing \(\ell(\theta;f(\cdot))\). For fixed \(\theta\), we update \(f(\cdot)\) by gradient boosting under the spectral loss induced by \(F^\top F=\Sigma_\theta^{-1}\). When \(p<n\), the application of \(\Sigma_\theta^{-1}\) can be computed efficiently via the Sherman--Morrison--Woodbury identity, while when \(p \geq n\), one may work directly with the \(n \times n\) covariance matrix \(\Sigma_\theta\). For large \(p\) and \(n\), scalability can be obtained using sparse Gaussian-process approximations such as Vecchia approximations \citep{Katzfuss_2020}, which are implemented in our software (for both the linear kernel and the extension below to nonlinear kernels).

\begin{algorithm}[ht!]
\caption{Spectral Boosting with Empirical Bayes}
\label{alg:spectral_boosting_eb}
\begin{algorithmic}[1]
\Require Data \((X,Y)\), initial \(\theta^{(0)}=(\sigma_r^{2(0)},\sigma_e^{2(0)})\), number of iterations \(T\), learning rate \(\nu\)
\Ensure \(\hat f\), \(\hat\theta\)
\State Initialize \(f^{(0)}(x)\equiv 0\)
\For{\(t=0,\dots,T-1\)}
  \State Form \(\Sigma(\theta^{(t)})=\sigma_r^{2(t)}XX^\top+\sigma_e^{2(t)}I_n\)
  \State \textbf{Boosting step:} run Algorithm~\ref{alg:spectral_boosting_fixed} for \(1\) iteration
  \Statex \hspace{1.5em} using \(F^\top F=\Sigma(\theta^{(t)})^{-1}\), initialized at \(f^{(t)}\), to obtain \(f^{(t+1)}\)
  \State \textbf{EB step:} update \(\theta^{(t+1)} \in \arg\max_{\theta}\,\ell(\theta;f^{(t+1)})\)
\EndFor
\State \Return \(\hat f=f^{(T)}\), \(\hat\theta=\theta^{(T)}\)
\end{algorithmic}
\end{algorithm}

\subsubsection{Choosing the stopping iteration via BLUP-corrected cross-validation}
Regularization plays a different role. Whereas the spectral transformation determines which directions are downweighted, regularization determines how far the estimator is allowed to move along the resulting optimization path. In the linear spectral deconfounding literature, \citet{cevid2020spectral} tune the estimator regularization, for example the lasso penalty, by cross-validation and then use a conservative selection rule. This concerns the complexity of the estimator, not the transformation itself. By contrast, mixed-model formulations are typically used to estimate variance components, but do not by themselves emphasize the additional regularization needed when the goal is deconfounding rather than pure prediction. A similar point applies to spectrally deconfounded random forests \citep{Ulmer03042026}, where the transformation is fixed in advance and the role of regularization is not explicit.

In our boosting-based procedure, regularization is governed primarily by early stopping, that is, by the number of boosting iterations.
For component-wise linear $L_2$-boosting, the iteration number $m$ plays the role of an inverse regularization parameter: increasing $m$ weakens the regularization. In the infinitesimal-step-size limit, the resulting forward-stagewise path is closely related to, and under additional conditions coincides with, the lasso/LARS path \citep{Efron_2004, Hastie_2007}.
We select this number of iterations by \(K\)-fold cross-validation with the negative log-likelihood as criterion. Importantly, prediction on the validation data is done using both the fixed-effects term and the best linear unbiased predictor (BLUP) for the random-effects component. For a training fold \((X_{\mathrm{tr}}, Y_{\mathrm{tr}})\) and validation covariates \(X_{\mathrm{val}}\), we predict
\[
\hat{Y}_{\mathrm{val}}
=
\hat f(X_{\mathrm{val}})
+
\sigma_r^2 X_{\mathrm{val}} X_{\mathrm{tr}}^\top
\Sigma_{\theta,\mathrm{tr}}^{-1}
\bigl(Y_{\mathrm{tr}}-\hat f(X_{\mathrm{tr}})\bigr),
\qquad
\Sigma_{\theta,\mathrm{tr}}
=
\sigma_r^2 X_{\mathrm{tr}} X_{\mathrm{tr}}^\top + \sigma_e^2 I_n.
\]

The inclusion of both the fixed- and random-effects terms for prediction matters for model selection. If one predicts only with \(\hat f(X_{\mathrm{val}})\), then cross-validation can favor estimators that absorb confounding structure into $\hat f(\cdot)$, since the same confounding pattern is typically present in both training and validation folds. In that setting, conservative selection rules such as the one-standard-error rule can be helpful \citep{cevid2020spectral}. In our formulation, by contrast, the posterior random-effect term is available at prediction time and absorbs \(X\)-aligned residual structure. This makes standard cross-validation based on prediction error substantially better aligned with the goal of selecting the stopping time for $\hat f(\cdot)$. In Section~\ref{sec:quant_sim}, we show empirically that this procedure performs comparably to an oracle stopping rule based on the error with respect to the true target function \(f_0(\cdot)\).

\section{Extensions: Non-Gaussian Likelihoods and Nonlinear Confounding}
\label{sec:generalized_lava_classification}

We next extend our proposed spectral boosting method beyond the Gaussian likelihood. Our starting point is the generalized LAVA estimator of \citet{wang2025latent}, which replaces the unknown dense confounding contribution by a ridge-regularized bias proxy in the column space of the data matrix \(X\). We then reinterpret this proxy through an empirical-Bayes (EB) mixed-model view, in which the dense component is modeled as a Gaussian random effect.

\subsection{General likelihoods, generalized LAVA, and a mixed model interpretation}
\label{sec:classif_setup}

Let \(Y=(y_1,\ldots,y_n)^\top\) denote the observed outcomes and \(X \in \mathbb R^{n\times p}\) the covariates. Conditionally on a linear predictor \(\eta \in \mathbb R^n\), we assume independent observations with likelihood
\[
p(Y\mid \eta)=\prod_{i=1}^n p(y_i\mid \eta_i).
\]
Equivalently, one may specify a mean parameter \(\mu_i\) and a link function \(g\) through \(g(\mu_i)=\eta_i\). This includes, for example, Gaussian regression with identity link, Bernoulli likelihoods with logistic or probit links, and Poisson likelihoods with log link. We consider the confounded generalized model
\begin{equation}
\label{eq:logit_confounded_structural}
y_i \mid \eta_i \sim p(\,\cdot \mid \eta_i),
\qquad
\eta = f_0(X) + H\delta,
\qquad
X = H\Gamma + E.
\end{equation}

Generalized LAVA \citep{wang2025latent} replaces the unknown confounding contribution \(H \delta\) by a dense proxy in the column space of \(X\),
\begin{equation}
\label{eq:logit_proxy_bias}
\eta = X \beta + X b,
\end{equation}
where \(b \in \mathbb R^p\) is a dense bias parameter. Recall that, under strong and dense confounding, spurious structure is expected to concentrate along high-variance directions of \(X\), so the proxy \(X b\) is designed to absorb \(X\)-aligned variation. Writing
\[
\rho_i(\eta_i) := -\log p(y_i\mid \eta_i)
\]
for the negative log-likelihood contribution, generalized LAVA estimates \((\beta,b)\) via
\begin{equation}
\label{eq:genlava_objective_matrix}
(\hat{\beta},\hat{b}) \in \arg\min_{\beta,b}
\left\{
\frac{1}{n}\sum_{i=1}^n \rho_i\bigl((X \beta+X b)_i\bigr)
+ \lambda_1 \|\beta\|_1
+ \lambda_2 \|b\|_2^2
\right\}.
\end{equation}

We propose to estimate the signal component and the regularization strength by empirical Bayes, treating the dense proxy as a Gaussian random effect 
\(b \sim \mathcal N(0,\sigma_r^2 I_p)\).
Allowing for nonlinear functions, we write the predictor as
\(\eta = f + X b\),
where \(f \in \mathbb R^n\) may be linear, \(f=X \beta\), or nonlinear, \(f_i=f(x_i)\).
Empirical Bayes targets the marginal likelihood
\[
p(Y \mid f,\sigma_r^2)
=
\int p(Y \mid \eta=f+X b)\, \pi(b \mid \sigma_r^2)\, db,
\]
which is not available in closed form for general likelihoods.
We therefore use the Laplace approximation \citep{tierney1989fully}
\begin{equation} \label{eq:laplace_marginal_like_classif_short}
\log p(Y \mid f,\sigma_r^2) \approx \log p(Y \mid \eta=f+X \hat{b}) - \frac{1}{2\sigma_r^2}\|\hat{b}\|_2^2 
-\frac{1}{2}\log\det\!\Bigl(\sigma_r^2X^\top W(\hat{b},f)X+I_p\Bigr) +\mathrm{const},
\end{equation}
where \(\hat{b}\) is the mode
\begin{equation}
\hat{b}(f,\sigma_r^2) = \arg\max_{b} \log p(Y \mid \eta=f+X b) - \frac{1}{2\sigma_r^2}\|b\|_2^2,
\end{equation}
and
\[
W(\hat b,f)
=
-\nabla_\eta^2 \log p(Y\mid \eta)\big|_{\eta=f+X\hat b}.
\]
More generally, Laplace-type approximations can also be constructed for non-smooth likelihoods by replacing the Hessian with an appropriate local curvature \citep{nava2026laplaceapproximationsmixedeffectsgaussian}.

Note that, for fixed \((f,\sigma_r^2)\), the mode \(\hat{b}(f,\sigma_r^2)\) is the dense generalized LAVA proxy. Unlike the Gaussian case, it is not generally available in closed form, since the log-likelihood need not be quadratic. It is instead computed by Newton's method, equivalently by ridge-regularized iteratively reweighted least squares (IRLS): at a current iterate \(b^{(t)}\), let \(\eta^{(t)}=f+Xb^{(t)}\), define the score \(s^{(t)}=\nabla_\eta \log p(Y\mid \eta)\vert_{\eta=\eta^{(t)}}\), and set \(W^{(t)}=-\nabla_\eta^2 \log p(Y\mid \eta)\vert_{\eta=\eta^{(t)}}\). The second-order Taylor expansion of the log-likelihood at \(\eta^{(t)}\) yields the IRLS pseudo-response
\[
z^{(t)}
=
\eta^{(t)}+\bigl(W^{(t)}\bigr)^{-1}s^{(t)}.
\]
The Newton step is therefore obtained by solving the weighted ridge system
\begin{equation}
\label{eq:b_update_linear_system_clean}
\left(X^\top W^{(t)} X + \sigma_r^{-2}I_p\right)b^{(t+1)}
=
X^\top W^{(t)}\left( z^{(t)}-f\right).
\end{equation}
This shows that each Newton step is a ridge-penalized weighted least-squares projection of the current working residual \( z^{(t)}-f\) onto \(\operatorname{col}(X)\). This update allocates \(X\)-aligned variation to the dense proxy. The full mode \(\hat{b}(f,\sigma_r^2)\) is obtained by iterating these weighted ridge updates until convergence. This makes the connection to generalized LAVA precise at the level of the posterior mode: for fixed \((f,\sigma_r^2)\), the dense component is still ridge-regularized, but now after a local quadratic approximation of the likelihood.

Empirical Bayes selects \((f,\sigma_r^2)\) by optimizing the Laplace-approximated log-marginal likelihood \eqref{eq:laplace_marginal_like_classif_short}. In our implementation, we update the nonlinear signal \(f\) by gradient boosting with respect to the corresponding negative Laplace-approximated log-marginal loss; explicit expressions of the derivatives are given in \citet{9759834}. This yields a stagewise procedure in which boosting updates refine \(f\) given the current proxy, and \(\sigma_r^2\) is updated by one-dimensional optimization of \eqref{eq:laplace_marginal_like_classif_short}. The variance component therefore controls again how much \(X\)-aligned structure is allocated to the dense proxy versus the nonlinear signal. We demonstrate the effectiveness of the proposed approach for binary classification in Section \ref{sec:classif_sim}.

\subsection{Nonlinear kernel-based deconfounding}
\label{sec:non-linear_extension}

We now extend the spectral deconfounding framework beyond the linear confounding regime. Consider the structural model
\[ X = \phi(H) + E, \qquad Y = f_0(X) + g(H) + \varepsilon, \]
where $H \in \mathbb R^{n \times q}$ is a latent confounder, $\phi$ is a (possibly nonlinear) factor map, and $g(H)$ is an arbitrary latent confounding contribution. The observable regression function can again be written as
\[
\mathbb E[Y \mid X] = f_0(X) + m(X), \qquad m(X) := \mathbb E[g(H)\mid X].
\]
In the linear-Gaussian setting, where $\phi(H)=H \Gamma$ and $g(H)=H \delta$, the conditional expectation $\mathbb E[H\mid X]$ is linear in $X$. Consequently,
$
m(X)=X b
$
for some coefficient vector $b$, so the confounding bias lies exactly in the linear span of $X$.

If both $\phi(H)$ and $g(H)$ admit a common finite-dimensional basis expansion
\[
\phi(H)=\Gamma B(H),
\qquad
g(H)=\delta^\top B(H),
\]
for some feature map $B:\mathbb R^q \to \mathbb R^K$ with $K \ll p$, then the problem reduces to a linear confounding model in the expanded latent variables $B(H)$. In this case, linear spectral deconfounding remains sufficient \citep{Ulmer03042026}. In contrast, when no such shared finite representation exists, $m(X)$ is, in general, a nonlinear function of $X$. Importantly, the practically relevant bias is not $m(X)$ itself, but its projection onto the model class used to estimate $f_0$. Under the squared loss, the fitted predictor absorbs only the component of $m(X)$ that lies in the hypothesis space. In particular, for linear models, the relevant bias reduces to the linear projection $X b$, while for nonlinear learners, richer components of $m(X)$ may be representable and thus absorbed. This highlights a limitation of linear spectral transformations: they are designed to attenuate $X b$, but do not directly address nonlinear components of $m(X)$ that become relevant for more expressive learners.

To account for nonlinear confounding, we extend the spectral transformation to a more flexible reproducing kernel Hilbert space (RKHS). Instead of modeling the dense component via a random effect with covariance $X X^\top$, we consider a kernel-based random effect with covariance matrix $K \in \mathbb R^{n \times n}$ with entries
\[
K_{ij} = k_{\theta}(x_i, x_j),
\]
where $k_{\theta}(\cdot,\cdot)$ is a positive definite kernel with hyperparameters $\theta$.
Analogously to the linear case, this induces a spectral transformation based on the eigendecomposition of $K$. The resulting spectral loss is
\[
 L_F(f) 
= \frac{1}{2}\| F (Y - f(X)) \|_2^2,
\qquad
F = (\sigma_r^2 K + \sigma_e^2 I_n)^{-1/2}.
\]

The mechanism of this approach can be understood through a simple spectral argument. Let $(\lambda_j, v_j)$ denote the eigenpairs of $K$, and write the confounding signal on the sample as
\[
m := (m(x_1),\dots, m(x_n))^\top = \sum_{j=1}^n \alpha_j v_j.
\]
Since $F$ acts diagonally in this basis, $F v_j = (\sigma_r^2 \lambda_j + \sigma_e^2)^{-1/2} v_j$, we obtain
\begin{equation}
\|F m\|_2^2
=
\sum_{j=1}^n \frac{\alpha_j^2}{\sigma_r^2 \lambda_j + \sigma_e^2}.
\label{eq:kernel_shrink_m}
\end{equation}

\noindent
Thus, spectral deconfounding is effective whenever the confounding signal $m$ is concentrated on eigenvectors associated with large eigenvalues of $K$, since these directions are downweighted most strongly. Kernel eigenvectors capture dominant \emph{nonlinear modes of variation} in the data. Equation \eqref{eq:kernel_shrink_m} shows that the method attenuates precisely those components of $m(X)$ that align with these dominant modes. Therefore, the kernel extension succeeds when the predictable confounding term is \emph{spectrally simple} with respect to the geometry of $X$. Finally, the kernel hyperparameters \(\theta\) can be estimated by empirical Bayes through maximization
of the marginal likelihood, exactly as in the linear random-effects model. 

\subsection{Software and reproducibility}
\label{sec:software}

The methods presented in this article are implemented in the \texttt{GPBoost} library, written in C++ with high-level Python and R interfaces; see \url{https://github.com/fabsig/GPBoost}. Code for reproducing the experiments in this paper is available at \url{https://github.com/AndreaThomNava/spectral-boosting}. All numerical experiments were run on a machine with an Intel Xeon E3-1284L v4 processor at 2.90\,GHz and 31\,GB of RAM.

\section{Simulation Studies}
\label{sec:experiments}
To investigate the proposed spectrally deconfounded gradient boosting algorithm, we follow the data-generating process of \citet{Ulmer03042026}. We begin by simulating a latent confounding structure that induces correlation in the covariates and enters the outcome through a shared linear effect. Specifically, we generate
$
X = E + H \Gamma,
E \in \mathbb R^{n\times p},\;
H \in \mathbb R^{n\times q},\;
\Gamma \in \mathbb R^{q\times p},
$
and define the latent predictor
$
\eta := f_0(X) + H \delta \in \mathbb R^n,
\delta \in \mathbb R^{q}.
$
The true causal function \(f_0\) is constructed using a Fourier basis:
\begin{equation}
f_0(X) := \sum_{j=1}^{p} \mathbbm{1}_{\{j \in J_s\}} \sum_{k=1}^{K} \left( a_{j,k} \cos(0.2k \cdot x_j) + b_{j,k} \sin(0.2k \cdot x_j) \right),
\label{eq:conf_nn_model}
\end{equation}
where \(J_s\) is a randomly selected subset of \(\{1, \ldots, p\}\) of size four, representing the true causal features. The entries of \(E\), \(H\), \(\delta\), and \(\Gamma\) are sampled independently from \(\mathcal N(0, 1)\). For the four causal parents, we sample the coefficients \(a_{j,k}\) and \(b_{j,k}\) uniformly on \([-1, 1]\), with the number of basis functions fixed at \(K = 2\). We then obtain the observed response according to the learning task. In the regression setting, we add independent Gaussian noise,
$ Y = {\eta} + {\varepsilon},
{\varepsilon} \sim \mathcal N(0, 0.1^2\, I_n).$
In the classification setting, we map the latent predictor through the logistic link and sample Bernoulli labels accordingly.
\subsection{Qualitative results}

The goal of this section is to illustrate how spectral boosting changes the learned signal under latent confounding in terms of feature importance and recovery of the shape of the true causal function. We generate data from the confounded nonlinear model \eqref{eq:conf_nn_model} with \(n=1{,}000\) training observations, \(p=100\) covariates, and \(q=20\) latent confounders for the regression setting, and \(n = 10{,}000\), \(p = 50\), and \(q = 20\) for the classification setting. We fit (i) our proposed spectral boosting method and (ii) traditional gradient tree boosting, both with a learning rate of \(\nu = 0.05\) and a tree depth of three. The variance components entering the covariance matrix \(\Sigma_{\theta}\) are selected by empirical Bayes, while the number of boosting iterations is chosen by 2-fold cross-validation, using as selection criterion the mean squared error and the negative log-likelihood for the regression and classification settings, respectively.

\paragraph{Regression setting.}
The results in Figure~\ref{fig:imp_pdp_regression} suggest two qualitative advantages of spectral boosting over standard boosting under confounding. First, the split-based feature-importance analysis in the left plot shows that spectral boosting assigns substantially more importance to the four causal features, whereas standard boosting spreads importance more diffusely across many features, consistent with a stronger reliance on confounding-driven associations. 
Second, the partial dependence plots in the right panel show that spectral boosting better recovers the target shape on the causal variables and is less prone to estimating nonzero effects on non-causal variables. The displayed variables are selected by average split-gain importance rank across spectral boosting and standard boosting, separately among causal and non-causal covariates.
\begin{figure}[ht!]
    \centering
    \includegraphics[width=0.95\linewidth]{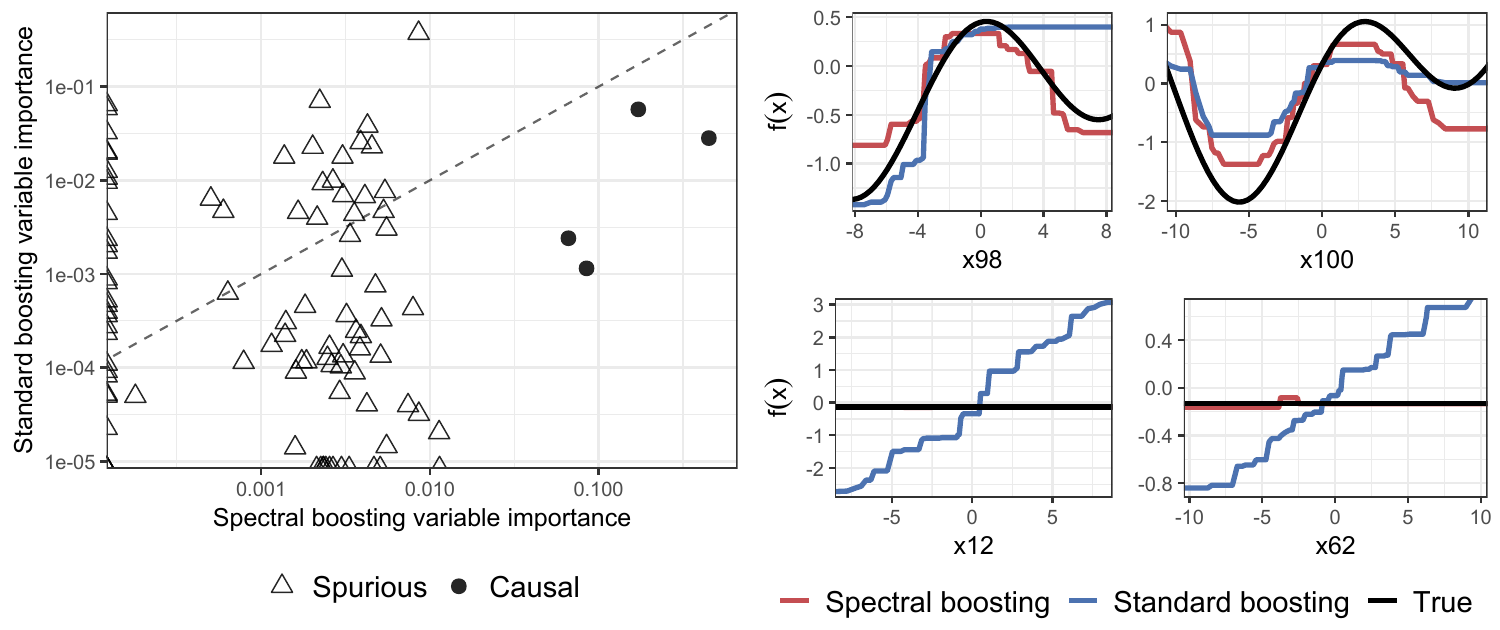}
    \caption{Regression setting.
    Left: Log--log scatter of split-based variable importance; 
    causal features are highlighted. Right: partial dependence plots for two causal and two non-causal covariates with the highest average variable importance rank across the two methods. The top row shows causal covariates (x98 and x100); the bottom row shows non-causal covariates (x12 and x62), for which the ground-truth effect is zero.
    }
    \label{fig:imp_pdp_regression}
\end{figure}

\paragraph{Classification setting.}
Figure~\ref{fig:imp_pdp_classification} shows a qualitatively similar pattern in the classification setting, although the differences are somewhat less pronounced than in regression. This is expected, since recovering the underlying signal is generally more difficult in classification even in the absence of confounding. As in the regression case, the feature-importance analysis shows that spectral boosting places more emphasis on the four truly causal covariates. The partial dependence plots in the right panel indicate that the two methods recover broadly similar effects on the causal variables, while spectral boosting is less prone to estimating nonzero effects on non-causal variables. The displayed variables are selected using the same criterion as in Figure~\ref{fig:imp_pdp_regression}.

\begin{figure}[ht!]
    \centering
    \includegraphics[width=0.95\linewidth]{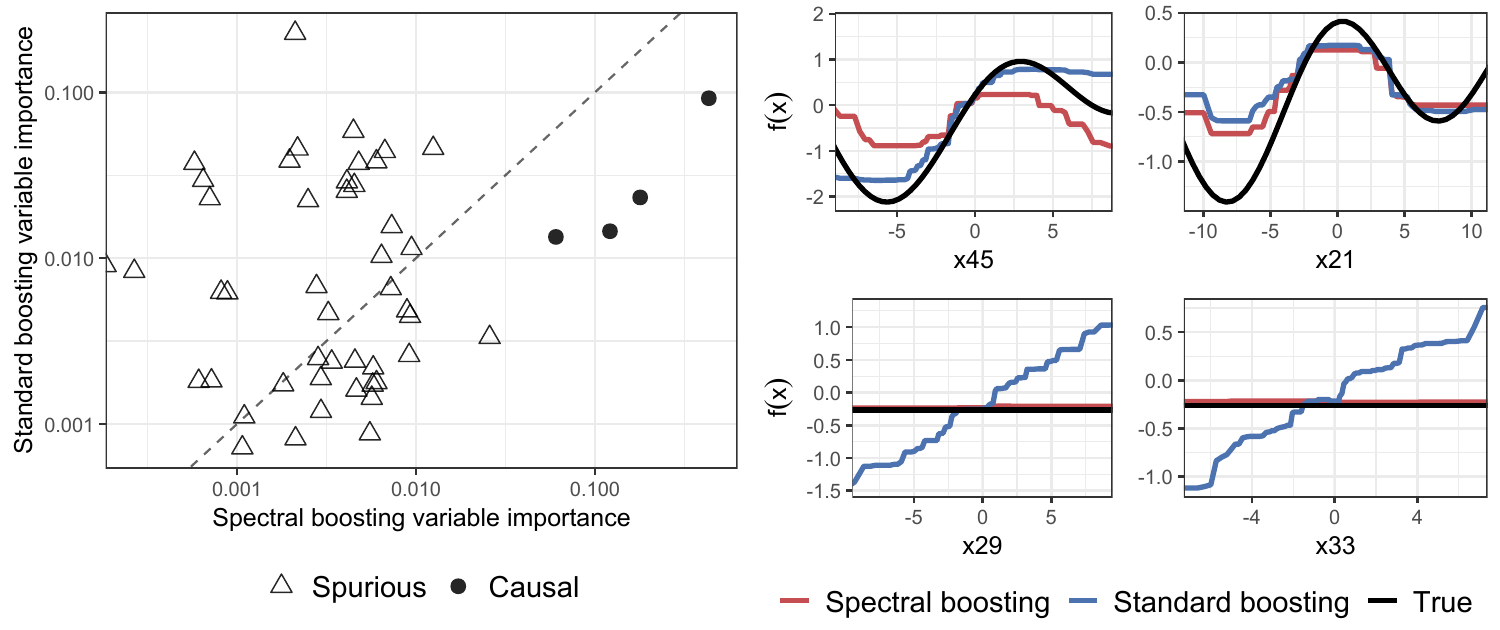}
    \caption{Classification setting. Left: Log--log scatter of split-based variable importance; 
    causal features are highlighted. Right: partial dependence plots for two causal and two non-causal covariates with the highest average variable importance rank across the two methods. The top row shows causal covariates (x45 and x21); the bottom row shows non-causal covariates (x29 and x33), for which the ground-truth effect is zero.
    }
    \label{fig:imp_pdp_classification}
\end{figure}

\subsection{Quantitative results}\label{sec:quant_sim}

To quantify the accuracy in estimating the causal function, we follow the simulation setting described above. In each repetition, we draw a causal function \(f_0\) from the Fourier basis expansion and generate data from the confounded model. Unless stated otherwise, the default dimensions are \(n=1{,}000\), \(p=250\), and \(q=20\) for the regression setting, and \(n=5{,}000\), \(p=50\), and \(q=20\) for the classification setting. In all cases, only four covariates are active, i.e., only four variables influence the response. We then vary one dimension at a time---the sample size \(n\), the number of covariates \(p\), and the latent confounder dimension \(q\)---while keeping the remaining quantities fixed at their default values. Each configuration is repeated 50 times, with the full data-generating process redrawn independently in every repetition. Performance is evaluated on a test set of size \(n_{\text{test}}=500\) using the mean squared error with respect to the true causal function:
\[
\mathrm{MSE}_f
:=
\frac{1}{n_{\text{test}}}
\sum_{i=1}^{n_{\text{test}}}
\bigl(f_0(x_{\text{test},i}) - \hat f(x_{\text{test},i})\bigr)^2.
\]

We benchmark the proposed spectrally deconfounded boosted trees (SpecBoost) against standard $L_2$-gradient-boosted trees (Boosting) and, in the regression setting, against random forests (Random Forest) using the \texttt{ranger} implementation \citep{wright2019package} and the spectrally deconfounded random forest of \citet{Ulmer03042026} (SDF). Both SpecBoost and Boosting are implemented using the \texttt{GPBoost} library \citep{sigrist2021gpboost}. This comparison isolates the effect of the spectral transformation within boosting and positions our method relative to the main existing nonlinear spectral deconfounding baseline. In classification, we report results for Spectral Boosting and Standard Boosting only, since SDF is designed for squared-loss regression and is not directly applicable to classification. In the regression setting, we additionally compare the runtime of the competing methods in order to assess the computational cost of spectral deconfounding.

\subsubsection{Regression setting}

Figure~\ref{fig:mse_grid} shows that the proposed SpecBoost method consistently achieves lower estimation error than standard boosting, and a similar pattern holds for SDF relative to standard random forests. The sample size has comparatively little effect on the relative ranking of the methods, whereas the number of features and the latent confounder dimension have a more pronounced impact. As the number of features increases, the benefits of spectral deconfounding become more evident, indicating that deconfounding is particularly useful in higher-dimensional settings. With respect to the latent confounder dimension, standard boosting performs best in the unconfounded case, as expected, but the loss relative to SpecBoost is small compared with the gains achieved by SpecBoost once confounding is present; see Appendix~\ref{app:no_confounding_regression} for a more thorough ablation in the absence of confounding. The results also suggest that the Trim transform used in SDF is less effective in regimes with relatively few features, in particular when the number of features is smaller than the confounder dimension (middle plot in Figure~\ref{fig:mse_grid}), or when the confounder dimension approaches the feature dimension (right plot in Figure~\ref{fig:mse_grid}).

\begin{figure}[ht!]
    \centering
    \includegraphics[width=1.\linewidth]{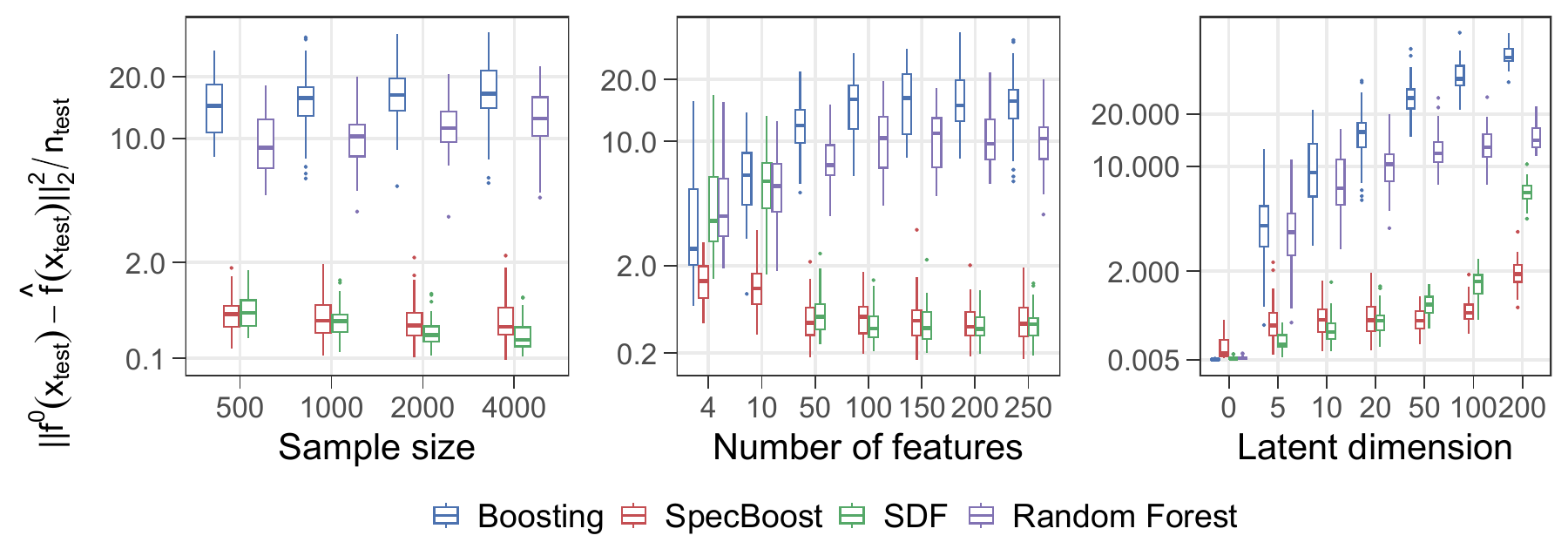}
    \caption{Regression setting. Boxplots of the mean squared error for the different methods across simulation scenarios, displayed on a \(\log(1+x)\) scale on the y-axis. 
    }
    \label{fig:mse_grid}
\end{figure}

Figure~\ref{fig:runtime} reports the runtimes across the same simulation settings. As expected, the spectral methods are computationally more demanding than their non-spectral counterparts, since they require additional matrix factorizations. At the same time, SpecBoost is several orders of magnitude faster than SDF across all scenarios. In particular, the runtime of SDF is strongly affected by the sample size, whereas our implementation scales substantially better in \(n\), making spectral deconfounding feasible for larger regression problems.

\begin{figure}[ht!]
    \centering
    \includegraphics[width=1\linewidth]{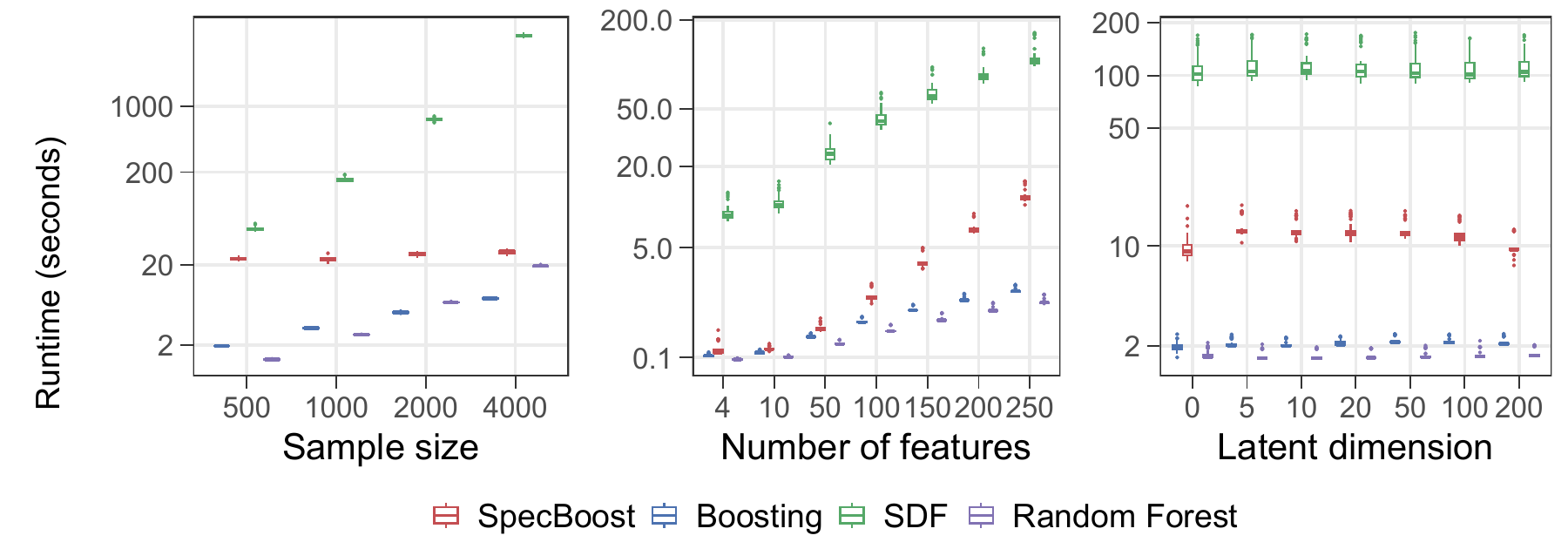}
    \caption{Regression setting.
    Boxplots of the runtime for the different methods across simulation scenarios,  displayed on a \(\log(1+x)\) scale on the y-axis. 
    }
    \label{fig:runtime}
\end{figure}

Figure~\ref{fig:iter_grid} compares the number of boosting iterations selected by cross-validation with an oracle stopping rule that minimizes the estimation error with respect to the true target function \(f_0\). The proposed cross-validation rule, which forms predictions using both the fixed-effects component and the posterior random effect, tracks the oracle stopping rule closely across all simulation scenarios. By contrast, standard boosting tends to overfit to the confounding structure and therefore selects too many iterations. 
The spectral method can be run for more iterations than standard boosting before overfitting to confounding becomes severe, but early stopping remains important: if the procedure is run too long, later iterations eventually begin to absorb confounding-driven residual variation.
As expected, the sample size has relatively little effect on the selected stopping time. In contrast, the number of features and the latent confounder dimension have a more visible impact.

\begin{figure}[ht!]
    \centering
    \includegraphics[width=1\linewidth]{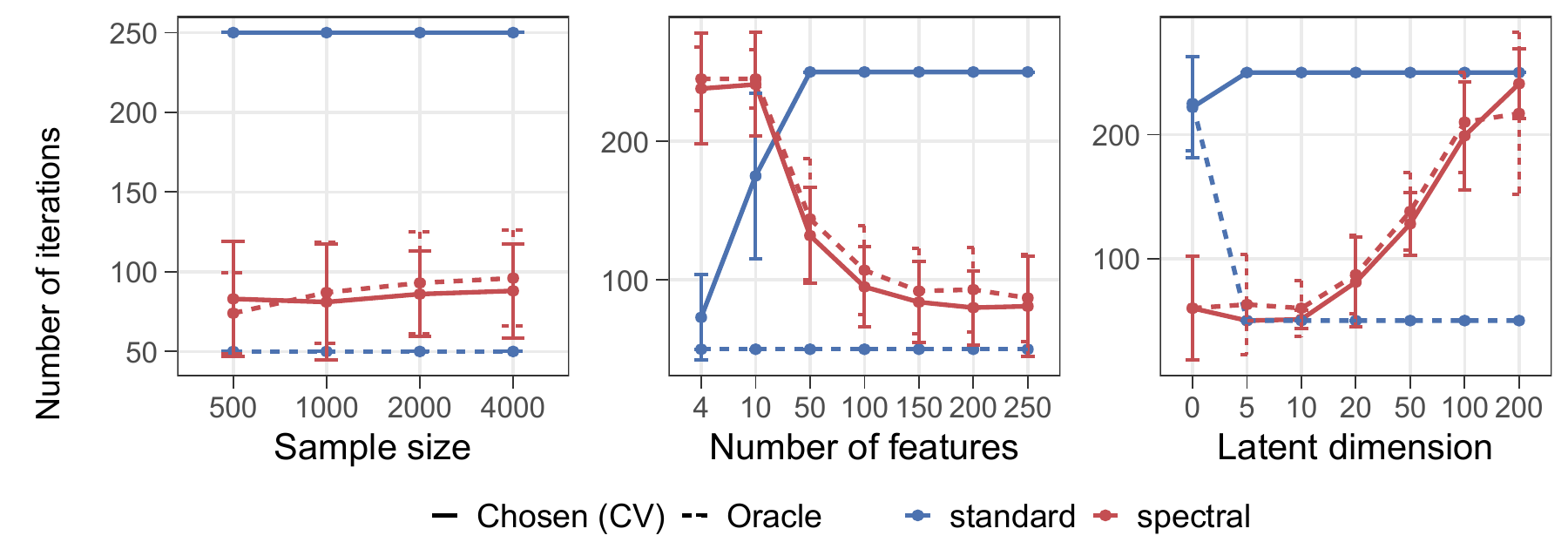}
    \caption{Regression setting.
    Number of boosting iterations selected by the proposed cross-validation rule
    (solid lines) and by the oracle rule with access to the true target function \(f_0\) (dashed lines).}
    
    \label{fig:iter_grid}
\end{figure}

Figure~\ref{fig:pars_grid} shows the estimated random-effect variance component \(\sigma_r^2\) selected by empirical Bayes. 
Since there is no ground-truth value for this parameter in the simulation design, we assess its behavior heuristically. In the absence of confounding, one would expect the random-effect variance to be close to zero, since there is no confounding-driven structure that needs to be absorbed by the random effect. The results suggest that the sample size has only a limited effect on the estimated variance component, whereas the number of features and the latent confounder dimension affect it more strongly. As the number of features increases, the estimated variance component decreases, which is consistent with the fact that under dense confounding the bias becomes smaller in high dimensions. In addition, empirical Bayes selects the smallest variance component in the unconfounded setting (latent dimension equal zero) and increases it as the latent confounder dimension grows, thereby allowing the random effect to absorb a larger fraction of the \(X\)-aligned confounding structure.
\begin{figure}[ht!]
    \centering
    \includegraphics[width=1\linewidth]{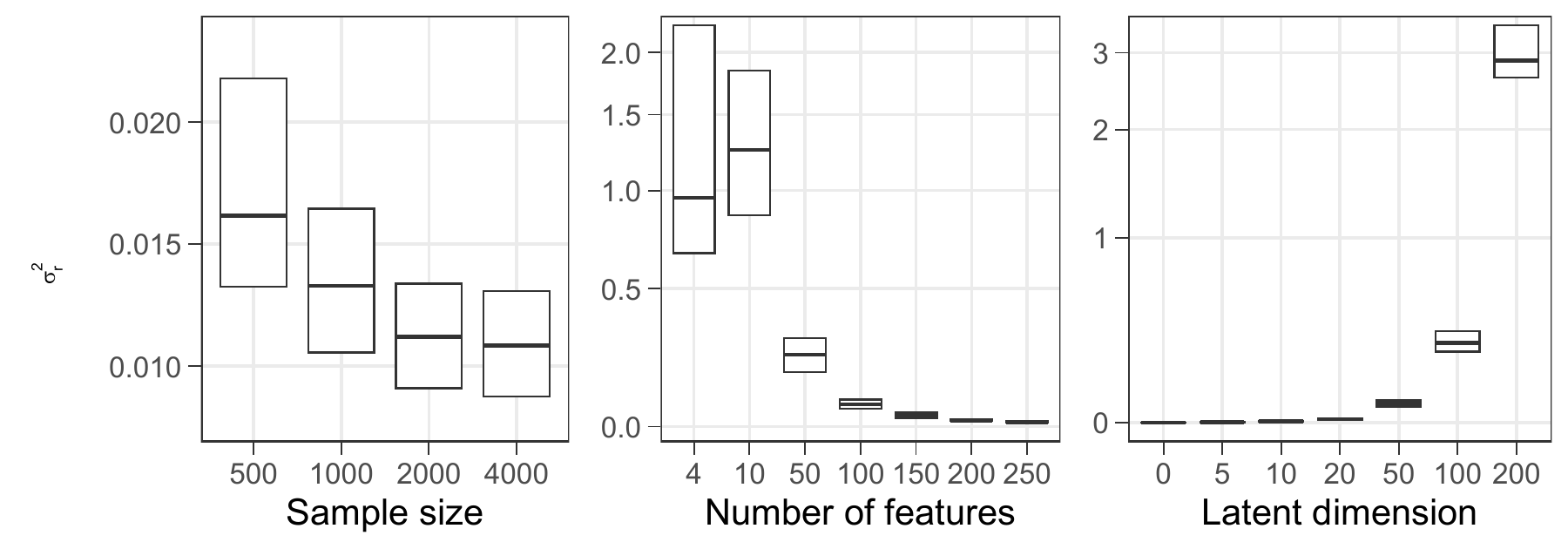}
   \caption{Regression setting.
    Boxplots of the estimated random-effect variance \(\sigma_r^2\),  displayed on a \(\log(1+x)\) scale on the y-axis. Larger values correspond to weaker shrinkage of the random effect.
    }
    \label{fig:pars_grid}
\end{figure}

\subsubsection{Classification setting}
\label{sec:classif_sim}

Figure~\ref{fig:mse_grid_2} reports the test mean squared error for the classification scenarios. The results are broadly consistent with those in the regression setting: although the differences are less pronounced than in regression, the proposed SpecBoost method still improves over standard boosting in most settings. Empirical Bayes also behaves reasonably in the classification setting, and the proposed rule for selecting the number of boosting iterations continues to track the oracle stopping rule well; see Figures~\ref{fig:iter_grid_2} and \ref{fig:pars_grid_2} in Appendix \ref{class_appd}.

\begin{figure}[ht!]
    \centering
    \includegraphics[width=1\linewidth]{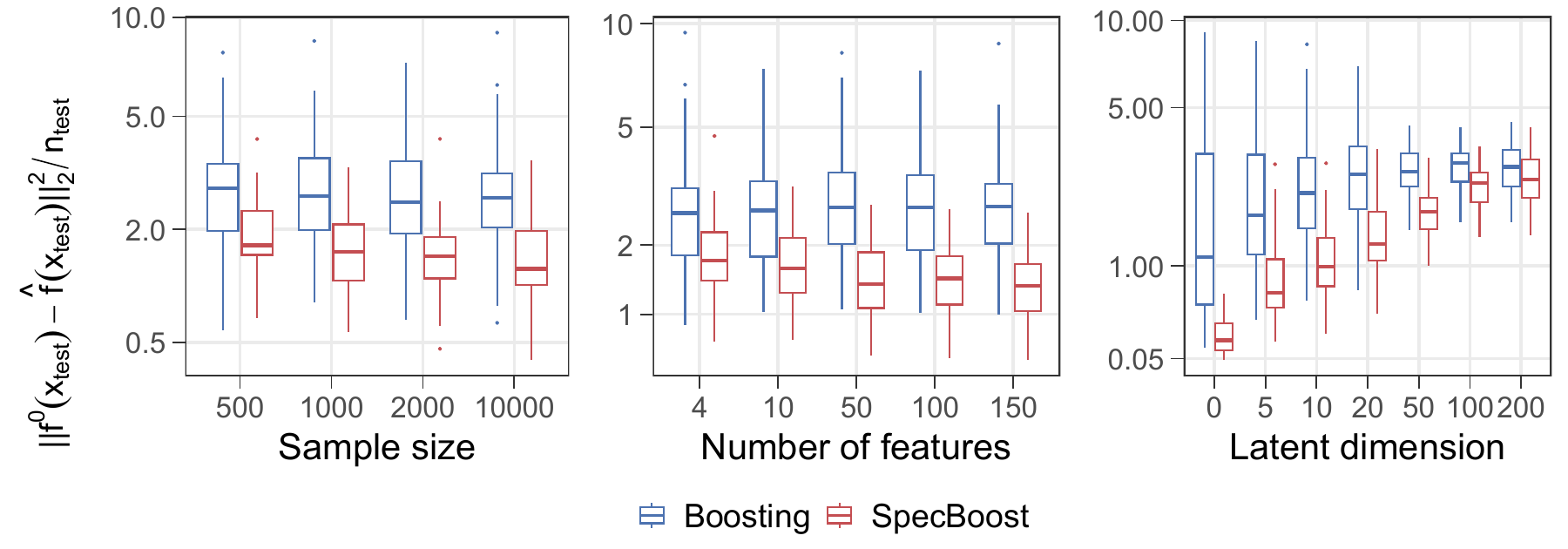}
    \caption{Classification setting.
    Boxplots of the mean squared error for the different methods across simulation scenarios,  displayed on a \(\log(1+x)\) scale on the y-axis. 
    }
    \label{fig:mse_grid_2}
\end{figure}

\subsection{Robustness of empirical-Bayes spectral boosting to violations of assumptions}
\label{sec:empirical bayes}

As discussed above, the spectral transformation underlying our method depends on a shrinkage parameter $\lambda_2$, which in the mixed-model formulation corresponds to the ratio of variance components $n\lambda_2 = \sigma_e^2 / \sigma_r^2$. In this section, we investigate the performance and robustness of the empirical Bayes estimate 
and that of a fixed rule proposed by \citet{cevid2020spectral}.

\paragraph{Fixed spectral rule.}
Following \citet{cevid2020spectral}, the regularization parameter $\lambda_2$ can be set as a function of the singular values of the design matrix $X$. In particular, the proposed rule selects $\lambda_2 = \frac{1}{n}d^2_{\lfloor \text{min}(n,p)/2 \rfloor}$. This choice is motivated by asymptotic arguments and implicitly assumes that the magnitude of the singular values of $X$ reflects the strength of the confounding.

\paragraph{Empirical Bayes.}
In contrast, the empirical Bayes approach estimates $n\lambda_2 = \sigma_e^2 / \sigma_r^2$ from the data by maximizing the marginal likelihood 
in Equation~\ref{eq:eb_objective}. Importantly, this criterion depends on the residual $Y - \hat{f}(X)$, and therefore potentially adapts to the observed magnitude of the confounding component. In particular, the empirical Bayes estimate reflects the scale of the bias term through its contribution to the residual, rather than relying solely on the geometry of $X$.

\subsubsection{Robustness to violations of the dense-confounding assumption}
Our approach relies on the dense-confounding assumption, namely that each latent confounder affects a large proportion of the observed covariates. We therefore investigate the robustness of the method when this assumption is progressively violated. We consider the default simulation setting with $n=1000$, $p=250$, and $q=20$, but vary the number $s$ of covariates affected by the confounders. Specifically, for each $s \in \{4,10,50,100,200,250\}$, we randomly select $s$ columns of the loading matrix $\Gamma$ and generate their entries independently from the same Gaussian distribution used in the baseline experiments, while setting the remaining $250-s$ columns of $\Gamma$ to zero. Consequently, only the selected covariates are influenced by the latent confounders.

Figure~\ref{fig:dense_robustness} shows that spectral boosting remains robust to violations of the dense-confounding assumption. As expected, the performance advantage gradually decreases as the confounding becomes sparser. Nevertheless, the proposed method remains competitive even when only a small subset of covariates is affected by the confounders. Moreover, under the strongest violations of the dense-confounding assumption, empirical-Bayes spectral boosting consistently outperforms both spectrally deconfounded random forests and the spectral boosting version using the fixed rule. This suggests that the empirical Bayes tuning strategy may provide additional robustness beyond the fixed asymptotic choice of the shrinkage parameter, although a more detailed theoretical understanding of this behavior remains an interesting direction for future work.
\begin{figure}[ht!]
    \centering
    \includegraphics[width=1\linewidth]{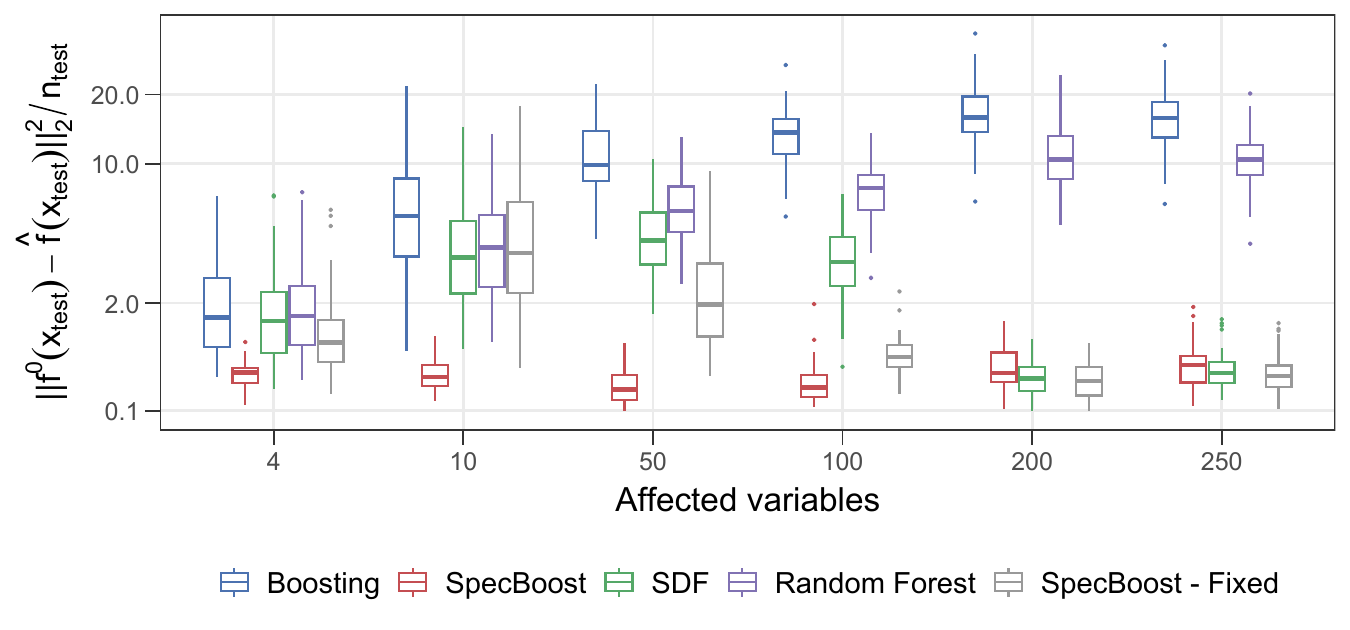}
    \caption{Robustness to violations of the dense-confounding assumption in the regression setting. The latent confounder affects only a subset of the $p=250$ covariates. The x-axis denotes the number of affected variables. Boxplots show the test mean squared error on a \(\log(1+x)\) scale.}
    \label{fig:dense_robustness}
\end{figure}

\subsubsection{Robustness to design misspecification}
We next investigate robustness under design misspecification. In the baseline setting, the design is generated as $X = H \Gamma + E$, where $E$ is isotropic Gaussian noise. To decouple the spectrum of $X$ from the strength of confounding, we now generate $X$ and $E$ as
\[
X = H \Gamma + E, 
\qquad
E = p^{2/3}\,  Z_E B_E / \sqrt{p} +  W_E,
\]
where $ Z_E \in \mathbb R^{n \times p}$, $B_E \in \mathbb R^{p \times p}$, and $\ W_{E} \in \mathbb R^{n \times p}$ have independent standard Gaussian entries. The term $Z_E B_E / \sqrt{p}$ has operator norm of order $O(\sqrt{p})$ when $n$ and $p$ grow at the same rate. 
Under Gaussian latent factors with fixed confounder dimension \(q\), the confounding component \(H\Gamma\) has operator norm of order \(O(\sqrt n\,\sqrt p)=O(p)\). The additional scaling by \(p^{2/3}\) inflates the noise component to order \(O(p^{7/6})\).

As a result, the spectrum of $X$ is driven primarily by the noise term, and the magnitude of its singular values is no longer informative about the strength of confounding. In contrast, the relevant quantity governing the bias is the projection of the confounding component onto the model class, which is reflected in the residual $Y - \hat f(X)$. Since empirical Bayes explicitly uses this residual to estimate $\lambda_2$, it adapts to the actual magnitude of the confounding present in the data. The fixed spectral rule, by contrast, depends only on the singular values of $X$ and therefore works less well when these singular values are dominated by variation unrelated to confounding. This leads to improved robustness of the empirical Bayes approach when the link between the spectrum of $X$ and the confounding strength is violated, as shown in Figure~\ref{fig:spiked_eb}.
\begin{figure}[ht!]
    \centering
    \includegraphics[width=1\linewidth]{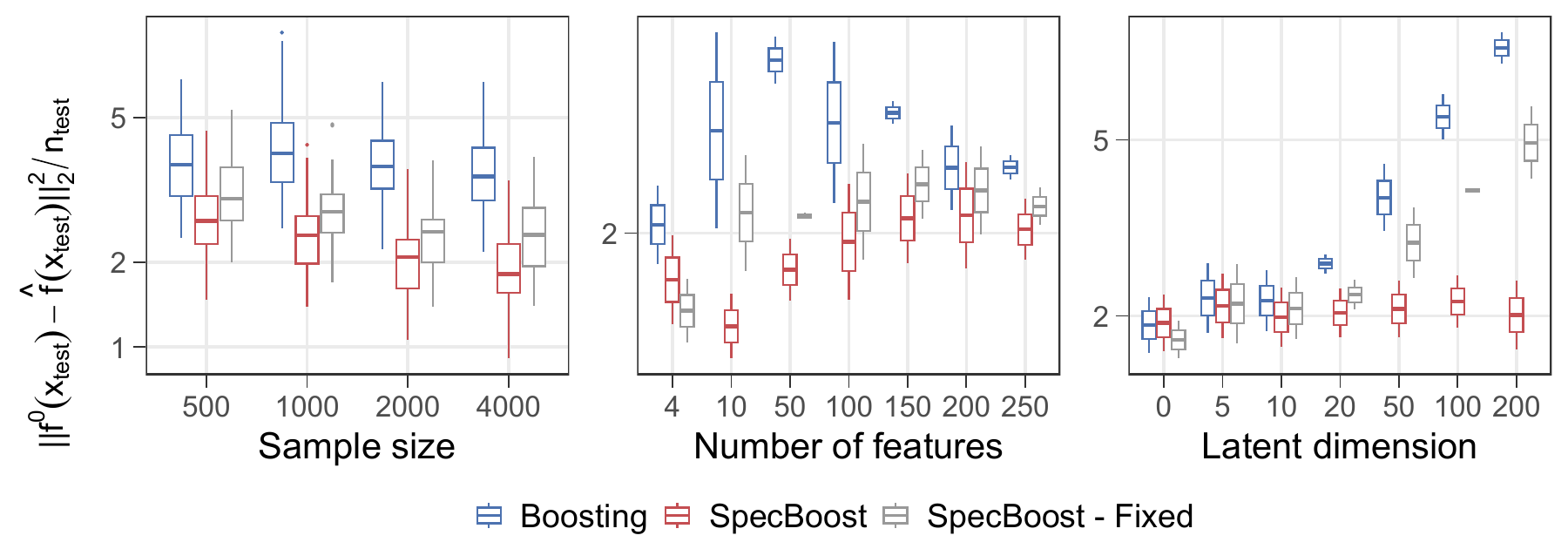}
    \caption{Design-misspecified regression setting.
Comparison of empirical-Bayes spectral tuning with the fixed rule of \citet{cevid2020spectral} when the spectrum of \(X\) is no longer dominated by the confounding component.
Boxplots show the test mean squared error on a \(\log(1+x)\) scale.}
    \label{fig:spiked_eb}
\end{figure}

\subsubsection{Well-specified setting}
We also consider a setting in which the model is correctly specified. In this regime, both the fixed rule and the empirical Bayes estimate yield comparable performance, as shown in Figure~\ref{fig:correct_eb}. The fixed rule performs slightly worse in low-dimensional regimes, where it tends to under-regularize. This behavior is expected, as the rule is derived under high-dimensional asymptotics.
\begin{figure}[ht!]
    \centering
    \includegraphics[width=1\linewidth]{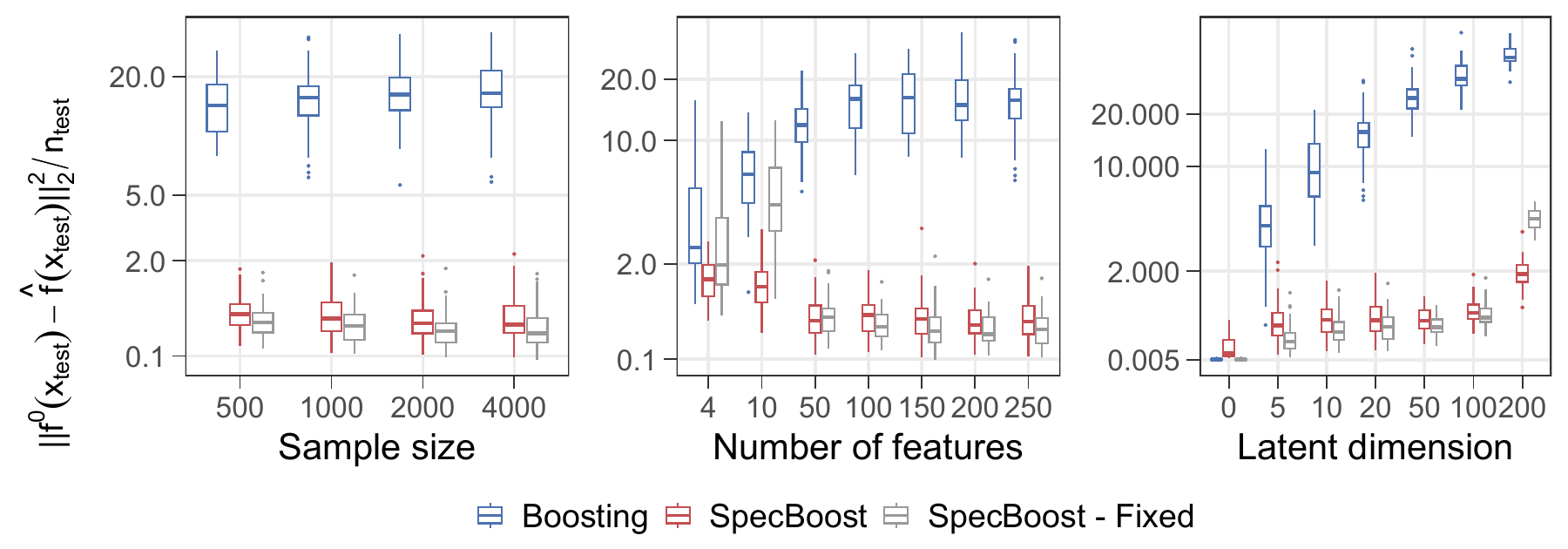}
    \caption{Well-specified regression setting.
    Comparison of empirical Bayes tuning of the spectral shrinkage with the fixed rule of \citet{cevid2020spectral}.
    Boxplots show the test mean squared error across simulation scenarios, displayed on a \(\log(1+x)\) scale.
    }
    \label{fig:correct_eb}
\end{figure}

\subsection{Nonlinear confounding}\label{sec:non-linear_sim}

We next consider the nonlinear confounding setting introduced in Section \ref{sec:non-linear_extension} and investigate the performance of the proposed kernel extension of spectral deconfounding in two experimental scenarios. Throughout this subsection, we use a Gaussian radial basis function (RBF) kernel. In the first experiment, all hyperparameters are estimated by empirical Bayes. In the second experiment, by contrast, we fix the random-effect variance using the rule of \citet{cevid2020spectral} and set the Gaussian kernel length scale via the median heuristic \citep{garreau2018median} since this scenario is more challenging: the leading kernel principal components represent a highly nonlinear structure, while in the first experiment, the confounding still lies in a linear subspace.

\subsubsection{Linear factor model with nonlinear outcome confounding}
We construct a family of confounding mechanisms that interpolates between linear and nonlinear regimes for the confounder $g(H)$. Let \(H_1\in\mathbb R^n\) denote the first column of \(H\), with entries
\(h_{i1}\stackrel{\mathrm{i.i.d.}}{\sim}\mathcal{N}(0,1)\), and define
\[
u_\alpha := g(H) 
=
\sqrt{1-\alpha}\,H_1
+
\sqrt{\alpha}\,(H_1^2-1),
\qquad \alpha\in[0,1].
\]
The two components in this construction are the first two Hermite polynomials, which are orthogonal under the standard Gaussian measure.
This orthogonality is useful because it separates first-order and second-order confounding effects. At \(\alpha=0\), the confounding is purely linear, $u_0 = H_1$;
so \(m(X)=\mathbb E[H_1\mid X]\) is linear in \(X\), and a ``correction" of the form \(X b\) is well matched. At \(\alpha=1\), the confounding is purely quadratic: $u_1 = H_1^2-1$.
In this case,
\[
m(X)=\mathbb E[H_1^2-1\mid X]
\]
is nonlinear in \(X\). Moreover, its linear projection vanishes. Indeed, for each \(X_j=\gamma_j H_1 + E_j\),
\[
\operatorname{Cov}(X_j, H_1^2-1)
=
\gamma_j \operatorname{Cov}(H_1,H_1^2-1)
+
\operatorname{Cov}(E_j,H_1^2-1)
=
0,
\]
since \(E_j\) is independent of \(H_1\) and \(\mathbb E[H_1(H_1^2-1)]=0\). Hence the best linear approximation to \(m(X)\) is zero, so linear spectral deconfounding is ineffective in this regime by construction. This experiment therefore isolates two important points: (i) the impact of confounding depends on how much of \(m(X)\) lies in the model class of the learner; (ii) a linear correction can only remove its best linear approximation \(X b\), whereas nonlinear learners may still absorb additional components of \(m(X)\).

We set \(q=20\), \(p=50\), and \(n=1000\) and generate $X, H$ and $f_0$ as above, and repeat the procedure $K=25$ times. The nonlinear confounding path is driven by the first latent factor \(H_1\), while the remaining latent coordinates enter through the factor model for \(X\). For each value of \(\alpha\), we construct \(u_\alpha\) as above and then rescale it so that its standard deviation is \(\tau=1.5\) times the standard deviation of the signal \(f_0(X)\). The response is generated as $
Y = f_0(X) + u_\alpha + \varepsilon,
$
with \(\varepsilon\) independent noise.
We evaluate five methods across values of \(\alpha \in \{0, 0.25, 0.5, 0.75, 1 \}\): standard boosting, boosting with a linear random effect (SpecBoost), boosting with an RBF kernel random effect (Kernel-SpecBoost), Random Forest \citep{wright2019package}, and the spectrally deconfounded random forest (SDF) \citep{Ulmer03042026}. To better interpret the results, we also measure how predictable the confounding signal \(u_\alpha\) is from \(X\) using linear regression, boosting, kernel ridge regression, and random forest. These diagnostics provide an empirical proxy for the projection of the induced bias \(m(X)\) onto the corresponding model classes.

Figure~\ref{fig:combined} summarizes the results. As expected, when \(\alpha=0\) the confounding is linear, and linear spectral deconfounding performs the best (left panel). As \(\alpha\) increases, however, the performance of the linear random-effect correction deteriorates, reflecting the fact that the linear nuisance \(X b\) becomes a progressively poorer approximation to the true bias \(m(X)\), as depicted by the right panel of Figure~\ref{fig:combined}. Importantly, this does not mean that linear spectral deconfounding becomes irrelevant immediately outside the exactly linear setting: even when \(m(X)\) is nonlinear, \(Xb\) remains its best linear approximation, and may still capture a non-negligible part of the confounding signal. Thus, linear spectral deconfounding can retain practical value beyond the idealized linear regime.

\begin{figure}[ht!]
    \centering
    \begin{subfigure}[t]{0.49\textwidth}
        \centering
        \includegraphics[width=\linewidth]{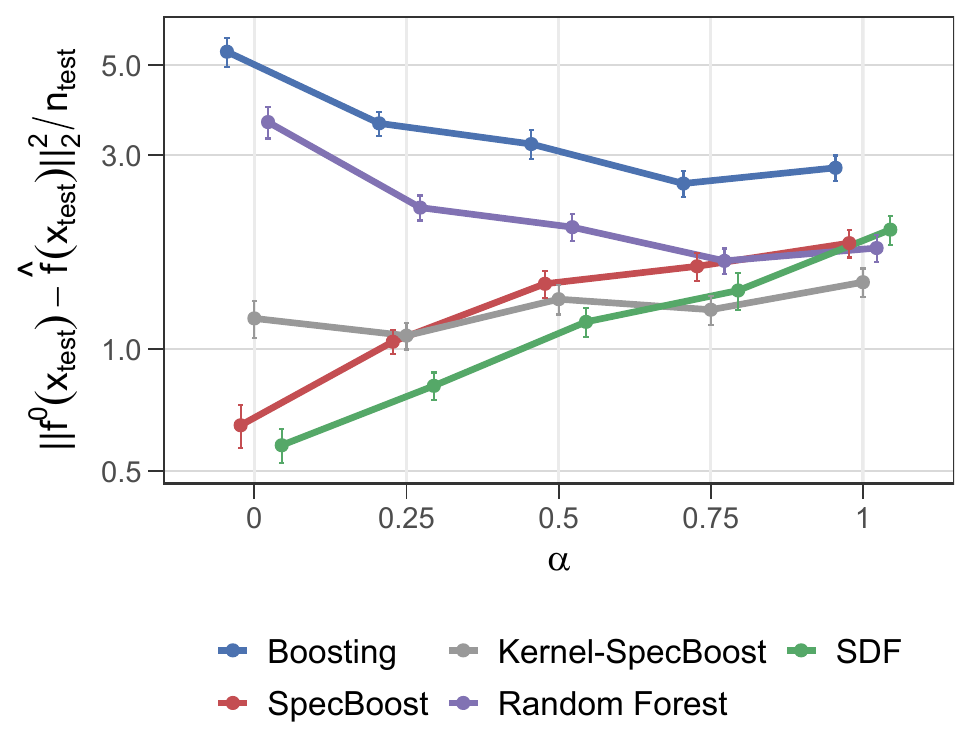}
        \label{fig:alpha_mse}
    \end{subfigure}
    \hfill
    \begin{subfigure}[t]{0.49\textwidth}
        \centering
        \includegraphics[width=\linewidth]{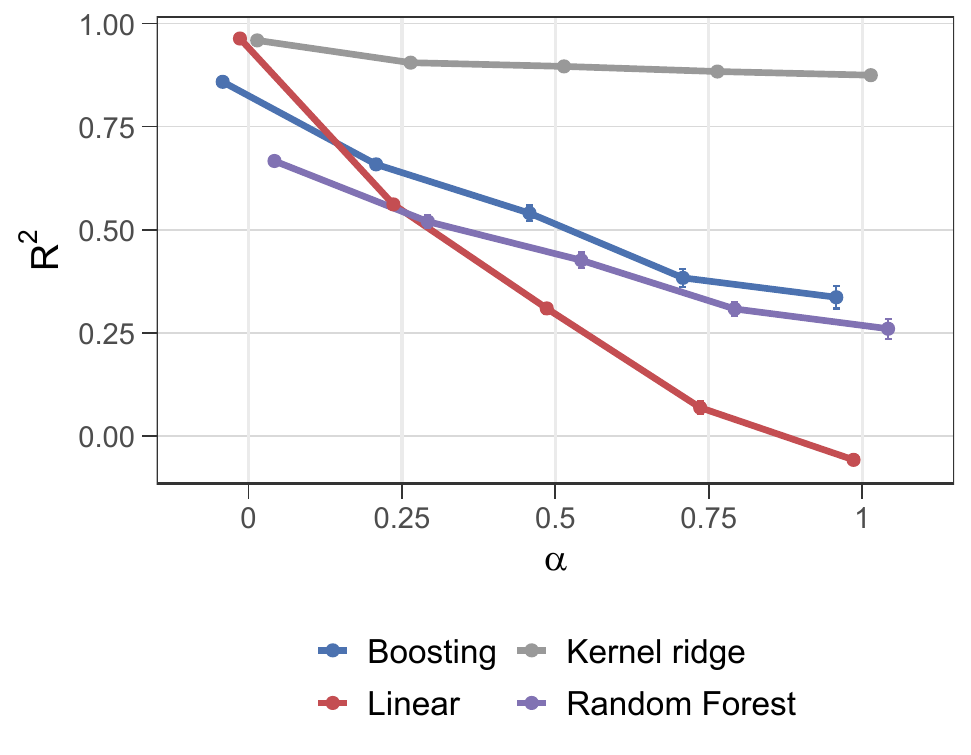}
        \label{fig:alpha_r2}
    \end{subfigure}
    \caption{Nonlinear outcome confounding.
    Performance as the confounding term becomes increasingly nonlinear. Vertical bars are twice the standard error of the mean performance. Left: test mean squared error. Right: predictability of the latent confounder, measured by the \(R^2\) from regressing \(u\) on \(X\).}
    \label{fig:combined}
\end{figure}

At \(\alpha=0\), the RBF random-effect model is also able to improve over standard boosting in this regime, consistent with the fact that its richer model class can represent the linear bias \(Xb\), although less efficiently. At intermediate values such as \(\alpha=0.5\), the two deconfounding methods perform comparably. At \(\alpha=1\), however, where the confounding has no linear component by construction, the RBF random-effect model outperforms the linear correction, implicit in both SpecBoost and SDF, whose performance becomes close to that of standard boosting. This suggests that a more expressive random effect can improve robustness by absorbing structured nonlinear confounding beyond the linear regime. 

The diagnostics plot in the right panel of Figure~\ref{fig:combined} helps explain these patterns. It reveals that strong nonlinear confounding may remain present even when its linear projection is weak. In particular, kernel ridge regression can often predict \(u_\alpha\) from \(X\) substantially better than linear regression, confirming that the induced bias \(m(X)\) is genuinely nonlinear. Yet boosted trees need not absorb all of this signal, suggesting that tree-based learners are relatively insensitive to certain dense nonlinear confounding mechanisms. A natural interpretation is that the confounding \(m(X)\) may have only a small projection onto the boosted-tree model class, even when it is large in a richer function class such as the RKHS induced by the RBF kernel.

\subsubsection{Nonlinear factor model and linear outcome confounding}
\label{sec:nonlinear_factor_model}

We now consider the setting in which the confounding arises from a nonlinear factor model in the covariates, while the confounder enters the outcome linearly. Specifically, we generate data as
\[
X = \phi(H) + E,
\qquad
Y = f_0(X) +  H\delta + \boldsymbol{\varepsilon},
\]
where ${H} \in \mathbb R^n$ and $\phi:\mathbb R\to\mathbb R^p$ is a nonlinear embedding applied row-wise. We construct $\phi(H)$ by first mapping $H$ to a one-dimensional spiral in $\mathbb R^2$,
$
Z = (H \cos H,\; H \sin H) \in \mathbb{R}^{n \times 2},
$
which is then linearly embedded row-wise into $\mathbb R^p$ via $ZB$, where $B \in \mathbb R^{2 \times p}$ is a random orthonormal embedding. This yields a data set that lies near a curved one-dimensional manifold in $\mathbb R^p$.
\begin{figure}[ht!]
    \centering
    \includegraphics[width=0.85\linewidth]{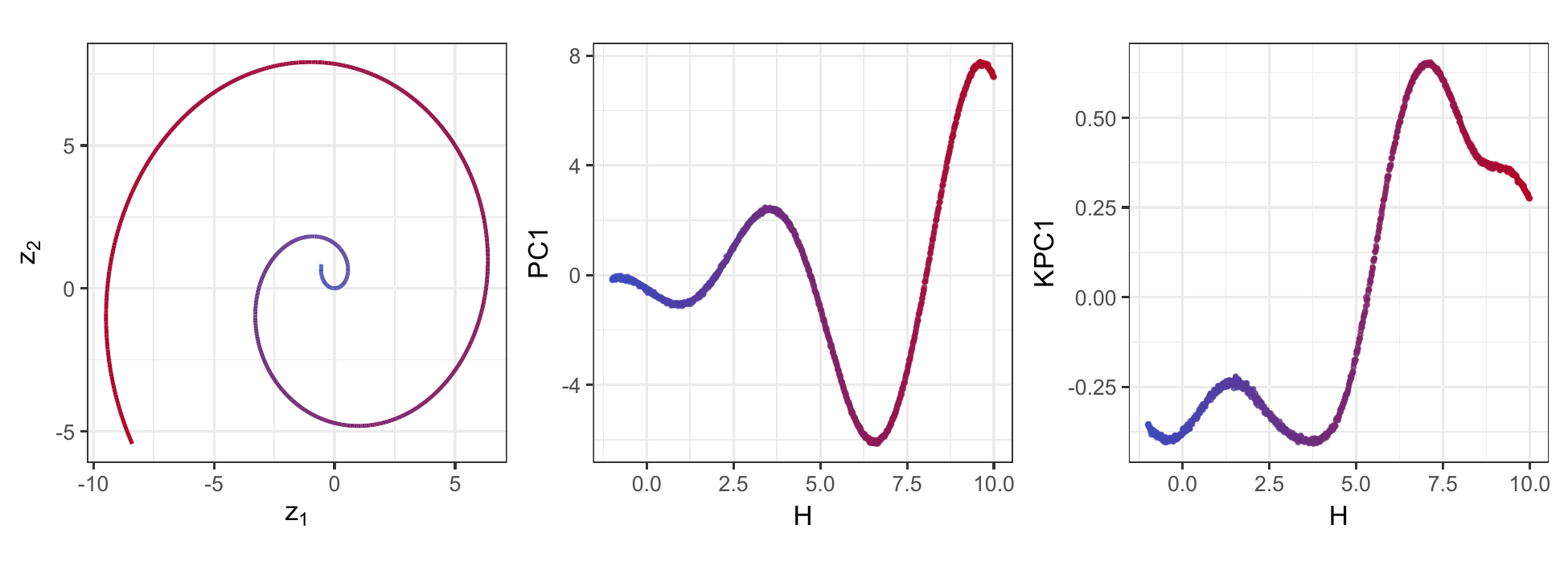}
    \caption{Nonlinear factor model.
Left: spiral manifold \(\phi(H)\) in \(\mathbb R^2\), colored by confounder \(H\).
Middle: First principal component. Right: First (RBF) kernel principal component.
}
    \label{fig:pc_kernel}
\end{figure}

Although the data lie in a two-dimensional linear subspace, the latent coordinate $H$ is not a linear function of $X$. As a result, linear PCs recover the subspace but fail to recover $H$. In contrast, kernel PCs capture nonlinear structure and recover coordinates that are strongly aligned with $H$ (Figure~\ref{fig:pc_kernel} and Figure~\ref{fig:spiral_results}).
Linear spectral methods can only attenuate linear projections of $m(X)$ and are therefore ineffective if the linear projection is small. In contrast, kernel spectral deconfounding adapts to the geometry of the data and can attenuate components aligned with the recovered nonlinear coordinates.  This aligns with the theoretical perspective developed in Section~\ref{sec:non-linear_extension}.

We generate training samples of size $n=1000$ and independent test samples of size $n_{\text{test}}=500$ and $p = 50$. The signal $f_0$ is constructed as in the previous section. The confounder $H$ is sampled uniformly and mapped through the spiral embedding described above. We repeat the experiment over $25$ independent replications. Figure~\ref{fig:spiral_results} shows that Kernel-SpecBoost achieves the lowest test error for estimating \(f_0\), outperforming both linear spectral methods and non-spectral baselines. This supports the need to account for nonlinear structure in \(X\): the diagnostic in the right panel shows that the first two kernel principal components explain substantially more variation in the latent coordinate \(H\) than the first two linear principal components.

\begin{figure}[ht!]
    \centering
    \includegraphics[width=0.79\linewidth]{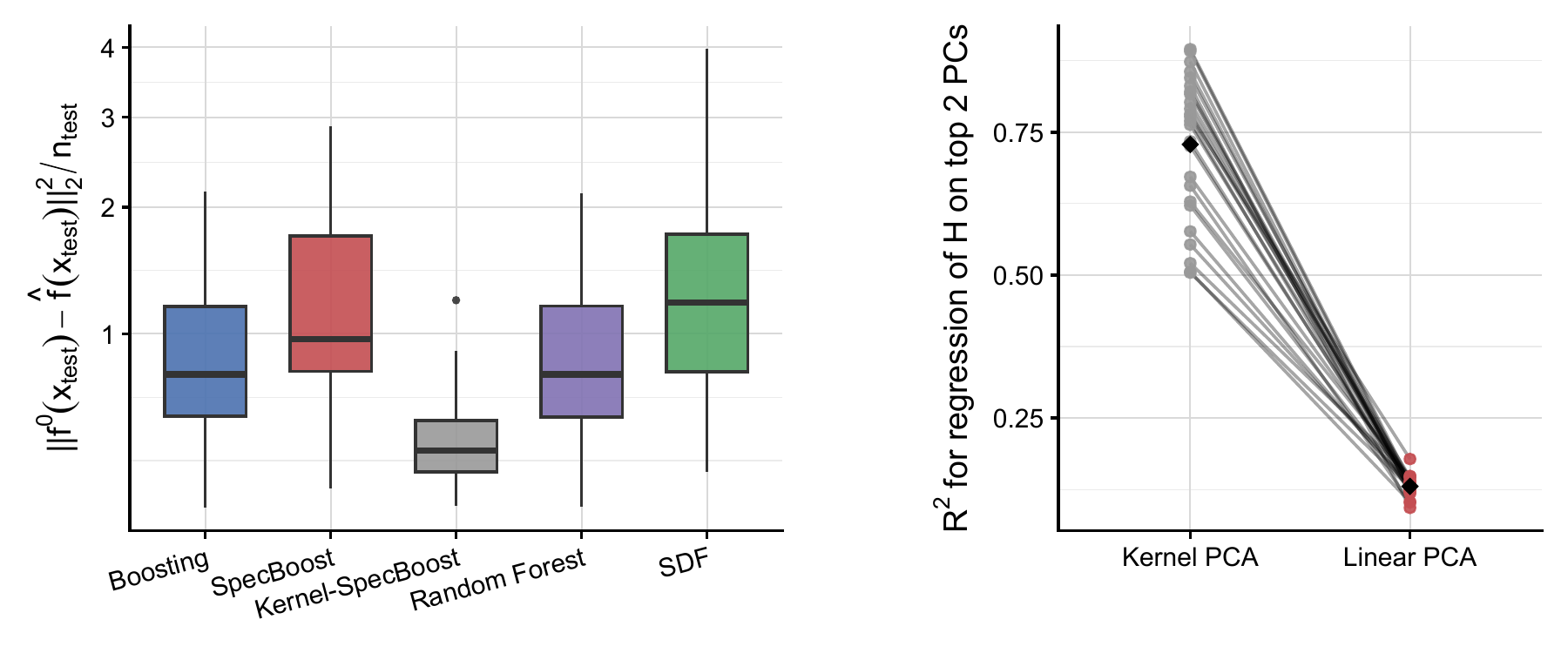}
    \caption{Regression setting with nonlinear factor model and linear outcome confounding.
Left: test mean squared error. Right: $\textit{R}^2$ of regressing the confounder $H$ on the first two principal components.
}
    \label{fig:spiral_results}
\end{figure}

\section{Real-World Application}
\label{sec:real_data}

We empirically validate our approach using the Boston housing data set \citep{blake1998uci}, obtained through the \texttt{mlbench} package \citep{leisch2024mlbench}, which comprises median house prices and neighborhood characteristics for 506 census tracts in the Boston metropolitan area. The data set includes geographic coordinates (latitude and longitude) for each tract. Since spatial location is known to have a substantial effect on housing prices even after accounting for observed predictor variables, fitting a predictive model without accounting for spatial variation introduces confounding whenever location is correlated with other covariates. In such cases, non-spatial features may act as proxies for location, absorbing spatial effects and thereby biasing estimated feature relationships. Our hypothesis is that spectral boosting will produce estimates more closely aligned with a spatial model despite not having access to explicit spatial information.

Figure~\ref{fig:two_panel}(a) displays the spatial distribution of house prices across the study region. Visual inspection reveals substantial spatial heterogeneity, with elevated prices concentrated in specific neighborhoods. Fitting models without coordinates therefore introduces confounding whenever this spatial effect is correlated with other features. Figure~\ref{fig:two_panel}(b) shows that the first principal component of the feature matrix exhibits a clear spatial pattern, consistent with dense confounding, where latent spatial effects load onto the leading principal components of the design matrix. This suggests that omitted spatial confounding may indeed bias estimators that ignore such structure.
\begin{figure}[ht!]
    \centering
    \begin{minipage}{0.49\linewidth}
        \centering
        \includegraphics[width=\linewidth]{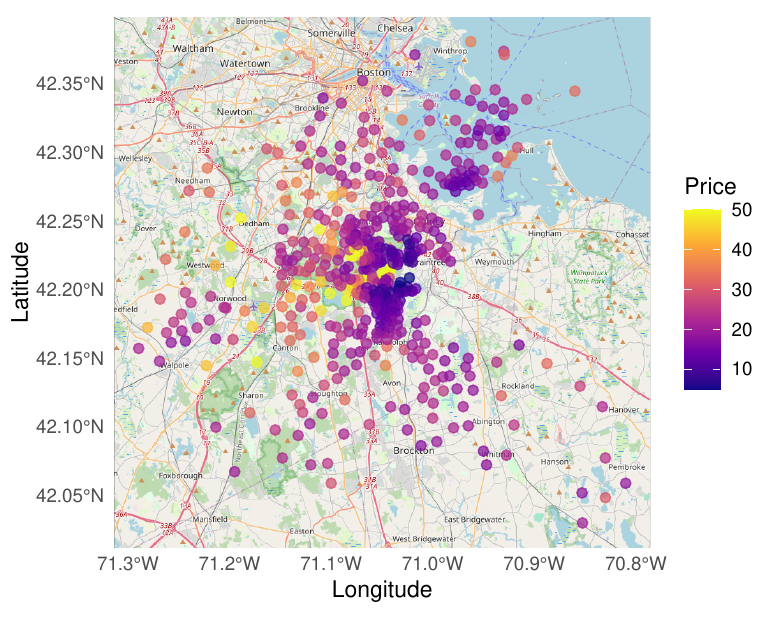}
    \end{minipage}\hfill
    \begin{minipage}{0.5\linewidth}
        \centering
        \includegraphics[width=\linewidth]{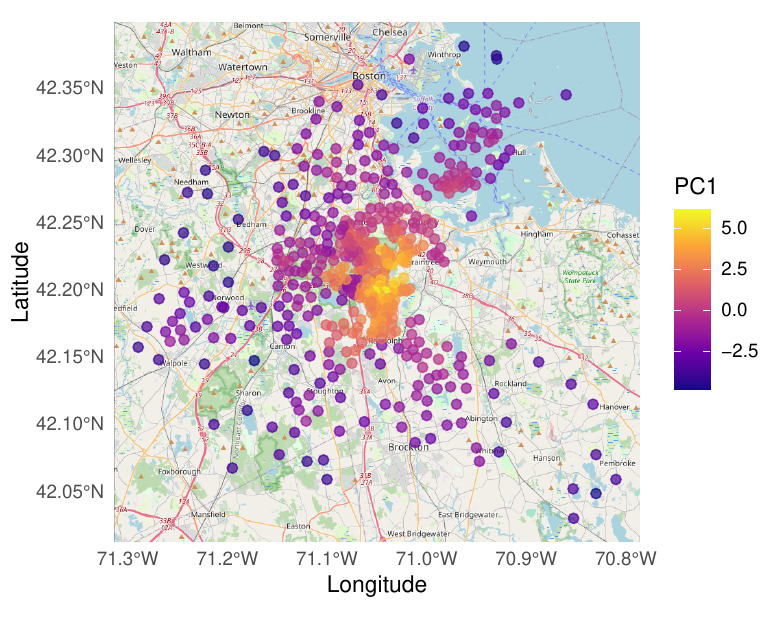}
    \end{minipage}
    \caption{
    Boston housing data.
Spatial distribution of median house prices (left) and of the first principal component of the non-spatial features (right).
    }
    \label{fig:two_panel}
\end{figure}

We run the following experiment to assess robustness to spatial confounding. First, we fit a \textit{spatial model} that includes geographic coordinates in a Gaussian process component with a Mat\'ern kernel, while using gradient boosting for the fixed effects, via the \texttt{GPBoost} package \citep{sigrist2022gaussian}. This serves as a reference model, since it explicitly accounts for spatial price variation. Second, we fit \textit{standard gradient boosting} using only the non-spatial features, deliberately omitting coordinates. Third, we fit our proposed \textit{spectral boosting} model, also excluding coordinates. In the latter two models, spatial location acts as a latent confounder. We then compare the methods by examining (i) the alignment of the fitted values with the reference model, and (ii) the stability of estimated feature effects via partial dependence plots.

Figure~\ref{fig:pdp_comparison} presents partial dependence plots for six features identified as important by permutation-based feature importance in the reference spatial boosting model. The plots show that spectral boosting produces feature effects that are closely aligned with those of the spatial boosting model, despite not having access to geographic coordinates. In contrast, standard gradient boosting exhibits substantial bias for several features. Furthermore, the mean squared error with respect to the predictions of the spatial boosting model is \(5.26\) for spectral boosting and \(45.40\) for standard boosting. Thus, predictions from the spectrally deconfounded model align well with the reference model, whereas this is not the case for the confounded boosting model.

\begin{figure}[ht!]
    \centering
    \includegraphics[width=1\linewidth]{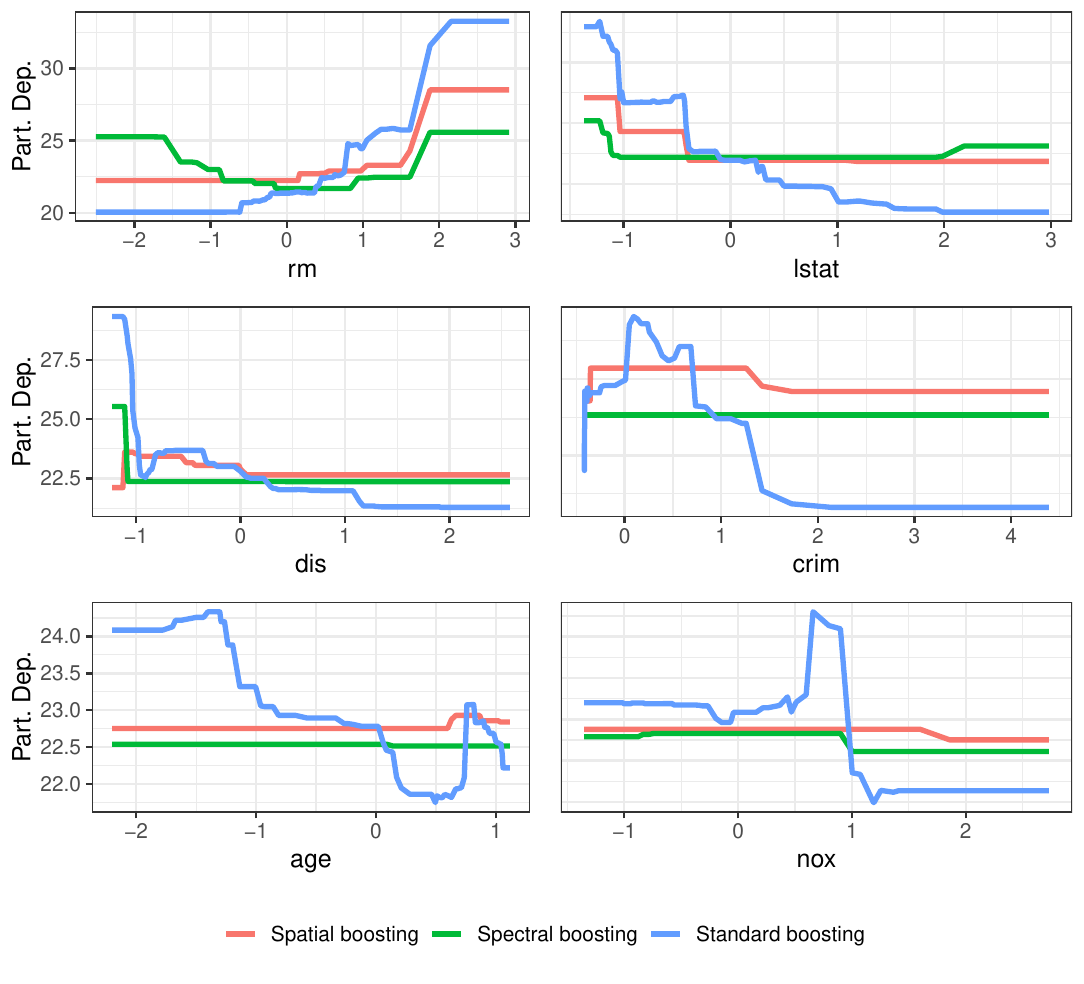}
   \caption{Boston housing data.
Partial dependence plots for the six features with highest permutation importance in the spatial boosting reference model.}
    \label{fig:pdp_comparison}
\end{figure}

The case of nitrogen oxide concentration (NOX) is particularly illustrative. Standard boosting suggests a non-monotone relationship: prices initially increase with NOX before dropping sharply at high concentrations. This contradicts the expected negative association between air pollution and housing values. The apparent paradox arises because NOX is spatially structured: it is correlated with both urban density and industrial land use. Without spatial adjustment, standard boosting interprets NOX as a proxy for location: moderate NOX levels coincide with desirable urban neighborhoods, whereas extreme values occur in industrial zones. The estimated effect therefore conflates the true effect of pollution with spatial price gradients. Our spectral deconfounding approach, by attenuating the component of feature variation aligned with spatial structure, recovers a monotonically decreasing relationship between NOX and price, consistent with the spatial model. Similar patterns emerge for other features, where the standard boosting model overestimates the effect of the feature on house prices because it also absorbs the effect of unobserved location: (i) overestimation of the price effect of \texttt{rm} (average number of rooms per dwelling) for large values; (ii) implausibly large marginal effects of \texttt{lstat} (percentage of lower-status population); (iii) a distorted nonlinear effect of \texttt{age}; and (iv) spikes in predicted house prices for certain values of \texttt{crim} (crime rate).

Furthermore, Figure~\ref{fig:correlation} compares the predicted random effects from the reference spatial boosting model and from our proposed spectral boosting model: the correlation coefficient is approximately $0.83$.
This shows that our method is able to capture a substantial part of the spatial variation without access to coordinates. Spectral boosting implicitly accounts for the spatial confounding through a random effect with a carefully chosen covariance structure aligned with the principal components of the design matrix, which, under dense confounding, align with the confounder (in this case spatial location). 

\begin{figure}[ht!]
    \centering
    \includegraphics[width=0.5\linewidth]{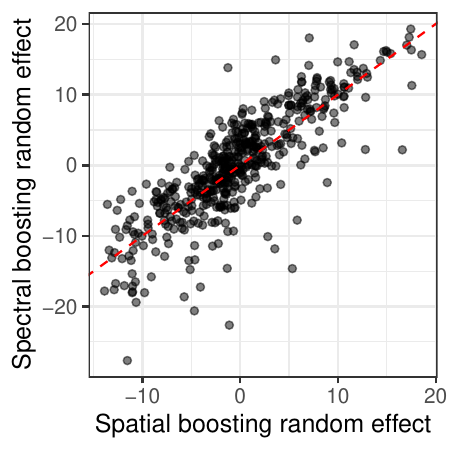}
    \caption{Boston housing data.
Predicted random effects from the spatial boosting reference model versus those from spectral boosting without access to geographic coordinates.}
    \label{fig:correlation}
\end{figure}

Overall, these results show that spectral boosting can substantially mitigate confounding from omitted spatial variables, yielding interpretable feature effects that are closely aligned with those obtained from explicit spatial modeling.

\section{Conclusion}
\label{sec:conclusion}

We introduce spectrally deconfounded gradient boosting, a boosting procedure designed to improve robustness to hidden dense confounding. The central idea is to embed spectral deconfounding directly into the loss: instead of fitting trees to raw residuals, the algorithm fits them to spectrally filtered residuals. This changes the optimization path by slowing learning in directions that are likely to be confounding-aligned.

Our analysis shows that the deconfounding effect does not come from the spectral loss or transformation alone. For continuous shrinkage transformations such as LAVA and Trim, the spectral weights remain positive at finite sample sizes, so a sufficiently flexible learner can eventually fit the confounding component. 
Rather, robustness arises from the interaction between spectral weighting and regularization, in particular in connection with early stopping. This creates an intermediate regime in which signal directions are learned while confounding-aligned directions remain largely unfitted. We also provide a mixed-model interpretation of the spectral loss, linking LAVA-type transformations to Gaussian random effects. This perspective yields an empirical-Bayes strategy for selecting the strength of spectral shrinkage and improves the suitability of cross-validation for hyperparameter tuning by incorporating the BLUP of the dense random-effect component. Moreover, it naturally extends the method to non-Gaussian likelihoods and to kernel-based deconfounding for nonlinear confounding structures.

Across simulations and a spatially confounded real-data example, spectral boosting improves recovery of the target signal relative to standard boosting while remaining computationally scalable. The results suggest that spectral losses provide a practical way to transfer ideas from linear spectral deconfounding and mixed-model adjustment to modern nonlinear learners. Important directions for future research include a sharper theory for tree-based base learners, principled stopping rules under distribution shift, and a deeper characterization of when kernel spectral transformations capture nonlinear confounding.

\section*{Acknowledgments}
The authors are grateful to Markus Ulmer, Cyrill Scheidegger, and Peter A. Whalley for useful discussions.

\bibliographystyle{plainnat}
\bibliography{references}

\newpage
\appendix
\section{Proofs for Section~\ref{sec:opt_view}}
\label{app:pathwise_unbiasedness}
\subsection{Proof of boosting recursion}
\begin{proof}
\label{app:proof_exact_bias}
For ordinary $L_2$ boosting, define
\[
r^{\mathrm{or}}_m := Y-\hat f^{\mathrm{or}}_m .
\]
Using OLS as base learner leads to
\[
r^{\mathrm{or}}_{m+1}
=
Y-\hat f^{\mathrm{or}}_{m+1}
=
(I-\nu P_X)r^{\mathrm{or}}_m .
\]
Since $r^{\mathrm{or}}_0=Y$, iteration gives
\[
r^{\mathrm{or}}_m=(I-\nu P_X)^mY,
\qquad
\hat f^{\mathrm{or}}_m
=
\bigl(I-(I-\nu P_X)^m\bigr)Y.
\]
Because $P_X$ is an orthogonal projector,
\[
(I-\nu P_X)^m
=
I-\bigl(1-(1-\nu)^m\bigr)P_X .
\]
Hence
\[
\hat f^{\mathrm{or}}_m
=
\bigl(1-(1-\nu)^m\bigr)P_XY.
\]
Taking conditional expectations and using
\[
P_Xf_0=f_0,\qquad P_Xg_0=g_0,\qquad \mathbb E[\varepsilon\mid X]=0,
\]
we obtain
\[
\mathbb E[\hat f^{\mathrm{or}}_m\mid X]
=
\bigl(1-(1-\nu)^m\bigr)(f_0+g_0),
\]
and therefore
\[
\mathbb E[\hat f^{\mathrm{or}}_m\mid X]-f_0
=
-(1-\nu)^m f_0
+
\bigl(1-(1-\nu)^m\bigr)g_0.
\]
For spectral OLS boosting, define
\[
r^{\mathrm{sp}}_m:=Y-\hat f^{\mathrm{sp}}_m .
\]
Then
\[
r^{\mathrm{sp}}_{m+1}
=
(I-\nu P_XW)r^{\mathrm{sp}}_m,
\qquad
r^{\mathrm{sp}}_m=(I-\nu P_XW)^mY.
\]
Since $P_X$ and $W$ are diagonal in the basis $U=(u_1,\dots,u_k)$,
\[
P_XWu_i=w_i u_i,\qquad i=1,\dots,k.
\]
Thus the recursion decouples coordinatewise:
\[
\langle r^{\mathrm{sp}}_m,u_i\rangle
=
(1-\nu w_i)^m\langle Y,u_i\rangle.
\]
Therefore
\[
\hat f^{\mathrm{sp}}_m
=
\sum_{i=1}^k
\bigl(1-(1-\nu w_i)^m\bigr)
\langle Y,u_i\rangle u_i.
\]
Using the expansions $f_0=\sum_i f_i u_i$, $g_0=\sum_i g_i u_i$, and
$\mathbb E[\varepsilon\mid X]=0$, we get
\[
\mathbb E[\hat f^{\mathrm{sp}}_m\mid X]-f_0
=
-\sum_{i=1}^k(1-\nu w_i)^m f_i u_i
+
\sum_{i=1}^k
\bigl(1-(1-\nu w_i)^m\bigr)g_i u_i.
\]
This proves the lemma.
\end{proof}

\subsection{Proof of Theorem~\ref{thm:pathwise_unbiasedness_main}}
\label{app:proof_theorem_main}
\begin{proof}
Let
\[
B:=
\mathbb E[\hat f^{\mathrm{sp}}_{m_p}\mid X]-f_0 .
\]
By the boosting recursion proved above,
\[
B
=
-\sum_{i=1}^k(1-\nu w_i)^{m_p}f_i u_i
+
\sum_{i=1}^k
\bigl(1-(1-\nu w_i)^{m_p}\bigr)g_i u_i .
\]
Split both sums into the top band $i\le q$ and the complementary band $i>q$.

For the signal on the complementary band, since $w_i\ge w_M$ for $i>q$,
\[
0\le (1-\nu w_i)^{m_p}
\le
(1-\nu w_M)^{m_p}
\le
\exp(-m_p\nu w_M)
\to 0.
\]
Hence
\[
\frac{1}{\sqrt n}
\left\|
\sum_{i>q}(1-\nu w_i)^{m_p}f_i u_i
\right\|_2
\le
(1-\nu w_M)^{m_p}\frac{\|f_0\|_2}{\sqrt n}
\to 0.
\]

For the confounding component on the top band, since $w_i\le w_H$ for $i\le q$,
\[
0\le
1-(1-\nu w_i)^{m_p}
\le
1-(1-\nu w_H)^{m_p}
\le
m_p\nu w_H
\to 0,
\]
where the last inequality follows from Bernoulli's inequality. Therefore
\[
\frac{1}{\sqrt n}
\left\|
\sum_{i\le q}
\bigl(1-(1-\nu w_i)^{m_p}\bigr)g_i u_i
\right\|_2
\le
m_p\nu w_H\frac{\|g_0\|_2}{\sqrt n}
\to 0.
\]

The two remaining terms are controlled directly by empirical spectral separation. Since
$0\le (1-\nu w_i)^{m_p}\le 1$,
\[
\frac{1}{\sqrt n}
\left\|
\sum_{i\le q}(1-\nu w_i)^{m_p}f_i u_i
\right\|_2
\le
\frac{\|\hat P_H f_0\|_2}{\sqrt n}
\to 0.
\]
Similarly, since $0\le 1-(1-\nu w_i)^{m_p}\le 1$,
\[
\frac{1}{\sqrt n}
\left\|
\sum_{i>q}
\bigl(1-(1-\nu w_i)^{m_p}\bigr)g_i u_i
\right\|_2
\le
\frac{\|(I-\hat P_H)g_0\|_2}{\sqrt n}
\to 0.
\]
Combining the four bounds gives
\[
\frac{\|B\|_2}{\sqrt n}\to 0,
\]
which proves the theorem.

For ordinary $L_2$ boosting with OLS base learners we have
\[
\mathbb E[\hat f^{\mathrm{or}}_{m_p}\mid X]-f_0
=
-(1-\nu)^{m_p}f_0
+
\bigl(1-(1-\nu)^{m_p}\bigr)g_0.
\]
Since $m_p\to\infty$ and $\nu\in(0,1)$, $(1-\nu)^{m_p}\to0$. Hence, by the reverse triangle
inequality,
\[
\frac{1}{\sqrt n}
\left\|
\mathbb E[\hat f^{\mathrm{or}}_{m_p}\mid X]-f_0
\right\|_2
\ge
\bigl(1-(1-\nu)^{m_p}\bigr)\frac{\|g_0\|_2}{\sqrt n}
-
(1-\nu)^{m_p}\frac{\|f_0\|_2}{\sqrt n}.
\]
The second term converges to zero, while the first is bounded away from zero under
our assumptions. Therefore the normalized bias cannot converge to zero.

Finally, if $w_H/w_M\to0$ and
\[
m_p\asymp\frac{1}{\nu\sqrt{w_Hw_M}},
\]
then
\[
m_p\nu w_M
\asymp
\sqrt{\frac{w_M}{w_H}}
\to\infty,
\qquad
m_p\nu w_H
\asymp
\sqrt{\frac{w_H}{w_M}}
\to0.
\]
Thus the learning rate condition of the theorem holds.
\end{proof}

\begin{remark}[Variance and full consistency]
\label{rem:variance_pathwise}
Theorem~\ref{thm:pathwise_unbiasedness_main} is a bias result. It shows that, under the spectral
separation and stopping conditions, the conditional mean of the boosted fit converges to the direct
sample-space signal $f=X\beta$. This does not by itself imply full consistency, since one must also
control the conditional variance.

For spectral OLS boosting, the fit is linear in the response:
\[
\hat f^{\mathrm{sp}}_m
=
B_mY,
\qquad
B_m
=
U\operatorname{diag}
\Bigl(
1-(1-\nu w_1)^m,\dots,1-(1-\nu w_k)^m
\Bigr)U^\top .
\]
Hence
\[
\operatorname{Var}(\hat f^{\mathrm{sp}}_m\mid X)
=
B_m\Sigma_\varepsilon B_m^\top,
\qquad
\Sigma_\varepsilon:=\operatorname{Var}(\varepsilon\mid X).
\]
If $\Sigma_\varepsilon\preceq\sigma^2 I_n$, then
\[
\frac{1}{n}
\operatorname{tr}\!\left(
\operatorname{Var}(\hat f^{\mathrm{sp}}_m\mid X)
\right)
\le
\frac{\sigma^2}{n}
\sum_{i=1}^k
\Bigl(1-(1-\nu w_i)^m\Bigr)^2.
\]
This motivates the effective learned dimension
\[
d_{\mathrm{eff}}(B_m)
:=
\operatorname{tr}(B_m^2)
=
\sum_{i=1}^k
\Bigl(1-(1-\nu w_i)^m\Bigr)^2.
\]
Under the stopping condition of Theorem~\ref{thm:pathwise_unbiasedness_main},
$m_p\nu w_M\to\infty$, all directions outside the top confounding band are eventually learned almost
fully. Consequently,
\[
d_{\mathrm{eff}}(B_{m_p})\gtrsim k-q.
\]
Thus vanishing prediction variance is ensured by an additional condition such as
\[
\frac{d_{\mathrm{eff}}(B_{m_p})}{n}\to0,
\qquad\text{or more conservatively}\qquad
\frac{k}{n}\to0.
\]
This is substantially stronger than the bias assumptions and is generally restrictive when $p$ is of the
same order as, or larger than, $n$.
In the latter case, a natural route
would be componentwise boosting, whose effective learned dimension is tied more directly to the
number of selected coordinates. However, in that case the base learner is no longer a linear operator,
so the explicit spectral recursion and the present analysis no longer
apply directly.
\end{remark}

\section{Factor-model verification of the pathwise assumptions}
\label{app:factor_model_justification}

This appendix verifies the assumptions of
Theorem~\ref{thm:pathwise_unbiasedness_main} under the dense-confounding factor model
\[
X=H\Gamma+E,
\qquad
Y=X\beta_0+H\delta+\eta.
\]
Here $H\in\mathbb R^{n\times q}$ contains the latent confounders,
$\Gamma\in\mathbb R^{q\times p}$ is the loading matrix, and $E$ is the unconfounded component.
The number of confounders $q$ is fixed, and all limits are along $p\to\infty$, with
$n=n_p \to \infty$.

\subsection{Verification of Assumption~\ref{ass:reduced_model_energy_main}}

Assumption~\ref{ass:reduced_model_energy_main} requires the reduced model
\[
Y=f_0+g_0+\varepsilon,
\qquad
f_0=X\beta_0,
\qquad
g_0=Xb_0,
\]
with $\mathbb E[\varepsilon\mid X]=0$ and bounded empirical $L^2$ norms of $f_0$ and $g_0$.

The following lemma verifies the required properties under dense confounding.

\begin{lemma}[Reduced model and bounded empirical energy]
\label{lem:A1_factor_app}
Assume that the rows $h_i$ of $H$ are iid mean-zero sub-Gaussian with
\[
\operatorname{Cov}(h_i)=I_q,
\]
and that the rows $e_i$ of $E$ are iid mean-zero sub-Gaussian, independent of $H$, with covariance
$\Sigma_E$ satisfying
\[
0<c_E\le \lambda_{\min}(\Sigma_E)
\le
\lambda_{\max}(\Sigma_E)\le C_E<\infty.
\]
Assume also
\[
d_q(\Gamma)\gtrsim \sqrt p,
\qquad
\|\delta\|_2=O(1),
\qquad
\frac{\|X\beta_0\|_2}{\sqrt n}=O_p(1).
\]
Let
\[
\Sigma_X:=\operatorname{Cov}(x_i)=\Gamma^\top\Gamma+\Sigma_E,
\qquad
b_0:=\Sigma_X^{-1}\Gamma^\top\delta,
\]
and define
\[
r:=H\delta-Xb_0.
\]
Then
\[
\frac{\|r\|_2}{\sqrt n}=o_p(1),
\qquad
\frac{\|Xb_0\|_2}{\sqrt n}=O_p(1).
\]
Consequently,
\[
Y=X\beta_0+Xb_0+r+\eta
\]
is an $o_p(1)$ perturbation, in normalized empirical $L^2$ norm, of the reduced model
\[
Y=f_0+g_0+\varepsilon,
\qquad
f_0=X\beta_0,
\qquad
g_0=Xb_0.
\]
If, in addition, $(H,E)$ are jointly Gaussian and $\mathbb E[\eta\mid X]=0$, then
$\mathbb E[r+\eta\mid X]=0$.
\end{lemma}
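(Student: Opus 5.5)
The plan is to write everything row-wise and control the empirical norms through their population second moments. Since $r=H\delta-Xb_0$ has iid rows $r_i=h_i^\top\delta-x_i^\top b_0$ with mean zero, we have $\mathbb E\|r\|_2^2/n=\mathbb E r_i^2$. Markov's inequality therefore gives $\|r\|_2/\sqrt n=O_p\bigl((\mathbb E r_i^2)^{1/2}\bigr)$, so it suffices to show $\mathbb E r_i^2\to0$. By the projection property, $\mathbb E r_i^2=\delta^\top\delta-\delta^\top\Gamma\Sigma_X^{-1}\Gamma^\top\delta=\delta^\top\bigl(I_q-\Gamma\Sigma_X^{-1}\Gamma^\top\bigr)\delta$. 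Applying the Woodbury identity to $\Sigma_X=\Gamma^\top\Gamma+\Sigma_E$ yields
\[
I_q-\Gamma\Sigma_X^{-1}\Gamma^\top=\bigl(I_q+\Gamma\Sigma_E^{-1}\Gamma^\top\bigr)^{-1}.
\]
Its largest eigenvalue is at most $1/(1+\lambda_{\min}(\Gamma\Sigma_E^{-1}\Gamma^\top))\le 1/(1+d_q(\Gamma)^2/C_E)$. This is $O(1/p)$ under $d_q(\Gamma)\gtrsim\sqrt p$, so $\mathbb E r_i^2\le\|\delta\|_2^2\,O(1/p)\to0$, which gives $\|r\|_2/\sqrt n=o_p(1)$, in fact $O_p(p^{-1/2})$.

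For the energy bound, I use that $Xb_0=H\delta-r$, so the triangle inequality gives $\|Xb_0\|_2/\sqrt n\le\|H\delta\|_2/\sqrt n+\|r\|_2/\sqrt n$. Since $\mathbb E(h_i^\top\delta)^2=\|\delta\|_2^2=O(1)$, Markov's inequality gives $\|H\delta\|_2/\sqrt n=O_p(1)$. An alternative route is to note directly that $\mathbb E(x_i^\top b_0)^2=\delta^\top\Gamma\Sigma_X^{-1}\Gamma^\top\delta\le\|\delta\|_2^2$. Sub-Gaussianity is not really needed here, since second moments suffice. The triangular-array nature (the distribution changes with $p$) is harmless because Markov's inequality is applied at each $p$ with explicit bounds.

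The reduced-model statement then follows by setting $\varepsilon:=r+\eta$. Indeed, $Y=X\beta_0+Xb_0+r+\eta$ differs from $f_0+g_0+\eta$ by $r$, whose normalized norm is $o_p(1)$. For the Gaussian case, $(x_i,h_i^\top\delta)$ is jointly Gaussian, and $r_i$ is the residual of the population linear projection of $h_i^\top\delta$ on $x_i$. It is therefore uncorrelated with $x_i$, and hence independent of $x_i$. By the iid structure across rows, $r_i$ is independent of all of $X$, so $\mathbb E[r\mid X]=\mathbb E r=0$. Combined with $\mathbb E[\eta\mid X]=0$, this gives $\mathbb E[r+\eta\mid X]=0$.

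The main step is the Woodbury eigenvalue bound. This is where dense confounding enters, through $\lambda_{\min}(\Gamma\Sigma_E^{-1}\Gamma^\top)\ge d_q(\Gamma)^2/\lambda_{\max}(\Sigma_E)$. The remaining steps are routine moment and Markov arguments.
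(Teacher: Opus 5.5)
Your proof is correct and follows essentially the same route as the paper. Both arguments use the projection formula for $\mathbb E r_i^2$, the Woodbury identity $I_q-\Gamma\Sigma_X^{-1}\Gamma^\top=(I_q+\Gamma\Sigma_E^{-1}\Gamma^\top)^{-1}$ with $\lambda_{\min}(\Gamma\Sigma_E^{-1}\Gamma^\top)\ge d_q(\Gamma)^2/C_E\gtrsim p$, Markov's inequality, the triangle inequality via $Xb_0=H\delta-r$, and uncorrelatedness plus joint Gaussianity for the conditional-mean claim. The only difference is cosmetic: you bound $\|H\delta\|_2/\sqrt n$ by Markov applied to $\mathbb E(h_i^\top\delta)^2=\|\delta\|_2^2$, where the paper appeals to $H^\top H/n\to I_q$; your version is slightly more elementary and needs neither $n\to\infty$ nor sub-Gaussianity for that step.
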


\subsection{Verification of Assumption~\ref{ass:empirical_separation_main}}

Assumption~\ref{ass:empirical_separation_main} requires
\[
\frac{\|(I-\hat P_H)g_0\|_2}{\sqrt n}\to0,
\qquad
\frac{\|\hat P_H f_0\|_2}{\sqrt n}\to0.
\]
The first condition says that the fitted confounding term lies in the top empirical spectral
subspace. The second condition says that the direct signal does not.

\begin{lemma}[Empirical spectral separation]
\label{lem:A2_factor_app}
Assume the conditions of Lemma~\ref{lem:A1_factor_app}. Let $P_H$ be the orthogonal projector
onto $\operatorname{col}(H)$, and let
\[
\hat P_H=\sum_{i=1}^q u_i u_i^\top
\]
be the projector onto the top $q$ left singular vectors of $X$.

Assume that the top empirical factor space is consistently estimated. Equivalently, assume that
\[
\|\hat P_H-P_H\|_{\mathrm{op}}=o_p(1).
\]
This condition is implied by standard strong-factor PCA results; for example,
\citet{bai2003inferential} gives factor recovery up to rotation, and Wedin's sin-theta theorem
or Davis--Kahan perturbation theory transfers this to projector convergence
\citep{yu2015useful}.

Assume also the structural signal-separation condition
\[
\frac{\|\hat P_HX\beta_0\|_2}{\sqrt n}=o_p(1).
\]
Then, with $f_0=X\beta_0$ and $g_0=Xb_0$,
\[
\frac{\|(I-\hat P_H)g_0\|_2}{\sqrt n}=o_p(1),
\qquad
\frac{\|\hat P_Hf_0\|_2}{\sqrt n}=o_p(1).
\]
\end{lemma}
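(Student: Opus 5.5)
The second claim is just the signal-separation hypothesis restated, since $f_0=X\beta_0$ gives $\hat P_Hf_0=\hat P_HX\beta_0$. All the work is in the first claim. The plan is to decompose $g_0$ through the identity $g_0=Xb_0=H\delta-r$, which is the definition of $r$ in Lemma~\ref{lem:A1_factor_app}, and to use two facts: $H\delta$ lies exactly in $\operatorname{col}(H)$, and $r$ is negligible.

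Concretely, write
\[
(I-\hat P_H)g_0=(I-\hat P_H)H\delta-(I-\hat P_H)r .
\]
For the second term, $I-\hat P_H$ is an orthogonal projector, so its operator norm is at most one. Lemma~\ref{lem:A1_factor_app} then gives $\|(I-\hat P_H)r\|_2/\sqrt n\le\|r\|_2/\sqrt n=o_p(1)$.

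For the first term, $P_HH\delta=H\delta$, so $(I-P_H)H\delta=0$. Hence
\[
(I-\hat P_H)H\delta=(P_H-\hat P_H)H\delta,
\qquad
\frac{\|(I-\hat P_H)H\delta\|_2}{\sqrt n}\le\|\hat P_H-P_H\|_{\mathrm{op}}\,\frac{\|H\delta\|_2}{\sqrt n}.
\]
The first factor on the right is $o_p(1)$ by the assumed projector consistency. It therefore remains to show $\|H\delta\|_2/\sqrt n=O_p(1)$.

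This follows from the law of large numbers, or a Chebyshev bound, for the iid rows. Since $\operatorname{Cov}(h_i)=I_q$, each row satisfies $\mathbb E[(h_i^\top\delta)^2]=\|\delta\|_2^2=O(1)$, so $\frac1n\|H\delta\|_2^2=\frac1n\sum_i(h_i^\top\delta)^2$ has mean $\|\delta\|_2^2$. Markov's inequality alone already gives $O_p(1)$. Alternatively, one can use $\|H\delta\|_2\le\|Xb_0\|_2+\|r\|_2$ together with both conclusions of Lemma~\ref{lem:A1_factor_app}. Multiplying an $o_p(1)$ factor by an $O_p(1)$ factor gives $o_p(1)$, and combining the two terms by the triangle inequality yields the first claim.

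The main obstacle is not the algebra, which is immediate, but making sure the inputs are correctly aligned. The projector consistency $\|\hat P_H-P_H\|_{\mathrm{op}}=o_p(1)$ is taken as a hypothesis here rather than proved. Its justification needs $\operatorname{rank}(H)=q$ with high probability, the spike–bulk gap $d_q(H\Gamma)\asymp\sqrt{np}\gg\|E\|_{\mathrm{op}}\asymp\sqrt n+\sqrt p$, and Wedin's theorem applied to the left singular subspaces of $X=H\Gamma+E$. Within the lemma itself the only care needed is the stochastic-order bookkeeping, namely that the $O_p$ and $o_p$ statements are combined along the same sequence in $p$, with $n=n_p$.
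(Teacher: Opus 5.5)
Your proof is correct and matches the paper's argument essentially step for step: write $g_0=H\delta-r$, use $P_HH\delta=H\delta$ to get $(I-\hat P_H)H\delta=(P_H-\hat P_H)H\delta$, and combine projector consistency with $\|H\delta\|_2/\sqrt n=O_p(1)$ and $\|r\|_2/\sqrt n=o_p(1)$. The only difference is cosmetic: you obtain $\|H\delta\|_2/\sqrt n=O_p(1)$ directly from Markov's inequality, whereas the paper appeals to $H^\top H/n\to I_q$ from the proof of Lemma~\ref{lem:A1_factor_app}.
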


\subsection{Verification of Assumption~\ref{ass:weight_separation_main}}

Assumption~\ref{ass:weight_separation_main} requires
\[
w_H:=\max_{i\le q}w_i\to0,
\qquad
w_M:=\min_{q<i\le k}w_i
\]
to stay bounded away from zero. We verify this for LAVA-type weights
\[
w_i=\frac{n\lambda_2}{n\lambda_2+d_i(X)^2},
\]
where $d_i(X)$ denotes the $i$th singular value of $X$.

\begin{lemma}[LAVA weight separation]
\label{lem:A3_factor_app}
Assume that the rows $h_i$ of $H$ are iid mean-zero sub-Gaussian with
$\operatorname{Cov}(h_i)=I_q$, and that the rows $e_i$ of $E$ are iid mean-zero sub-Gaussian,
independent of $H$, with
\[
\lambda_{\max}(\Sigma_E)\le C_E<\infty.
\]
Assume
\[
d_q(\Gamma)\gtrsim \sqrt p,
\qquad
\min(n,p)\to\infty.
\]
Let the LAVA parameter be chosen at the bulk singular-value scale:
\[
n\lambda_2\asymp(\sqrt n+\sqrt p)^2.
\]
Then
\[
w_H=o_p(1),
\]
and there exists $c_0>0$ such that
\[
\mathbb P(w_M\ge c_0)\to1.
\]
\end{lemma}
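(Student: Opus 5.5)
The plan is to control the singular values of $X=H\Gamma+E$ through Weyl's inequality for singular values, treating $H\Gamma$ as a rank-$q$ spike and $E$ as a bulk perturbation. Since the weight map $d\mapsto n\lambda_2/(n\lambda_2+d^2)$ is decreasing, it suffices to show two things: a lower bound on $d_q(X)$ that is large compared with $(\sqrt n+\sqrt p)$, and an upper bound on $d_{q+1}(X)$ of order $(\sqrt n+\sqrt p)$. These give $w_H=w_q$ and $w_M=w_k\ge w_{q+1}$ as the relevant extremes. In fact only $w_{q+1}$ matters for the lower bound, since $w_i\ge w_{q+1}$ for all $i>q$.

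\textbf{Bulk bound.} Write $E=Z\Sigma_E^{1/2}$, where $Z$ has iid isotropic sub-Gaussian rows. The standard non-asymptotic bound for sub-Gaussian matrices (e.g.\ \citet{vershynin2020high}) gives
\[
\|E\|_{\mathrm{op}}\le C_E^{1/2}\,C(\sqrt n+\sqrt p)
\]
with probability tending to one, because $\min(n,p)\to\infty$ allows the deviation parameter $t$ to be taken of order $\sqrt{\min(n,p)}$. Since $\operatorname{rank}(H\Gamma)\le q$, Weyl gives
\[
d_{q+1}(X)\le d_{q+1}(H\Gamma)+\|E\|_{\mathrm{op}}=\|E\|_{\mathrm{op}}.
\]
With $n\lambda_2\ge c_\lambda(\sqrt n+\sqrt p)^2$, this yields
\[
w_{q+1}\ge \frac{c_\lambda}{c_\lambda+C^2C_E},
\]
which defines $c_0$.

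\textbf{Spike bound.} First,
\[
d_q(H\Gamma)\ge d_q(H)\,d_q(\Gamma)\gtrsim d_q(H)\sqrt p.
\]
Since $q$ is fixed, $H^\top H/n\to I_q$ in operator norm in probability by the law of large numbers or a sub-Gaussian covariance bound, so $d_q(H)\ge\sqrt{n/2}$ with probability tending to one. Weyl then gives
\[
d_q(X)\ge c\sqrt{np}-C(\sqrt n+\sqrt p).
\]
Consequently
\[
w_H\le \frac{n\lambda_2}{d_q(X)^2}\lesssim\frac{(\sqrt n+\sqrt p)^2}{np}\asymp \frac1n+\frac1p\to0,
\]
because $\min(n,p)\to\infty$, and this also shows that the subtracted bulk term is negligible relative to $\sqrt{np}$. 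Hence $w_H=o_p(1)$.

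\textbf{Main obstacle.} The delicate step is the bulk operator-norm bound. It must hold uniformly along the sequence with probability tending to one, and it needs the anisotropy handled via $\lambda_{\max}(\Sigma_E)\le C_E$ and the tail parameter chosen to grow. The remaining steps are routine applications of Weyl's inequality together with monotonicity of the weights.
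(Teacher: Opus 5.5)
Your proposal is correct and follows the paper's proof essentially step for step. The paper uses Weyl's inequality together with $d_q(H\Gamma)\ge d_q(H)\,d_q(\Gamma)$ and $d_q(H)\asymp\sqrt n$ for the spike lower bound, $d_{q+1}(X)\le d_1(E)=O_p(\sqrt n+\sqrt p)$ for the bulk, and monotonicity of the LAVA weights to conclude, exactly as you do. One small slip: the weights increase with $i$, so $w_M=\min_{q<i\le k}w_i$ equals $w_{q+1}$, not $w_k$; your next observation, $w_i\ge w_{q+1}$ for all $i>q$, is exactly what the argument needs, so nothing breaks.
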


\subsection{Factor-model corollary}

Combining the preceding lemmas gives the following stochastic version of the pathwise theorem.

\begin{corollary}[Dense factor model implies spectral pathwise deconfounding]
\label{cor:factor_model_verification}
Suppose the assumptions of Lemmas~\ref{lem:A1_factor_app}--\ref{lem:A3_factor_app} hold. Assume
also joint Gaussianity of $(H,E)$ and $\mathbb E[\eta\mid X]=0$, so that the reduced model has exact
conditional mean zero. Let the LAVA weights satisfy
\[
n\lambda_2\asymp(\sqrt n+\sqrt p)^2.
\]
If the stopping sequence satisfies
\[
m_p\to\infty,
\qquad
\frac{m_p\nu}{\min(n,p)}\to0,
\]
then
\[
m_p\nu w_M\to\infty
\quad\text{in probability},
\qquad
m_p\nu w_H\to0
\quad\text{in probability}.
\]
Moreover,
\[
\frac{1}{\sqrt n}
\left\|
\mathbb E[\hat f^{\mathrm{sp}}_{m_p}\mid X]-X\beta_0
\right\|_2
=o_p(1).
\]
\end{corollary}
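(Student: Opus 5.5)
The corollary follows by combining Lemmas~\ref{lem:A1_factor_app}--\ref{lem:A3_factor_app} with Theorem~\ref{thm:pathwise_unbiasedness_main}. The plan has three steps: first obtain explicit orders of magnitude for $w_H$ and $w_M$ under the LAVA choice $n\lambda_2\asymp(\sqrt n+\sqrt p)^2$; then check the stopping conditions (ii)--(iii) in probability; finally run the bias decomposition from the proof of Theorem~\ref{thm:pathwise_unbiasedness_main} on a high-probability event.

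\textbf{Step 1: the weight separation condition.} Lemma~\ref{lem:A3_factor_app} gives $\mathbb P(w_M\ge c_0)\to1$. Since $m_p\to\infty$ and $\nu$ is fixed, $m_p\nu w_M\ge m_p\nu c_0\to\infty$ on that event, which handles condition (ii). For condition (iii), the qualitative statement $w_H=o_p(1)$ is not enough; a rate is needed.
\begin{itemize}
\item Write $d_q(X)\ge d_q(H\Gamma)-\|E\|_{\mathrm{op}}$ by Weyl's inequality.
\item For the spike, $d_q(H\Gamma)\ge d_q(H)\,d_q(\Gamma)\gtrsim \sqrt n\sqrt p$ with probability tending to one. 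This uses $H^\top H/n\to I_q$, a consequence of sub-Gaussian concentration with $q$ fixed.
\item For the bulk, $\|E\|_{\mathrm{op}}\lesssim\sqrt n+\sqrt p$, which is the bound already invoked in the discussion of Assumption~\ref{ass:weight_separation_main}.
\end{itemize}
Together these give $d_q(X)^2\gtrsim np$ and hence
\[
w_H\le \frac{n\lambda_2}{d_q(X)^2}\lesssim\frac{(\sqrt n+\sqrt p)^2}{np}\lesssim\frac{1}{\min(n,p)} .
\]
Therefore $m_p\nu w_H\lesssim m_p\nu/\min(n,p)\to0$ in probability by the assumed stopping rate. I expect this quantitative spike bound to be the main (though mild) obstacle, since Lemma~\ref{lem:A3_factor_app} as stated only gives $o_p(1)$. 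If I can reuse the intermediate bound from its proof, I will quote it rather than re-derive it.

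\textbf{Step 2: the remaining assumptions.} Lemma~\ref{lem:A1_factor_app} supplies the reduced model $Y=f_0+g_0+\varepsilon$ with $\varepsilon=r+\eta$. Under joint Gaussianity and $\mathbb E[\eta\mid X]=0$ it also gives $\mathbb E[\varepsilon\mid X]=0$, along with $O_p(1)$ normalized energies. Lemma~\ref{lem:A2_factor_app} supplies spectral separation in probability. Thus Assumptions~\ref{ass:reduced_model_energy_main}--\ref{ass:weight_separation_main} hold in probability rather than deterministically.

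\textbf{Step 3: transferring the theorem's conclusion.} The proof of Theorem~\ref{thm:pathwise_unbiasedness_main} gives, conditional on $X$, the deterministic bound
\[
\frac{\|B\|_2}{\sqrt n}\le e^{-m_p\nu w_M}\frac{\|f_0\|_2}{\sqrt n}+m_p\nu w_H\frac{\|g_0\|_2}{\sqrt n}+\frac{\|\hat P_Hf_0\|_2}{\sqrt n}+\frac{\|(I-\hat P_H)g_0\|_2}{\sqrt n}.
\]
I will use this inequality directly instead of the limit statement. Each factor is either $o_p(1)$ or $O_p(1)$ times $o_p(1)$ by Steps 1--2, so the whole bound is $o_p(1)$. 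One detail needs checking: the boosting recursion requires $0\le\nu w_i\le1$, which holds because LAVA weights lie in $(0,1]$ and $\nu<1$. The recursion itself, and the exact identity $\mathbb E[\hat f^{\mathrm{sp}}_{m_p}\mid X]=B_{m_p}(f_0+g_0)$, then apply verbatim.
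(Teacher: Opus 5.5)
Your proposal is correct and follows the paper's argument. The paper likewise transfers the deterministic four-term bound from the proof of Theorem~\ref{thm:pathwise_unbiasedness_main} to the factor model, noting that each term is $o_p(1)$ or $O_p(1)\cdot o_p(1)$ by Lemmas~\ref{lem:A1_factor_app}--\ref{lem:A3_factor_app}. The rate you need for condition (iii), $w_H=O_p\bigl((\sqrt n+\sqrt p)^2/(np)\bigr)\lesssim 1/\min(n,p)$, is exactly the intermediate bound established in the paper's proof of Lemma~\ref{lem:A3_factor_app}, so you can quote it directly rather than re-derive it.
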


The deterministic proof of Theorem~\ref{thm:pathwise_unbiasedness_main} is pathwise. Under the
factor model, the same deterministic upper bounds hold with $o_p(1)$ terms. Since finite sums of
$o_p(1)$ terms are $o_p(1)$, the conclusion of the theorem transfers to convergence in probability.

\begin{remark}[Non-Gaussian factor models]
If joint Gaussianity is not assumed, the projection residual $r=H\delta-Xb_0$ need not satisfy
$\mathbb E[r\mid X]=0$ exactly. Lemma~\ref{lem:A1_factor_app} still gives
$\|r\|_2/\sqrt n=o_p(1)$, so the same conclusion holds for the approximate reduced model after
adding this $o_p(1)$ contribution to the normalized bias.
\end{remark}

\subsection{Proof of Lemma~\ref{lem:A1_factor_app}}
\begin{proof}
This proof follows closely the one presented in \citet{10.1145/3711116}.

For one observation,
\[
r_i=h_i^\top\delta-x_i^\top b_0.
\]
Since $b_0$ is the population linear projection coefficient of $h_i^\top\delta$ onto $x_i$,
\[
\mathbb E[r_i^2]
=
\operatorname{Var}(h_i^\top\delta)
-
\operatorname{Cov}(h_i^\top\delta,x_i)
\Sigma_X^{-1}
\operatorname{Cov}(x_i,h_i^\top\delta).
\]
Using $\operatorname{Cov}(h_i)=I_q$ and $x_i=\Gamma^\top h_i+e_i$,
\[
\operatorname{Var}(h_i^\top\delta)=\|\delta\|_2^2,
\qquad
\operatorname{Cov}(x_i,h_i^\top\delta)=\Gamma^\top\delta.
\]
Therefore
\[
\mathbb E[r_i^2]
=
\delta^\top
\left[
I_q-\Gamma(\Gamma^\top\Gamma+\Sigma_E)^{-1}\Gamma^\top
\right]\delta.
\]
By the Woodbury identity,
\[
I_q-\Gamma(\Gamma^\top\Gamma+\Sigma_E)^{-1}\Gamma^\top
=
\left(I_q+\Gamma\Sigma_E^{-1}\Gamma^\top\right)^{-1}.
\]
Thus
\[
\mathbb E[r_i^2]
=
\delta^\top
\left(I_q+\Gamma\Sigma_E^{-1}\Gamma^\top\right)^{-1}
\delta.
\]
Since
\[
\lambda_{\min}(\Gamma\Sigma_E^{-1}\Gamma^\top)
\ge
\lambda_{\min}(\Sigma_E^{-1})d_q(\Gamma)^2
\gtrsim p,
\]
we get
\[
\mathbb E[r_i^2]\lesssim \frac{\|\delta\|_2^2}{p}.
\]
Hence
\[
\mathbb E\left[\frac1n\|r\|_2^2\right]
=
\mathbb E[r_1^2]
\lesssim
\frac{\|\delta\|_2^2}{p}.
\]
Because $\|\delta\|_2=O(1)$, Markov's inequality gives
\[
\frac1n\|r\|_2^2=o_p(1),
\qquad
\frac{\|r\|_2}{\sqrt n}=o_p(1).
\]
Next,
\[
\frac{\|Xb_0\|_2}{\sqrt n}
\le
\frac{\|H\delta\|_2}{\sqrt n}
+
\frac{\|r\|_2}{\sqrt n}.
\]
Moreover,
\[
\frac1n\|H\delta\|_2^2
=
\delta^\top\left(\frac1nH^\top H\right)\delta.
\]
Since $H^\top H/n\to I_q$ in probability and $\|\delta\|_2=O(1)$,
\[
\frac{\|H\delta\|_2}{\sqrt n}=O_p(1).
\]
Therefore
\[
\frac{\|Xb_0\|_2}{\sqrt n}=O_p(1).
\]

If $(H,E)$ are jointly Gaussian, then $r$ is uncorrelated with $X$ by construction of the population
projection. Gaussianity implies independence, and hence $\mathbb E[r\mid X]=0$.
\end{proof}

\subsection{Proof of Lemma~\ref{lem:A2_factor_app}}
\begin{proof}
By Lemma~\ref{lem:A1_factor_app},
\[
Xb_0=H\delta-r,
\qquad
\frac{\|r\|_2}{\sqrt n}=o_p(1).
\]
Therefore
\[
\frac{\|(I-\hat P_H)Xb_0\|_2}{\sqrt n}
\le
\frac{\|(I-\hat P_H)H\delta\|_2}{\sqrt n}
+
o_p(1).
\]
Since $H\delta\in\operatorname{col}(H)$,
\[
P_HH\delta=H\delta.
\]
Thus
\[
(I-\hat P_H)H\delta
=
(P_H-\hat P_H)H\delta.
\]
Therefore
\[
\frac{\|(I-\hat P_H)H\delta\|_2}{\sqrt n}
\le
\|\hat P_H-P_H\|_{\mathrm{op}}
\frac{\|H\delta\|_2}{\sqrt n}.
\]
By assumption,
\[
\|\hat P_H-P_H\|_{\mathrm{op}}=o_p(1).
\]
By the proof of Lemma~\ref{lem:A1_factor_app},
\[
\frac{\|H\delta\|_2}{\sqrt n}=O_p(1).
\]
Hence
\[
\frac{\|(I-\hat P_H)Xb_0\|_2}{\sqrt n}=o_p(1).
\]
The second statement,
\[
\frac{\|\hat P_HX\beta_0\|_2}{\sqrt n}=o_p(1),
\]
is exactly the assumed signal-separation condition.
\end{proof}

\subsection{Proof of Lemma~\ref{lem:A3_factor_app}}
\begin{proof}
We first lower-bound the top singular values of $X$. Weyl's inequality for singular values gives
\[
d_q(X)
=
d_q(H\Gamma+E)
\ge
d_q(H\Gamma)-d_1(E).
\]
Moreover,
\[
d_q(H\Gamma)\ge d_q(H)d_q(\Gamma).
\]
Since the rows of $H$ are iid sub-Gaussian with covariance $I_q$ and $q$ is fixed,
\[
d_q(H)\asymp_p \sqrt n.
\]
By assumption,
\[
d_q(\Gamma)\gtrsim \sqrt p.
\]
Finally, by standard sub-Gaussian operator-norm bounds,
\[
d_1(E)=O_p(\sqrt n+\sqrt p)
\]
\citep[Chapter~5]{vershynin2020high}. Since $\min(n,p)\to\infty$,
\[
\sqrt{np}\gg \sqrt n+\sqrt p.
\]
Therefore
\[
d_i(X)^2\gtrsim_p np,
\qquad i=1,\dots,q.
\]
We now control the first non-spiked singular value. Again by Weyl's inequality,
\[
d_{q+1}(X)
=
d_{q+1}(H\Gamma+E)
\le
d_{q+1}(H\Gamma)+d_1(E).
\]
Since $\operatorname{rank}(H\Gamma)\le q$,
\[
d_{q+1}(H\Gamma)=0.
\]
Hence
\[
d_{q+1}(X)\le d_1(E)=O_p(\sqrt n+\sqrt p).
\]
Thus
\[
d_{q+1}(X)^2
=
O_p\bigl((\sqrt n+\sqrt p)^2\bigr).
\]
With $
n\lambda_2\asymp(\sqrt n+\sqrt p)^2,
$
the top weights satisfy
\[
w_H
=
\max_{i\le q}
\frac{n\lambda_2}{n\lambda_2+d_i(X)^2}
=
O_p\left(\frac{(\sqrt n+\sqrt p)^2}{np}\right).
\]
Since
\[
\frac{(\sqrt n+\sqrt p)^2}{np}
\lesssim
\frac{1}{\min(n,p)},
\]
we get
\[
w_H=o_p(1).
\]
For $i>q$, the bulk bound gives, with probability tending to one,
\[
d_i(X)^2\le C(\sqrt n+\sqrt p)^2.
\]
Therefore, with probability tending to one,
\[
w_i
\ge
\frac{c(\sqrt n+\sqrt p)^2}
{c(\sqrt n+\sqrt p)^2+C(\sqrt n+\sqrt p)^2}
\ge c_0>0.
\]
Taking the minimum over $i>q$ gives
\[
\mathbb P(w_M\ge c_0)\to1.
\]
\end{proof}
\vspace{-1.35cm}
\section{Additional Simulation Results}
\subsection{Classification}\label{class_appd}
\begin{figure}[h]
    \centering
    \includegraphics[width=0.9\linewidth]{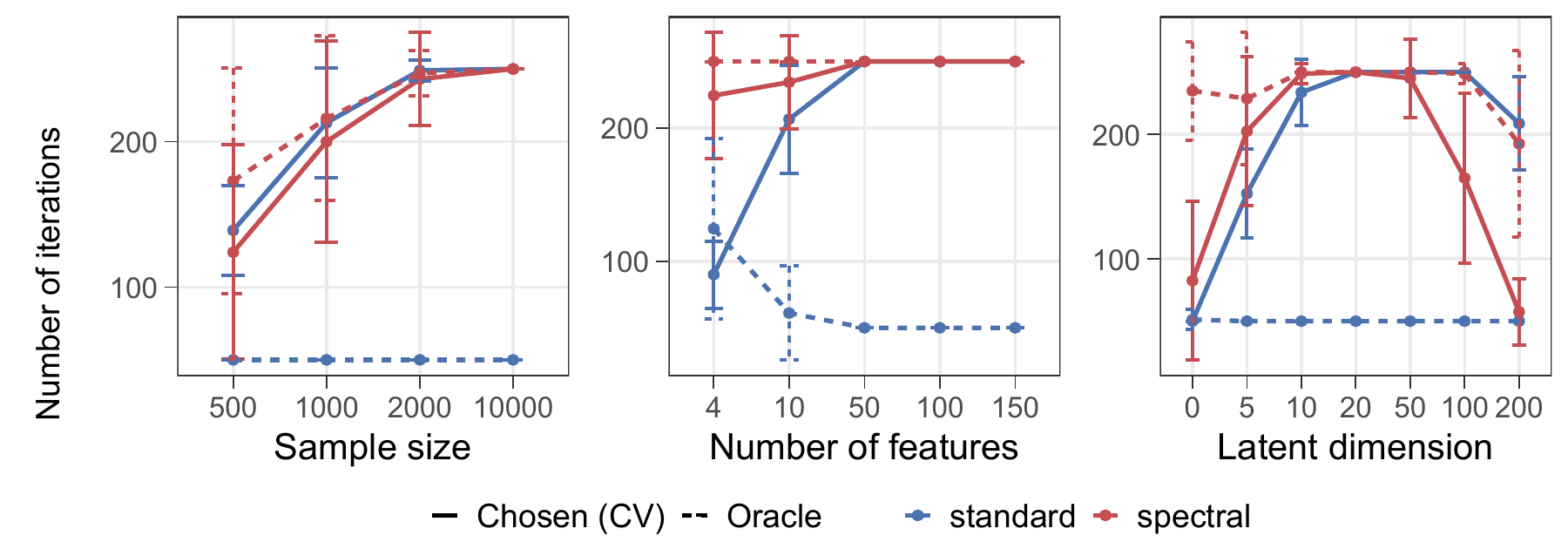}
   \caption{Classification setting.
     Number of boosting iterations selected by the proposed cross-validation rule
    (solid lines) and by the oracle rule with access to the true target function \(f_0\) (dashed lines).
    }
    \label{fig:iter_grid_2}
\end{figure}

\begin{figure}[h]
    \centering
    \includegraphics[width=0.9\linewidth]{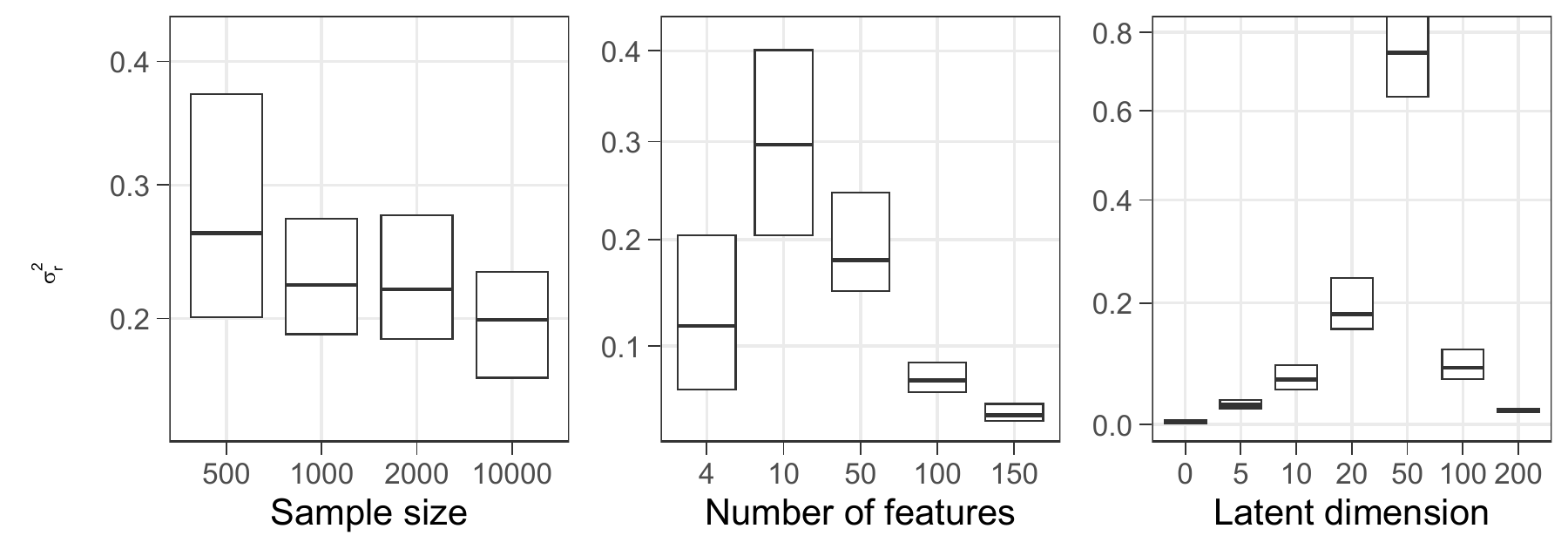}
    \caption{Classification setting.
    Boxplots of the estimated random-effect variance \(\sigma_r^2\),  displayed on a \(\log(1+x)\) scale on the y-axis. 
    }
    \label{fig:pars_grid_2}
\end{figure}

\begin{figure}[h]
    \centering
    \includegraphics[width=0.9\linewidth]{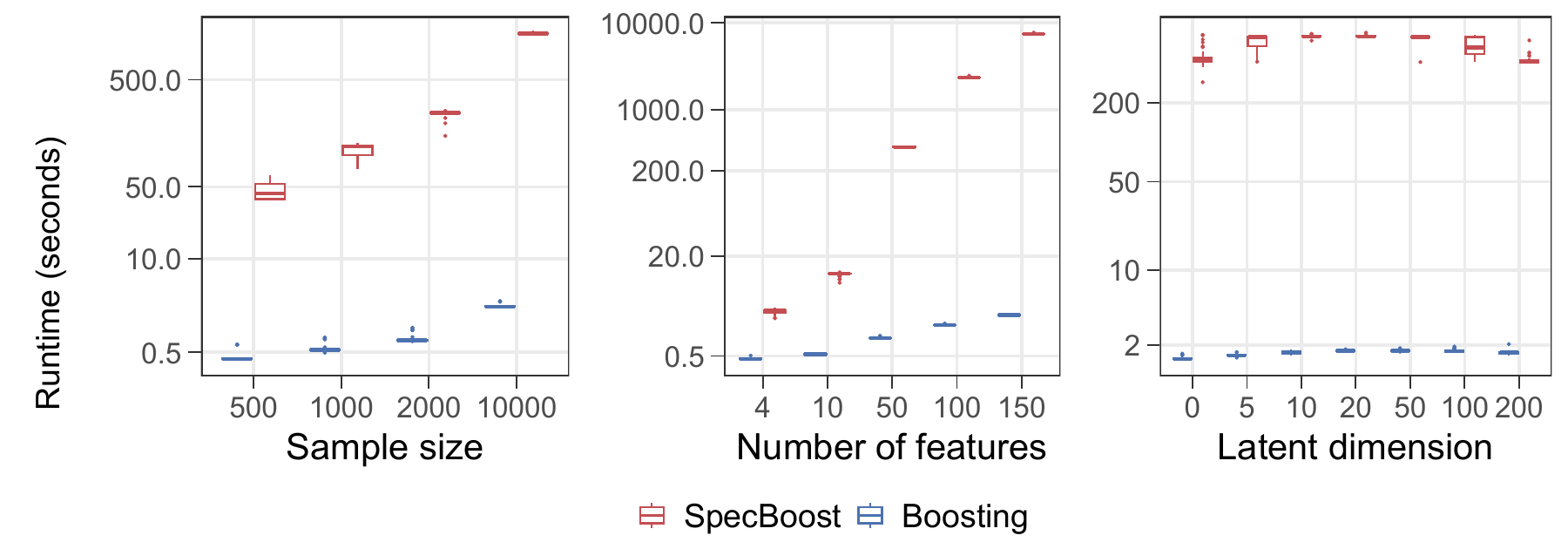}
    \caption{
    Classification setting. Boxplots of the runtime for the different methods across simulation scenarios,  displayed on a \(\log(1+x)\) scale on the y-axis. 
    }
\end{figure}

\subsection{Regression - No confounding scenario}
\label{app:no_confounding_regression}
In this experiment the confounder's contribution to the outcome is set to zero, i.e. $\delta = 0$.
\begin{figure}[h!]
    \centering
    \includegraphics[width=.9\linewidth]{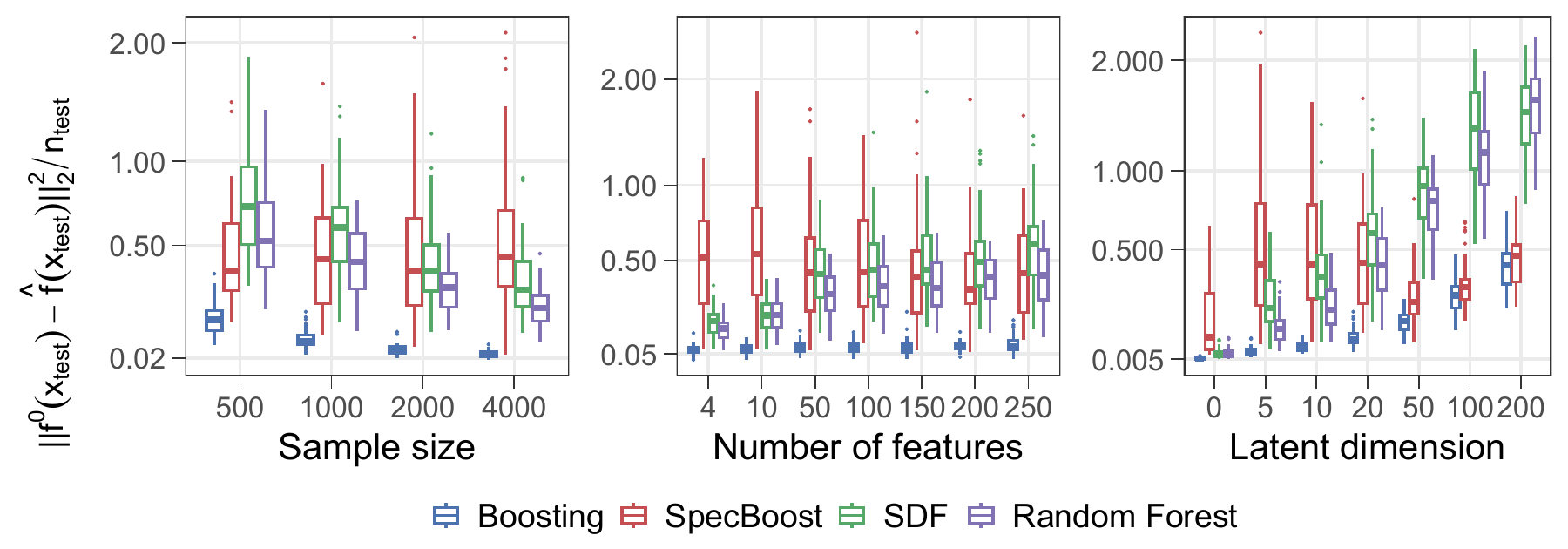}
    \caption{
   Boxplots of the mean squared error for the different methods across simulation scenarios, displayed on a \(\log(1+x)\) scale on the y-axis. 
    }
\end{figure}

\begin{figure}[h!]
    \centering
    \includegraphics[width=0.9\linewidth]{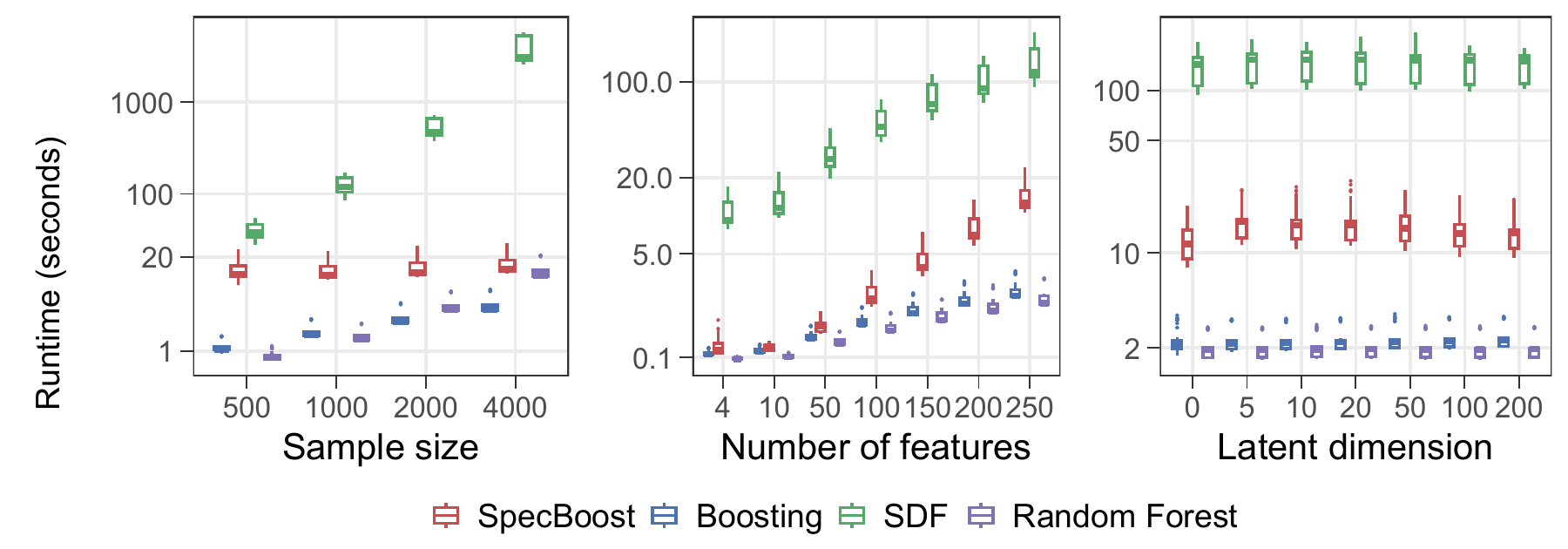}
    \caption{
    Boxplots of the runtime for the different methods across simulation scenarios,  displayed on a \(\log(1+x)\) scale on the y-axis. 
    }
\end{figure}

\begin{figure}[h!]
    \centering
    \includegraphics[width=0.9\linewidth]{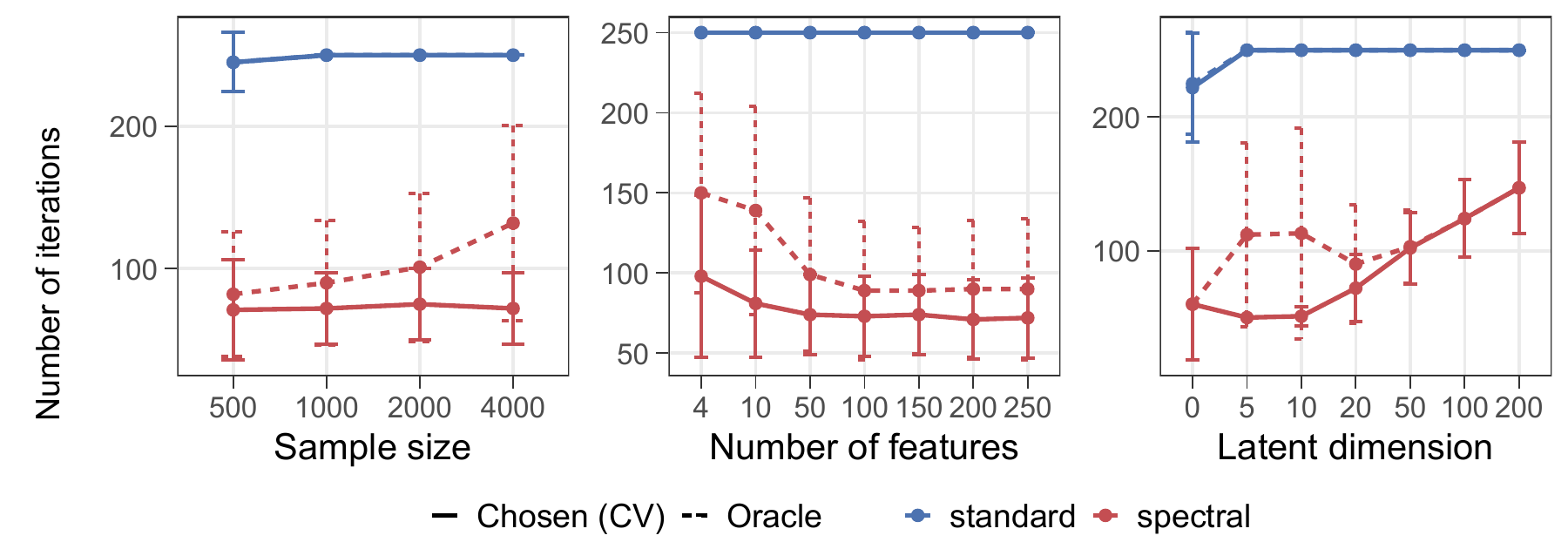}
   \caption{
     Number of boosting iterations selected by the proposed cross-validation rule
    (solid lines) and by the oracle rule with access to the true target function \(f_0\) (dashed lines).
    }
\end{figure}

\begin{figure}[H]
    \centering
    \includegraphics[width=0.9\linewidth]{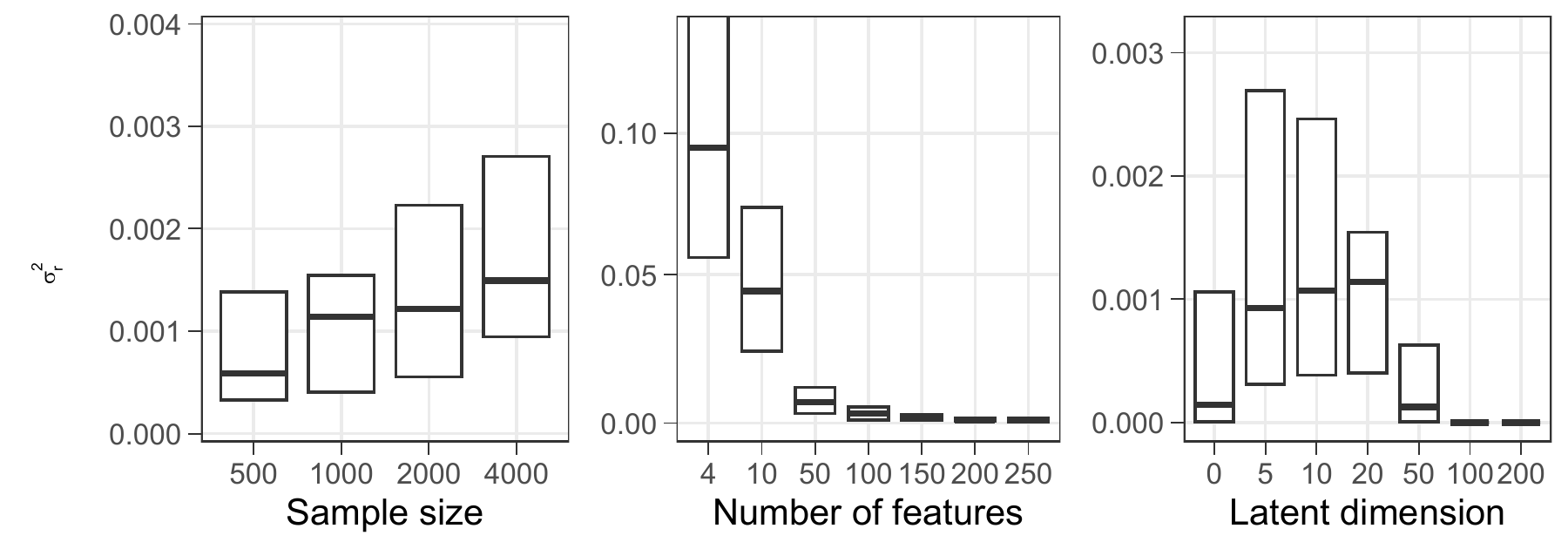}
   \caption{
    Boxplots of the estimated random-effect variance \(\sigma_r^2\),  displayed on a \(\log(1+x)\) scale on the y-axis. Larger values correspond to weaker shrinkage of the random effect.
    }
\end{figure}

\subsection{Classification - No confounding scenario}
In this experiment the confounder's contribution to the outcome is set to zero, i.e. $\delta = 0$.
\begin{figure}[h!]
    \centering
    \includegraphics[width=.9\linewidth]{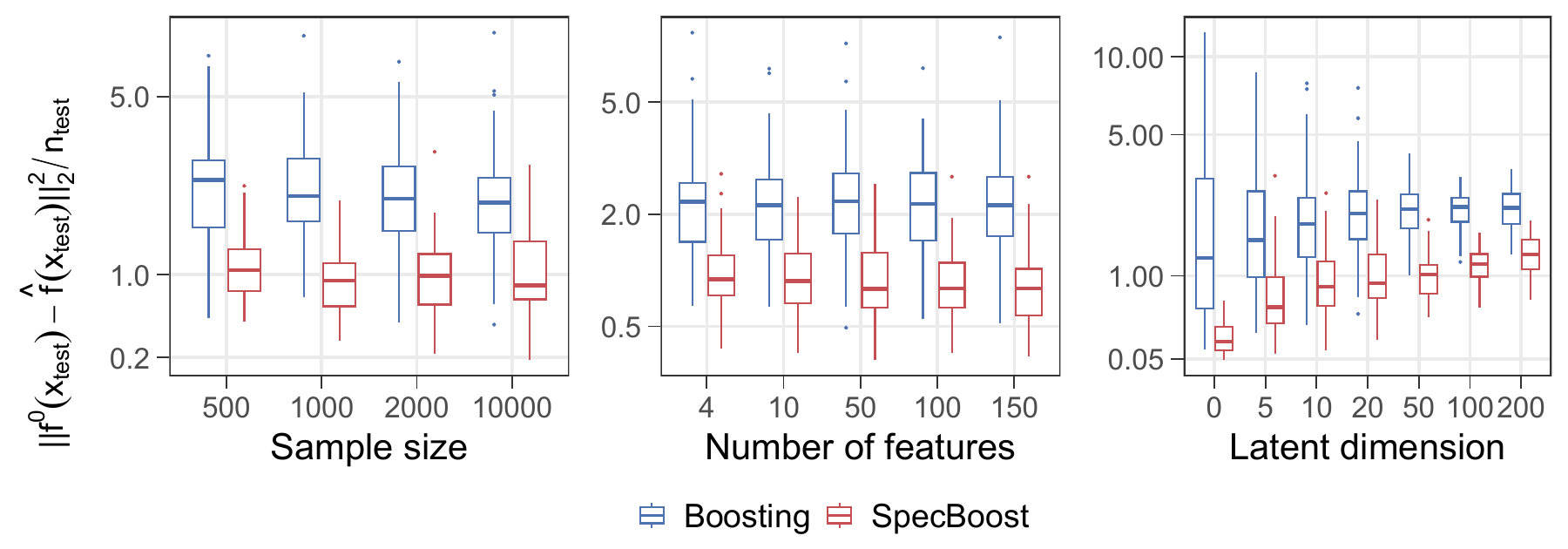}
    \caption{
    Boxplots of the mean squared error for the different methods across simulation scenarios, displayed on a \(\log(1+x)\) scale on the y-axis. 
    }
\end{figure}

\begin{figure}[h]
    \centering
    \includegraphics[width=0.9\linewidth]{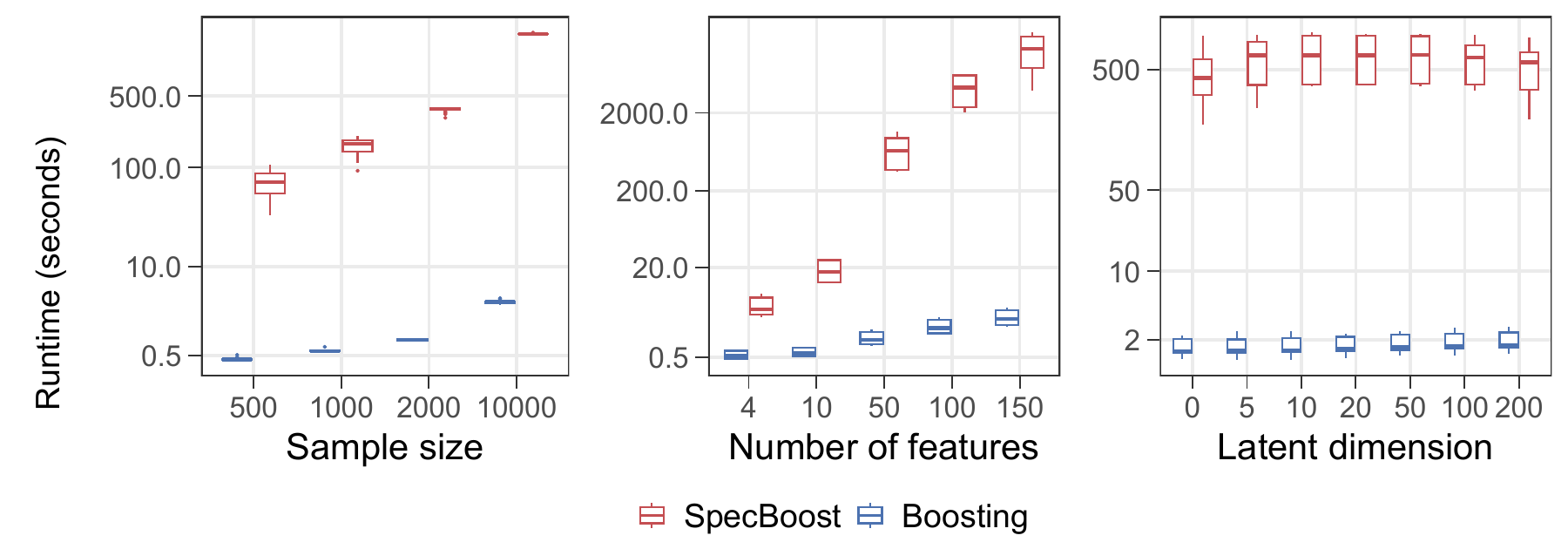}
    \caption{
    Boxplots of the runtime for the different methods across simulation scenarios,  displayed on a \(\log(1+x)\) scale on the y-axis. 
    }
\end{figure}

\begin{figure}[h]
    \centering
    \includegraphics[width=0.9\linewidth]{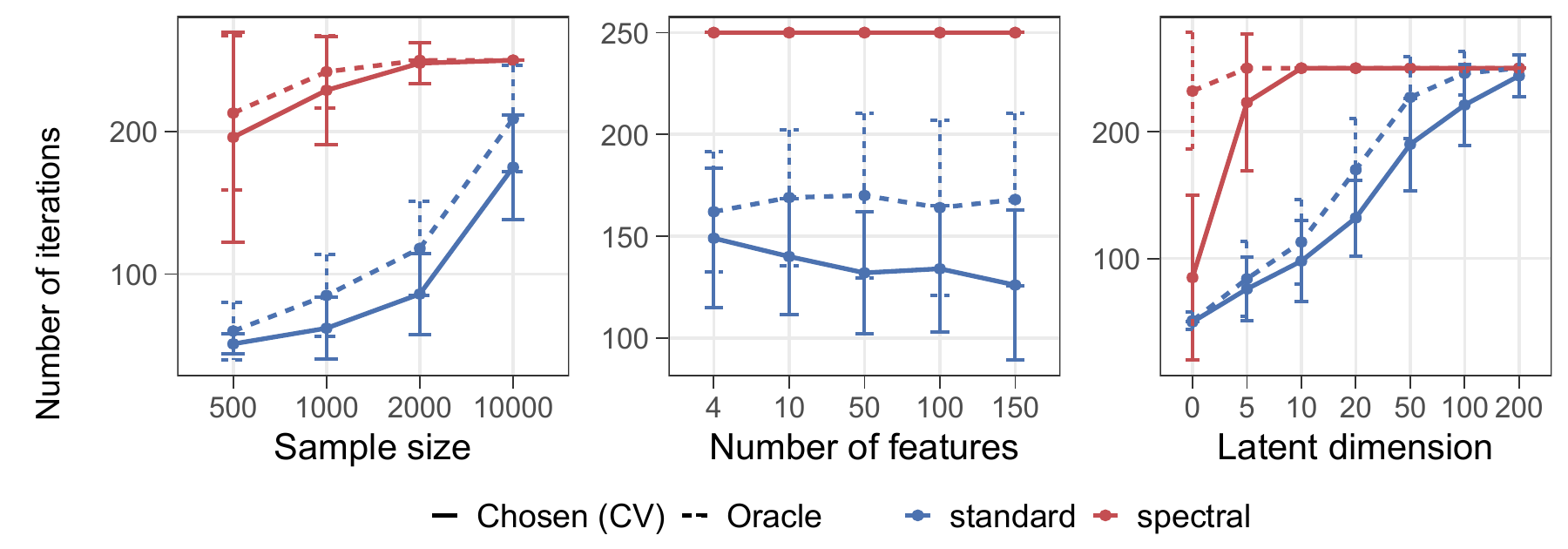}
   \caption{
     Number of boosting iterations selected by the proposed cross-validation rule
    (solid lines) and by the oracle rule with access to the true target function \(f_0\) (dashed lines).
    }
\end{figure}

\begin{figure}[H]
    \centering
    \includegraphics[width=0.9\linewidth]{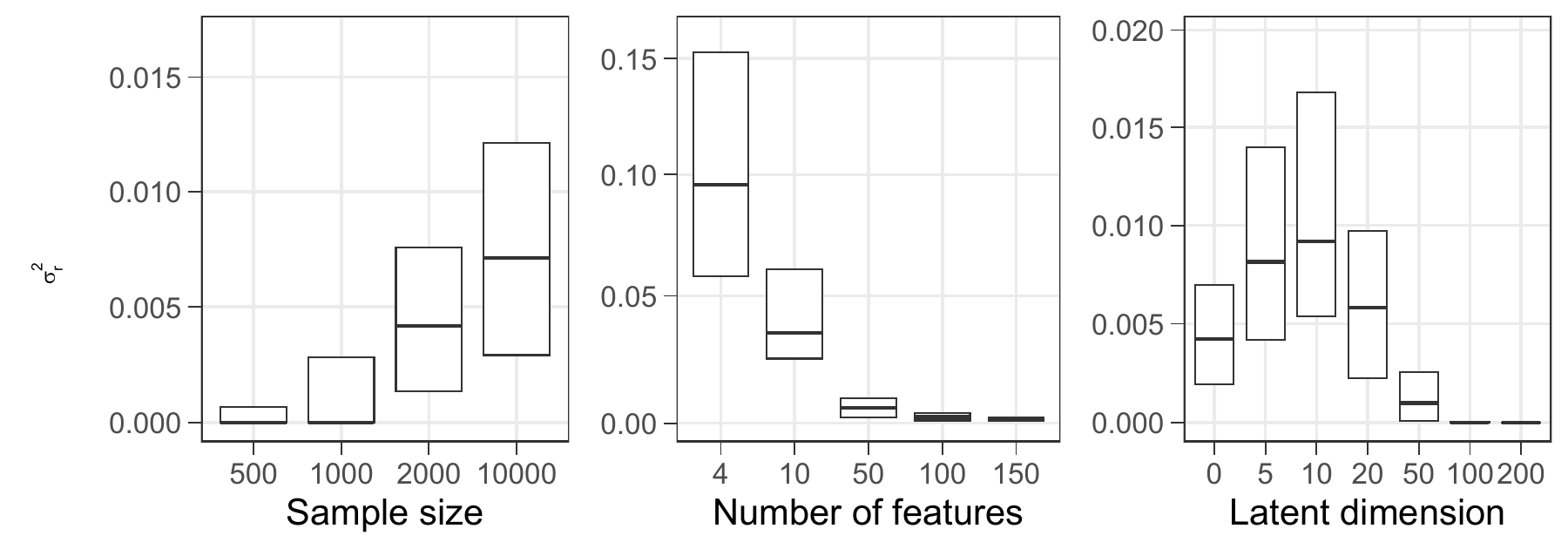}
   \caption{
    Boxplots of the estimated random-effect variance \(\sigma_r^2\),  displayed on a \(\log(1+x)\) scale on the y-axis. Larger values correspond to weaker shrinkage of the random effect.
    }
\end{figure}

\end{document}